\documentclass[11pt]{article}
\usepackage{amsmath,amsthm, amssymb}
\usepackage{enumerate,color}
\usepackage{graphicx}
\usepackage{caption}
\usepackage{bbm}

\usepackage[top=3cm, bottom=3cm, left=2.5cm, right=2.5cm]{geometry}

\usepackage{dsfont}

\allowdisplaybreaks

\newcommand{\norm}[1]{\left\lVert#1\right\rVert}

\newcommand{\set}[1]{\{#1\}}

\newcommand{\X}{\mathcal{X}}
\newcommand{\R}{\mathbb{R}}
\newcommand{\N}{\mathbb{N}}
\newcommand{\Prob}{\mathds{P}}
\newcommand{\E}{\mathbb{E}}
\newcommand{\Er}{\mathcal{E}}

\newcommand{\T}{\mathcal{T}}
\newcommand{\MT}{\mathbb{T}}
\newcommand{\M}{\mathcal{M}}
\newcommand{\Lm}{\Lambda}

\newcommand{\eps}{\epsilon}

\providecommand{\diam}[1]{\operatorname{diam}_{\rho}(#1)} 
\providecommand{\wh}[1]{\widehat{#1}}
\newcommand{\PP}[1]{\Prob\left[ #1\right] } 
\newcommand{\EE}[1]{\mathbb E\left[ #1\right] } 
\newcommand{\abs}[1]{\left| #1\right| } 

\newtheorem{theorem}{Theorem}
\newtheorem{lemma}[theorem]{Lemma}
\newtheorem{proposition}[theorem]{Proposition}
\newtheorem{definition}[theorem]{Definition}
\newtheorem{assumption}[theorem]{Assumption}
\newtheorem{remark}[theorem]{Remark}
\newtheorem{corollary}[theorem]{Corollary}

\setlength{\parindent}{0pt}
\setlength{\parskip}{0.5\baselineskip}


\title{Multi-Scale Vector Quantization with Reconstruction Trees}

\author{
 E.~Cecini$^{\top}$, E. De Vito$^{\top}$, L. Rosasco$^{\ddag,\star,\dagger}$ \\
$^\ddag$ DIBRIS, Universit\'a degli Studi di Genova\\
$^{\top}$Dipartimento di Matematica, Universit\`a di Genova\\
$^{\star}$  Istituto Italiano di Tecnologia \\
$^{\dagger}$Massachusetts Institute of Technology
}

\begin{document}
\maketitle

%

\begin{abstract}
We propose and study a multi-scale approach to vector quantization.  We develop an algorithm, dubbed reconstruction trees, inspired by decision trees.  Here the objective is   parsimonious reconstruction of unsupervised data, rather than classification. Contrasted to more standard vector quantization methods, such as $K$-means, the proposed approach leverages a family of given partitions, to quickly explore the data in a coarse to fine --multi-scale-- fashion. Our main technical contribution is an analysis of the expected distortion achieved by the proposed algorithm, when the data are assumed to be sampled from a fixed unknown distribution. In this context, we derive  both asymptotic and finite sample results under suitable regularity assumptions on the distribution. As a special case, we consider the setting  where the data generating distribution is supported on a compact Riemannian sub-manifold. Tools from differential geometry and concentration of measure are useful in our analysis. 
\end{abstract}




\section{Introduction}

Dealing with large high-dimensional data-sets is a hallmark of modern signal processing and machine learning. 
In this context, finding parsimonious representation from unlabeled data
 is often key to both reliable estimation and efficient computations, and more generally for exploratory data analysis. 
A classical approach  to this problem is principal component analysis (PCA), relying on the assumption that data are well represented by a linear subspace.
Starting from PCA a number of developments can be considered to relax the linearity assumption. For example kernel PCA is based on 
performing PCA after a suitable nonlinear embedding \cite{kpca}. Sparse dictionary learning tries to find a set of vectors 
on which the data can be written as sparse linear combinations \cite{olshausen}. Another line of works assumes the data to be sampled 
from a distribution supported on a manifold and includes isomap \cite{tenenbaum}, Hessian eigenmaps \cite{donoho},  Laplacian eigenmaps \cite{belkin2001laplacian} 
and related developments such as diffusion maps \cite{nadler2006diffusion}. A more recent and original perspective has been  proposed in \cite{lima16}, and called 
geometric multi-resolution analysis (GMRA). Here the idea is to borrow and generalize ideas from multi-resolution analysis and wavelet theory \cite{mallat},
to derive locally linear representation organized in a multi-scale fashion. The corresponding algorithm is based on a cascade of local PCAs and is reminding of classical decision trees for function approximation, see e.g. \cite{hastie}. In this paper we further explore the  ideas introduced to GMRA which is our main reference.

Indeed, we consider these ideas in the context of vector quantization, which is another classical, and extreme,  example  of parsimonious representation.
Here, a set of centers and corresponding partition is considered, and
then all data points in each cell of the partition are represented by the corresponding center. The most classical approach  in this context  is probably $k$-means, where  a set of centers (means) is defined by a  non-convex optimization   problem over all possible partitions. 
Our approach offers an alternative to $k$-means, by following the basic idea of GMRA and decision trees, but considering local means rather than local PCA. In this view, our approach can be seen as a zero-th order version of GMRA, hence providing a piece-wise constant data approximation. 
Compared to $k$-means, the search for a partition is performed through a coarse-to-fine  recursive procedure, rather than by global optimization. A strategy that we call reconstruction tree.  As a byproduct the corresponding vector quantization is multi-scale, and  naturally yields a multi-resolution representation of the data. Our main technical contribution is a theoretical analysis of the above multi-scale vector quantization procedure. We consider a statistical learning framework, where the data are assumed to be sampled according to some fixed unknown distribution and measure performance according to the so called expected distortion, measuring the reconstruction error with respect to the whole data distribution. Our main result is deriving corresponding finite sample bound in terms of natural geometric assumptions. 

The rest of the paper is organized as follows. After describing the basic ideas in the context of vector quantization in Section~\ref{sec:vq}, we 
present the algorithm we study in Section~\ref{sec:msvq}. In Section~\ref{sec:ass}, we introduce the basic theoretical assumptions needed in our analysis, and illustrate them considering the case where the data are sampled from a manifold. In Section~\ref{sec:main}, we present and discuss our main results, and detail the main  steps in their proofs. All other proofs are deferred to the Appendix.

\section{Vector quantization \& distortion}\label{sec:vq}
We next  introduce the problem of interest and comment on its connections with  related questions. 

A vector quantization  (VQ) procedure is defined by a set of code vectors/centers and an associated  partition of the data space. The idea is that compression can be achieved replacing all points in a given cell of the partition by the corresponding code vector.
More precisely,  assuming that the data space is $\R^D$, consider a
set of code vectors $c_1\in I_1, \dots, c_k\in I_k,$,
where the set of  cells $\Lambda=\{I_1, \dots, I_k\}$, defines a
partition of $\R^D$. A nonlinear projection $P_\Lambda:\R^D\to \R^D$ can be defined by 
\begin{align}\label{eq:quant}
P_\Lambda(x)=\sum_{j=1}^k c_j \mathbbm 1_{I_j}(x) \qquad x\in \R^D,
\end{align}
where $\mathbbm 1_I$ is the characteristic function of $I$, \textit{ i.e.}
    \[
\mathbbm 1_I(x) =
\begin{cases}
  1 & x\in I \\
  0 & x\notin I
\end{cases} .
      \]
Given a set of points $x_1, \dots, x_n$ the error (distortion) incurred by this  nonlinear projection can be defined as
\[
\wh  \Er[P_\Lambda]= \frac 1 n \sum_{i=1}^n \norm{x_i-P_\Lm(x_i)}^2,
\]
where $\norm{\cdot}$ is the Euclidean norm in $\R^D$. 
If we consider the data to be identical and independent samples of  a random variable $X$ in $\R^D$, then the  following error measure can also be considered
\begin{align}\label{eq:exdis}
 \Er[P_\Lambda]= 
\E[\norm{X-P_\Lm(X)}^2].
\end{align}
The above error measure is the expected  distortion associated to the quantization defined by $P_\Lambda$.

 In the following we are interested in deriving VQ schemes with small expected distortion given a dataset $x_1, \dots, x_n$ of  $n$ samples  of $X$. Before describing the algorithm we propose, we add two remarks.

\begin{remark}[Comparison to supervised learning]\label{rem:ULSL}~~\\
Classical supervised learning is concerned with the problem of
inferring a functional relationship $f$ given a set of input-output
pairs $(x_1, y_1), \dots, (x_n,y_n)$. A classical error measure is the
least squares loss $\norm{y-f(x)}^2$ (if the outputs are vectors valued).  A parallel between the above setting  and  supervised learning   can be seen, considering the case where the input and output spaces coincide and the  least squares loss would be $\norm{x-f(x)}^2$. Clearly, in this case an optimal solution is given by the identity map, unless further constraints are imposed.
Following  the above remark, we can view the nonlinear projection
$P_\Lambda$ as a piece-wise constant approximation of the identity
map, possibly  providing a parsimonious representation.
\end{remark}

\begin{remark}[Vector quantization as Dictionary Learning]
A dictionary is  a set of vectors (called atoms) $a_1, \dots, a_p$ that can be used to approximately decompose  each point $x$  in the data space, i.e.
$
x\approx \sum_{j=1}^p a_i \beta_j, 
$
with  $\beta=(\beta_1, \dots, \beta_p)\in \R^p$ being a coefficients  vector. Given  a set of points $x_1,\dots, x_n$, dictionary learning is the  problem of estimating a dictionary, as well as a set of coefficients vectors.Vector quantization can be viewed as a form of dictionary learning where the code vectors are the atoms, and coefficients vectors  are binary and have at most one non zero component \cite{olshausen}. 
\end{remark}

The above remarks provide different views on the problem under study. We next discuss how  ideas from decision trees in supervised learning can be borrowed to define a novel VQ approach.

\section{Multi-scale vector quantization via reconstruction trees}\label{sec:msvq}

We next describe our approach to Multi-Scale Vector Quantization (MSVQ), based on a recursive procedure that we call reconstruction trees, since it is inspired by 
decision trees for function approximation. The key ingredient in the proposed approach is a family of partitions  organized in a tree. The partition at the root of the tree has the largest cells, while partitions with cells of decreasing size are found in lower leafs. This \textit{ partition tree} provides a multi-scale description of the data space: the lower the leaves, the finer the scale. The idea is to use data to identify  a subset of cells, and corresponding partition, providing a  VQ solution~\eqref{eq:quant} with low expected distortion \eqref{eq:exdis}. We next describe this idea in detail.

\subsection{Partition trees and subtrees}

We begin introducing the definition of a partition tree. 
In the following  we denote by  $\mathcal X\subset\R^D$ the data space
endowed with its natural Borel $\sigma$-algebra $\mathcal B(\X)$ and
by $\sharp A$ the cardinality of a set $A$. 
\begin{definition}
  A partition tree $\mathbb T$ is a denumerable family
  $\{ \Lambda_j \}_{j\in\N}$ of partitions of $\X$ satisfying
  \begin{enumerate}[a)]
  \item $\Lambda_0=\{\X\}$ is the root of the tree;
  \item each family $\Lambda_j=\{ I \}_{I\in \Lambda_j}$ is a finite
    partition of $\X$ of Borel subsets, \textit{ i.e } 
    \[
      \begin{cases}
        \X =\bigcup_{I\in\Lm_j} I \\
        I\cap J =\emptyset\qquad\forall I,J\in \Lm_j, I\neq J \\
        \sharp \Lm_j<+\infty \\
        I \in \mathcal B(\X) \qquad I\in\Lambda_j
      \end{cases}
      ;\]
  \item for each $I\in \Lambda_j$, there exists a family
    $\mathcal{C}(I)\subseteq\Lm_{j+1}$ such that
    \begin{align}
      \begin{cases}
        I = \bigcup\limits_{J\in\mathcal{C}(I)}J \\[4pt]
        \sharp\mathcal{C}(I)\leq a
      \end{cases}\label{eq:22} ,
    \end{align}
    where $a\in (0,+\infty)$ is a constant depending only on $\MT$.
  \end{enumerate}
\end{definition}

Note that, we allow  the partition tree to have arbitrary, possibly  infinite, depth, needed to derive  asymptotic  results. 

Further, notice that, since $\#\Lambda_j\leq a^j$  the constant $a$
characterizes how the  cardinality of each partition  increases at
finer scales. The case $a=2$ corresponds to  dyadic trees. 

We add  some further definitions.  For any $j\in\N$, and $I\in \Lm_j$ the  depth of $I$ is $j$ and is denoted by $j_I$.
The cells in  $\mathcal{C}(I)\subset \Lm_{j+1} $ are  the  children of $I$, the
unique cell  $J\in\Lm_{j-1}$ such that $I\in\mathcal{C}(J)$ is the
parent of $I$ and is denoted by $\mathcal{P}(I)$ (by definition
$\mathcal P(\X)=\X$).   We regard $\mathbb T$ as  a set of nodes where each node is
defined by a cell $I$ with its parent $\mathcal{P}(I)$ and its
children $\mathcal{C}(I)$.  The following definition will be crucial.
\begin{definition}
A (proper) subtree of $\MT$ is a family $\mathcal T\subset\MT$ of
cells  such that $\mathcal P(I)\in\mathcal T$ for all $I\in \mathcal T$
and
\[
\Lm(\mathcal T)=\{ I\in\mathbb T : I\not\in \mathcal T, \mathcal P(I)\in\mathcal T\},
\]
denotes the set of outer leaves. 
\end{definition}
It is important in what follows that   $\Lm(\mathcal T)$ is a
partition of $\X$ if $\T$ is finite, see Lemma~ {\ref{lem:wba}}.

\subsection{Reconstruction trees} \label{subsec:rectrees}

We next discuss a data driven procedure to derive a suitable partition and a corresponding nonlinear projection.  To do  this end, we need  a few definitions depending on an available dataset $x_1, \dots, x_n$. 

For each cell $I$, we fix an arbitrary point $\hat{x}^*_I\in \X$ and
define the corresponding cardinality and center of mass, respectively,
as 
\begin{align}\label{defcents}
   n_I  =\sum_{i=1}^n \mathbbm 1_{I}(x_i), \quad \quad\quad\quad 
   \wh{c}_I  =\begin{cases}
     \dfrac{1}{n_I} \sum\limits_
     {i=1}^nx_i\mathbbm 1_{I}(x_i)& \text{if }x\in I\text{
       and } n_I\neq 0 \\
    \hat{x}_I^* & \text{if }x\in I \text{ and } n_I=0
    \end{cases}.
\end{align}
If $0\in\X$,  a typical choice is $\hat{x}_I^*=0$ for all cells
$I\in\MT$. While $\mathcal E(\hat{P}_n)$  depends on the choice of
  $\hat{x}_I^*$, our bounds hold true for all choices. We point out
  that it is more convenient to choose $\hat{x}_I^*\in I$, as this
  (arbitrary) choice produces an improvement of $\mathcal E(\hat{P}_n)$
  for free, in particular whenever $\mathbb E[X\in I]>0$ but $n_I=0$.

Using this quantity we can define  a local error measure
for each  cell $I$,
\[
\wh{\Er}_I   =\frac{1}{n} \sum_{x_i\in I}\norm{x_i -\wh{c}_I}^2  =  \frac{1}{n} \sum_{i=1}^n \norm{x_i -\wh{c}_I}^2 \mathbbm
1_{I}(x_i),
\]
as well as the potential error difference induced by  considering a refinement, 
\begin{align}\label{eq:littleeps}
\wh{\eps}^2_I   = \wh{\Er}_I  -\sum_{J\in\mathcal C(I)} \wh{\Er}_J =\frac{1}{n} 
\sum_{J\in\mathcal C(I)}  \norm{\wh{c}_J -\wh{c}_I}^2,
\end{align}
where the second equality is consequence of the between-within
decomposition of the variance. 
Following~\cite{binev2005universal}, we first truncate the
partition tree at a given depth, depending on the size of the data
set. More precisely,  given $\gamma>0$, we set
\begin{align}
  \label{eq:57}
  j_n = \left\lfloor\frac{\gamma \ln n}{\ln a} \right\rfloor
  \qquad \Longrightarrow \qquad a^{j_n}\leq n^\gamma.
\end{align}
Deeper trees are considered as data size grows.

As a second step, we fix a threshold $\eta>0$ and select the cells such that
$\hat{\eps}_I\geq\eta$. Since $\hat{\eps}_I$ is not an 
decreasing function with the depth of the tree, this requires some care --
see Remark~\ref{alternative} for an alternative construction.   
Indeed, we define the $\eta$-dependent subtree 
\begin{align}
  \label{eq:46bis}
  \wh{\T}_\eta=
  \begin{cases}
   \set{\X} & \text{ if } \wh{\eps}_I<\eta \quad \forall I\in
   \bigcup\limits_{j\leq j_n}\Lm_j \\[4pt]
    \{ I\in \mathbb T \mid \exists j\leq j_n,\, J\in \Lm_j,\, J\subset I,\,
    \wh{\eps}_J\geq \eta\} & \text{otherwise} 
  \end{cases} 
\end{align}
and $\wh{\Lm}_\eta$ is defined as  outerleaves
of $\wh{\T}_\eta$, \textit{ i.e.} $\wh{\Lm}_\eta=\Lm(\wh{\T}_\eta)$.
Note that $\wh{\T}_\eta$ is finite, so that
by Lemma~\ref{lem:wba} $\wh{\T}_\eta$ is a partition of $\X$ such that 
$j_I\leq (\gamma \ln n)/\ln a$ for all $I\in
\wh{\Lm}_\eta$. 

The code vectors are  the centers of mass of the cells  the above empirical partition, and 
the corresponding nonlinear projection is
\begin{align}\label{eq:15biss}
\wh{P}_\eta =
\sum_{I\in\wh{\Lm}_{\eta}} \hat{c}_I \mathbbm 1_{I}(x) \qquad \wh{\Lm}_\eta=\Lm(\wh{\T}_\eta).
\end{align}
We add a a few comments, the above vector quantization procedure, that we call reconstruction tree,  is recursive and depends on the threshold $\eta$. Different quantizations and corresponding distortions are achieved by different choices of $\eta$. Smaller values of $\eta$ correspond to vector quantization quantizations with smaller distortion. It is a clear that the empirical distortion becomes zero for a suitably small $\eta$ corresponding to having a single point in each cell. Understanding the behaviour of the expected distortion as function of $\eta$ and the number of points is our main theoretical contribution. Before discussing these results we discuss the connection of the above approach to related ideas.
A similar construction is given in \cite{lima16}, where  however  the
thresholding criterion $\eta$ depends on the scale, see Section~2.3 of
the cited reference.

\subsection{Comparison with related Topics}\label{sec:comp}
~\\
The above approach can be compared with a number of different ideas. 
\paragraph{ Decision and Regression Trees}
We call the above procedure Reconstruction Tree, since its definition
is formally analogous to that of decision trees for supervised
learning; see for example \cite{hastie2013introduction}, Chapter 8. In
particular our construction and analysis follows closely that of tree
based estimators studied in \cite{binev2005universal}, in the context
of least square regression.  As commented in Remark~\ref{rem:ULSL} our
problem can actually be interpreted as a special instance of vector
valued regression
where  the regression function $f_\rho$ is the identity function from
the input space to itself (regarded as the output space): indeed, referring to the notation of
\cite{binev2005universal}, the conditional distribution of the output $y$
given $x$ is the Delta measure at $x$. 
From this point on the two formalisms overlap, in that $\norm{f-f_\rho}_{L_2}^2$ becomes $\E[\norm{X-f(X)}^2]$. Since the tree based estimators $f$ considered in \cite{binev2005universal} are piece-wise constant, the expected square loss cannot vanish and its analysis is non trivial.
Despite the formal similarity, the two settings do exhibit distinct features. For example, the analysis in \cite{binev2005universal} is specifically formulated for scalar functions, while our analysis is necessarily vectorial in nature. In \cite{binev2005universal} a uniform bound $|y|<M$ is imposed, while in our setting we can assume a local bound for free; namely, if $f$ is constant on a cell $I\subset\X$ then $\norm{x-f(x)}^2\leq {\operatorname{diam}(I)}^2$ for all $x\in I$. The present setting finds a natural instance in the case of a probability measure supported on a smooth manifold isometrically embedded in $\X$ (see Section~\ref{sec:ass}), while this case is hardly addressed explicitly in the literature about least square regression.  For example, the manifold case is actually discussed, in the context of classification through Decision Trees, in \cite{scott2006minimax}. One last point is that the present work contains explicit quantitative results about the approximation error, see Section~\ref{discapprx}, while similar results are not available in the setting of \cite{binev2005universal}, that aspect being typically addressed indirectly in the corresponding literature.

\paragraph{ Empirical risk minimization}
Again in analogy to supervised learning, as in \cite{binev2005universal}, one can consider the minimization problem: 
\begin{alignat*}{1}
\min_{F\in \mathcal H} \wh  \Er[F],  \end{alignat*}
where $\mathcal H$ is the (finite-dimensional) vector space of the
vector fields $F:\X\to\R^D$, which are piecewise constant on
a given partition $\Lm$.  There correspond a number of independent
minimization problems, one for each cell in $\Lm$, so that $\wh
P_\eta$ from \eqref{eq:15biss} is easily shown to be a minimizer. The
minimizer is not unique, since the value of $F$ is irrelevant on cells
$I\in\Lm$ such that $n_I=0$. Similar considerations hold for
$\min_{F\in \mathcal H}   \Er[F]$ as well, in which case the value of
$F$ is irrelevant whenever $\E[X\in I]=0$. See also
Section~\ref{subsec:rectrees}, Section~\ref{discapprx} and
Lemma~\ref{rem:choice}. 

One could consider minimization over a wider class of functions,
piece-wise constant on different partitions, for example on all the
partitions with a given number of cells that are induced by proper
subtrees $\T$ of a given partition tree $\MT$. This would be a
combinatorial optimization problem. The algorithm defined
in~\eqref{eq:46bis} overcomes this issue by providing a one-parameter
coarse-to-fine class of partitions, such that each refinement carries
local improvements $\wh \epsilon$ that are uniformly bounded. As
observed in \cite{binev2005universal}, such a strategy is inspired by
wavelet thresholding. 

\paragraph{ Geometric multi-resolution analysis (GMRA)}
A main motivation for our work is the algorithm GMRA
\cite{allard2012multi,maggioni2014dictionary,lima16}, which introduces
the idea of learning multi-scale dictionaries by geometric
approximation. The main difference between GMRA and Regression Trees
is that the former represents data through a piece-wise linear
approximation, while the latter through a piece-wise constant
approximation. More precisely, rather than considering the center of
mass of the data in each cells~\eqref{defcents}, a linear
approximation is obtained by (local) Principal Component Analysis, so
that the data belonging to a cell are sent to a linear subspace of
suitable dimension, the latter approach being particularly natural in
the case of data supported on a manifold. Another difference is in the
thresholding strategy: unlike \cite{binev2005universal} and our work,
in \cite{lima16} the local improvement $\wh \epsilon$ is scaled
depending on its depth in the tree. One of our purposes is to check
if these \textit{simpler} choices affect the learning rates
significantly. We provide more quantitative comparisons later in
Section~\ref{sec:main}. 

\paragraph{ Wavelets} A main motivation for GMRA is extending ideas
from wavelets and multi-resolution analysis to the  context of machine
learning, where, given the potential  high dimensionality, non-regular
partitions need be considered; this point is discussed in
\cite{allard2012multi,maggioni2014dictionary,lima16} and references
therein. Indeed partition trees generalize the classic notion of
dyadic partitions. In this view, given the piece-wise constant nature
of reconstruction trees, a parallel can be drawn between the latter
and classical Haar wavelets.

\paragraph{ $k$-Means} Our procedure being substantially a vector quantization algorithm, a comparison with the most
common approach to vector quantization, namely $k$-means, is in order. In $k$-means, a set of $k$~code vectors $c_1, \dots, c_k$ are  derived  from  the data and used to define corresponding partitions via the corresponding  Voronoi diagram
\[
V_j= \{x\in \R^D ~|~ \norm{x-c_j}\le \norm{x-c_i}, \forall i=1,\dots, k,~ i\neq j\}.
\]
Code vectors are defined by the minimization of the following empirical objective
\[
\min_{c_1, \dots, c_k}\frac 1 n \sum_{i=1}^n \min_{j=1, \dots, k} \norm{x_i-c_j}^2.
\]
This minimization problem is non convex and is typically solved by
alternating minimization, a procedure referred to as Lloyd's algorithm
\cite{lloyd1982least}. The inner iteration assigns each point to a
center, hence a corresponding Voronoi cell. The output minimization
can be  easily shown to update the code vectors by computing the
center of mass, \textit{ the mean}, of each Voronoi cell. In general
the algorithm is ensures to decrease or at least not increase the
objective function and to converge in finite time to a local
minimum. Clearly, the initialization is important, and initializations
exist yielding some stronger convergence guarantees. In particular,
$k$-means++ is a random initialization providing on average an
$\epsilon$-approximation to the global minimum \cite{arthur2007k}. \\ 
Compared to $k$-means, reconstruction trees restrict the search for a
partition over a prescribed family defined by the partition tree. In
turns, they allow a fast multi-scale exploration of the data, while
$k$-means requires solving a new optimization problem each time $k$ is
changed. Indeed it can be shown that a solution for the $(k-1)$-means
problem leads to a bad initialization for the $k$-means problem. In
other words, unlike restriction trees, the partitions found by
$k$-means at different scales (different values of $k$)  are generally
unrelated, and cannot be seen as refinements of one another.

\paragraph{ Hierarchical clustering}  Lastly, our coarse-to-fine
approach can be compared with hierarchical clustering, in particular
[with the so called Ward's method, which proceeds in the opposite
way. Indeed, this algorithm produces a coarser partition of the data
starting from a finer. It starts with a Voronoi partition having all
the data as centers, and at each step it merges a couple of cells that
have the smallest so called between cluster inertia
\cite{ward1963hierarchical}. Interestingly this definition has an
analogue in our algorithm. Our $\wh{\Er}_I$ corresponds to the within
cluster inertia of a cell $I$ while $\wh{\eps}^2_I$ to the between
cluster inertia (up to a factor $1/n$) of cells that merge into
$I$. Nevertheless the obtained partitions will not in general
coincide, unless very specific choices are made ad hoc.

\section{General assumptions and manifold  setting}\label{sec:ass}

In this section, we introduce our main  assumptions and then
discuss a motivating example   where  data are sampled at random from a manifold.

We consider a statistical learning framework, in the sense that we assume the data to be  random samples from an underlying probability measure.
More precisely, we assume the available data to be a realization of  
$n$ identical and independent  random vectors $X_1,\ldots,X_n$ taking
values in a bounded subset $\X\subset\R^D$  and we denote by $\rho$
the common law.  Up to a rescaling  and a translation, we assume that
$0\in\X$ and  
\begin{align}
\operatorname{diam}(\X)=\sup_{x,y\in\X} \norm{x-y} \leq
1.\label{eq:11b}
\end{align}

Our main assumption
relates the distribution underlying the data to the partition tree to
be used to derive a MSVQ via reconstruction trees.   To state it, we recall 
the notion of essential diamater of a cell  $I$, namely
\begin{alignat*}{1}
  \diam{I}  & =  \inf_{{J\subset I}\atop{\rho(J)=0}} \operatorname{diam}(I\setminus J).
\end{alignat*}

\begin{assumption}\label{apriori}
There exists  $s>0$ and $b>1$ such that  for all $I\in\mathbb T$
\begin{subequations}
  \begin{alignat}{1}
      \diam{I}  & \leq C_1 {\rho(I)^s }  \label{eq:1}  \\
      \diam{I}   & \leq C_2 b^{-j_I} \label{eq:6}
  \end{alignat}
\end{subequations}
where $C_1>0$ and $C_2>0$ are fixed constants depending only on $\MT$. 
\end{assumption}
To simplify the notation, we write $c_{\MT}$ for a constant 
depending only on $s,b,C_1, C_2$ and we write $A\lesssim B$ if there
exists a constant $c_{\MT}>0$  such that $A\leq c_{\MT} B$.

Given the partition tree $\MT$, the parameters $s$ and $b$ define a class
$\mathcal P_{b,s}(\MT)$ 
of probability measures $\rho$ and for this class we are able to
provide a finite sample bound on the distortion error of our estimator
$\hat{P}_\eta$, see~\eqref{eq:50}. In the context of supervised machine
learning  $\mathcal P_{b,s}(\MT)$  is an a-priori class of
distributions defining a upper learning rate, see~\eqref{eq:56}. It is an open problem to
provide a lower min-max learning rate. 

Clearly,~\eqref{eq:6} is implied by the distribution-independent
assumption 
\begin{align}
    \label{eq:6tris}
    \operatorname{diam}(I)   \lesssim b^{-j_I} \qquad \text{ for all }  I\in \MT,
  \end{align}
\textit{ i.e.} the diameter of the cells goes to zero exponentially
with their depth. This assumption ensures that the reconstruction
error goes to zero and, in supervised learning, it  corresponds to the
assumption that the hypotheses space is rich enough to approximate any
regression function, compare with condition~(A4) in \cite{lima16}. 

Eq.~\eqref{eq:1} is a sort of regularity condition on the shape of the
cells and, if it holds true,~\eqref{eq:6}
is implied by the following condition 
\begin{align}
    \label{eq:6bis}
    \rho(I)  \lesssim c^{-j_I} \qquad \text{ for all }  I\in \MT,
  \end{align}
which states that the volume of the cells goes to zero exponentially
with their depth.

In \cite{lima16}, following ideas from \cite{binev2005universal}, it
is introduced a suitable model class, see Definition~5, in terms
of the decay of the approximation error, compare Eq.~(7) of
\cite{lima16} with~\eqref{eq:14} below. This important point is further discussed in Section~\ref{discapprx}.

In many cases the parameter $s$ is related to the intrinsic dimension of the
data. For example, if $\X=[0,1)^D$ is the unit cube and $\rho$ is
given by 
\[
  \rho(E) = \displaystyle{\int\limits_{E} p(x) dx }  \qquad E\in\mathcal B(\X) ,  \]
where $dx$ is the Lebesgue measure of $\R^D$ and the density $p$  is bounded from above
and away from zero, see~\eqref{eq:71b} below,  it is easily to check
that the family 
$\MT=\set{\Lambda_j}$ of dyadic cubes
\[
\Lambda_j =\set{[2^{-j}(k_1-1), 2^{-j}k_1)\times\ldots\times [2^{-j}(k_D-1), 2^{-j}k_D)  \mid k_1,\ldots,k_D=1,...,\ldots,2^j}\qquad  j\in\N
  \]
is  a partition tree satisfying Assumption~\ref{apriori} with $s=1/D$ and a
suitable $b>1$.  The construction of dyadic cubes can be extended to
more general  settings, see \cite{christ,gile17} and references
therein, by providing a large class of other
examples, as shown by the following result. The proof is deferred to
Section~\ref{mainproof}. 

\begin{proposition}\label{manifold}
Assume that the support $\M$ of $\rho$  is a connected
submanifold  of $\R^D$ and the  distribution $\rho$ is given by
\begin{subequations}
  \begin{alignat}{2}
    & \rho(E) = \displaystyle{\int\limits_{E\cap\M} p(x)
      d\rho_{\M}(x) \label{eq:71a} }&&\qquad
    E\in\mathcal B(\X) \\
    & 0<p_1 \leq p(x)\leq p_2<+\infty &&\qquad x\in\M ,\label{eq:71b}
  \end{alignat}
\end{subequations}
where  $\rho_\M$ is the Riemannian volume element of $\M$, then there exists a
partition tree $\MT$ of $\X$ satisfying~Assumption~\ref{apriori} with
$s=1/d$, where $d$ is the intrinsic dimension of  $\M$.
\end{proposition}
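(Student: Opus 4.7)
The plan is to use a dyadic-cube construction on the metric measure space $(\M,d_\M,\rho_\M)$, where $d_\M$ is the geodesic distance, and then transfer the resulting partition to $\X$. For this, I would invoke Christ's construction (or the more recent variant of Hytönen–Kairema cited through \cite{christ,gile17}), which in any geometrically doubling metric measure space produces a family of nested partitions $\{Q_\alpha^j\}_{j,\alpha}$ with the following properties: there exist constants $\delta\in(0,1)$, $c,C>0$ such that each $Q_\alpha^j$ is contained in a $d_\M$-ball of radius $C\delta^j$ and contains a $d_\M$-ball of radius $c\delta^j$, the cubes at level $j+1$ refine those at level $j$, and the number of children of each cube is bounded by a constant $a$ (by local doubling/compactness). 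Setting $b=\delta^{-1}$, this already gives a candidate tree with controlled branching and $\operatorname{diam}_{d_\M}(Q_\alpha^j)\lesssim b^{-j}$.

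Next I would use the two geometric inputs tying the construction to~Assumption~\ref{apriori}. First, since $\M$ is isometrically embedded in $\R^D$, we have $\|x-y\|\le d_\M(x,y)$, hence $\operatorname{diam}(Q_\alpha^j)\le \operatorname{diam}_{d_\M}(Q_\alpha^j)\lesssim b^{-j}$, which yields~\eqref{eq:6}. Second, since $\M$ is a connected submanifold and we may (after the standing normalization $\operatorname{diam}(\X)\le 1$) reduce to a compact piece containing the support of $\rho$, classical volume comparison on compact Riemannian manifolds gives uniform Ahlfors regularity at small scales: $\rho_\M(B_\M(x,r))\asymp r^d$ for $0<r\le r_0$, where $d$ is the intrinsic dimension. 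Combined with the density bound~\eqref{eq:71b}, this yields $\rho(B_\M(x,r))\asymp r^d$. Because each cube $Q_\alpha^j$ contains a $d_\M$-ball of radius $c\delta^j=cb^{-j}$, we obtain
\[
\rho(Q_\alpha^j)\gtrsim b^{-jd}\gtrsim \operatorname{diam}(Q_\alpha^j)^d,
\]
so $\operatorname{diam}(Q_\alpha^j)\lesssim \rho(Q_\alpha^j)^{1/d}$. Since $\diam{Q_\alpha^j}\le\operatorname{diam}(Q_\alpha^j)$, this gives~\eqref{eq:1} with $s=1/d$. At sufficiently coarse levels $j$ with $b^{-j}>r_0$, only finitely many cubes exist, so the bounds can be absorbed into the constants $C_1,C_2$.

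Finally, the partition obtained lives on $\M$, while Assumption~\ref{apriori} is stated for partitions of $\X$. I would extend each cell $Q_\alpha^j$ to a Borel subset $I_\alpha^j\subset\X$ in an arbitrary measurable way so that $\{I_\alpha^j\}_{\alpha}$ is a partition of $\X$ refining the previous level (e.g., by a Voronoi-type assignment of points in $\X\setminus\M$ to the nearest cube on $\M$). Since $\rho(\X\setminus\M)=0$, the essential diameter is unaffected: $\diam{I_\alpha^j}=\diam{Q_\alpha^j}$, and all the bounds established above carry over verbatim. The root is $\Lm_0=\{\X\}$, as required.

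The main obstacle is the interplay between the intrinsic metric $d_\M$ (in which Christ's construction is naturally formulated and in which Ahlfors regularity holds through the Riemannian volume comparison) and the extrinsic Euclidean metric appearing in Assumption~\ref{apriori}. The resolution is asymmetric: for the upper diameter bound we just use $\|\cdot\|\le d_\M$, while for the lower volume bound (needed to derive~\eqref{eq:1}) we genuinely need the intrinsic volume estimate and the inner-ball property of Christ's cubes. Some care is also needed at coarse scales, where the volume comparison $\rho_\M(B_\M(x,r))\asymp r^d$ fails once $r$ exceeds the injectivity radius, but compactness makes this into a finite, scale-independent issue that only affects constants.
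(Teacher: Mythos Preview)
Your proposal is correct and follows essentially the same approach as the paper: Christ's dyadic cubes on $(\M,d_\M)$, Ahlfors regularity of $\rho_\M$ combined with the density bound~\eqref{eq:71b}, and the inequality $\|\cdot\|\le d_\M$ to pass from intrinsic to Euclidean diameters. The only minor difference is in the extension to $\X$: the paper simply adjoins the single null cell $\X\setminus\M$ to every level of the tree rather than using a Voronoi-type assignment, which is cleaner since it trivially preserves nestedness and has $\diam{\X\setminus\M}=0$.
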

We recall that, as a submanifold of $\R^D$, $\M$ becomes a compact
Riemannian manifold with Riemannian distance $d_\M$ and Riemannian
volume element $\rho_{\M}$. We stress that the construction of the
dyadic cubes only depend on $d_\M$.  Proposition~\ref{manifold} has to
be compared with Proposition~3 and Lemma~6 in \cite{lima16}.

By inspecting the proof of the above result, it is possible to show
that a partition tree satisfying 
Assumptions~\ref{apriori} always exists if  there are a metric
$d$ and a Borel measure $\nu$ on  $\M$ such that $(\M,d,\nu)$ is an
Ahlfors regular metric  measure \cite[p. 413]{gromov}, clg$\rho$ has density $p$ with respect to $\nu$
satisfying~\eqref{eq:71b} and the embedding of $(\M,d)$ into
$(\R^d,\norm{\cdot})$ is a Lipschitz function. 

\section{Main result}\label{sec:main}
In this section we state and discuss our main results, characterizing the expected distortion of reconstruction trees.
The proofs are deferred to Section~\ref{mainproof}.
Our first result is a probabilistic bound for any given  threshold
$\eta$. Recall that $s>0$ is defined by ~$\eqref{eq:1}$ and
$\hat{P}_\eta$ by~\eqref{eq:46bis} and~\eqref{eq:15biss}.
\begin{theorem}\label{main}
Fix $\gamma>0$ as in~$\eqref{eq:57}$ and $\eta>0$, for any $0<\sigma<s$
\begin{align}
  \label{eq:50}
 \PP{\mathcal E[\wh{P}_{{\eta}}]\gtrsim \eta^{\frac{4\sigma}{2\sigma+1}}
   (1+t)} \lesssim  \eta^{-\frac{2}{2\sigma+1}} \exp\left( -c_{\MT}  n\eta^2 t
\right) +(n^\gamma+ \eta^{-\frac{2}{2\sigma+1}} )   \exp\left( -c_a
  n\eta^2\right)\qquad t>0, 
\end{align} 
where $c_a=\frac{1}{128 (a+1)}$ and $c_{\MT}>0$ depends on the partion tree $\MT$. 
\end{theorem}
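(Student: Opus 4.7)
I would follow the bias--variance analysis of \cite{binev2005universal}, adapted to the vector-valued setting. For every $I\in\MT$ introduce the population analogues
\[
  c_I=\tfrac{1}{\rho(I)}\E[X\ind_I(X)],\quad \Er_I^*=\E[\norm{X-c_I}^2\ind_I(X)],\quad \eps_I^2=\Er_I^*-\sum_{J\in\mathcal C(I)}\Er_J^*=\sum_{J\in\mathcal C(I)}\rho(J)\norm{c_J-c_I}^2 ,
\]
where the last identity is the between--within ANOVA decomposition. Let $\T_\eta$ and $\Lm_\eta$ denote the population subtree and its outer leaves obtained by replacing $\wh\eps$ by $\eps$ in~\eqref{eq:46bis}. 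Since $c_I$ is the $L^2$-projection of $X$ onto constants on $I$, Pythagoras yields the bias--variance split
\[
  \Er[\wh P_\eta] \;=\; \underbrace{\sum_{I\in\wh\Lm_\eta}\Er_I^*}_{\text{bias}} \;+\; \underbrace{\sum_{I\in\wh\Lm_\eta}\rho(I)\norm{c_I-\wh c_I}^2}_{\text{variance}} ,
\]
and I would control each piece on a single high-probability ``stability'' event.

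\textbf{Approximation side and stability event.} Since both $c_I$ and $c_J$ lie in the essential convex hull of $I$, Assumption~\ref{apriori}(a) gives
\[
  \eps_I^2\leq \rho(I)\diam{I}^2\leq C_1^2\rho(I)^{1+2s},\qquad \Er_I^*\leq \rho(I)\diam{I}^2\leq C_1^2\rho(I)^{1+2s}.
\]
A Besov-type counting argument as in \cite{binev2005universal} then produces, for every $\sigma<s$,
\[
  \#\{I\in\MT:\eps_I\geq\eta\}\lesssim \eta^{-2/(2\sigma+1)},\qquad \sum_{I\in\Lm_\eta}\Er_I^* \lesssim \eta^{4\sigma/(2\sigma+1)}.
\]
Define the stability event $\mathcal G=\bigl\{\,|\wh\eps_I^2-\eps_I^2|<\eta^2/2$ for every $I\in\MT$ with $j_I\leq j_n\,\bigr\}$. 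Applying Bernstein's inequality to $\wh\eps_I^2-\eps_I^2$ (each summand in~\eqref{eq:littleeps} being bounded by $\diam{I}^2\leq 1$) and union bounding over the at most $\tfrac{a}{a-1}n^\gamma$ cells of depth $\leq j_n$ yields $\PP{\mathcal G^c}\lesssim n^\gamma \exp(-c_a n\eta^2)$, which is the $n^\gamma$-term in \eqref{eq:50}. On $\mathcal G$ the sandwich $\wh\T_\eta\subseteq \T_{\eta/\sqrt 2}$ holds, so each $I\in\wh\Lm_\eta$ is contained in some $J\in\Lm_{\eta/\sqrt 2}$; refinement-subadditivity of $\Er^*$ (optimality of $c_J$) then gives the bias bound $\sum_{I\in\wh\Lm_\eta}\Er_I^*\leq \sum_{J\in\Lm_{\eta/\sqrt 2}}\Er_J^* \lesssim \eta^{4\sigma/(2\sigma+1)}$ and the cardinality bound $\#\wh\Lm_\eta\lesssim \eta^{-2/(2\sigma+1)}$.

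\textbf{Variance term and main obstacle.} For each cell $I\in\wh\Lm_\eta$, a vector-valued Bernstein inequality applied to the centred i.i.d.\ bounded vectors $(X_i-c_I)\ind_I(X_i)$ yields $\rho(I)\norm{\wh c_I-c_I}^2 \lesssim (\diam{I}^2/n)(1+t)$ with probability at least $1-\exp(-c_\MT n\eta^2 t)$. Summing over the $\lesssim \eta^{-2/(2\sigma+1)}$ cells of $\wh\Lm_\eta$ (deterministically bounded on $\mathcal G$) contributes the leading summand $\eta^{-2/(2\sigma+1)}\exp(-c_\MT n\eta^2 t)$ in~\eqref{eq:50}; the residual contribution of cells whose empirical mass $n_I$ deviates from $n\rho(I)$---in particular the degenerate case $n_I=0$, handled by the convention $\wh c_I=\wh x_I^*\in I$ and the local bound $\rho(I)\diam{I}^2$---is controlled via one further Bernstein bound on $|n_I/n-\rho(I)|$ and produces the extra $\eta^{-2/(2\sigma+1)}\exp(-c_a n\eta^2)$ summand. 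The \emph{hardest step} is this last one: decoupling the three simultaneous sources of randomness (the empirical threshold that selects $\wh\Lm_\eta$, the empirical centers $\wh c_I$, and the cell counts $n_I$) and calibrating Bernstein's constants so that $n\eta^2$, rather than $n\eta$, appears in the exponent. The sandwich on $\mathcal G$ reduces the first issue to a union bound over a deterministic family of size $\lesssim\eta^{-2/(2\sigma+1)}$, while matching the cell-level variance to the threshold $\eta^2$ through the local diameter bound $\diam{I}\lesssim \rho(I)^s$ supplied by Assumption~\ref{apriori} produces the correct scaling in~\eqref{eq:50}.
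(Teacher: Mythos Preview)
Your bias--variance split and stability-event strategy are close in spirit to the paper's argument, but there is a genuine gap in the bias step. From $\wh\T_\eta\subseteq\T_{\eta/\sqrt2}$ on $\mathcal G$ you conclude that each $I\in\wh\Lm_\eta$ is contained in some $J\in\Lm_{\eta/\sqrt2}$; this is backwards. A smaller subtree has \emph{coarser} outer leaves, so the containment goes the other way and refinement-monotonicity yields $\sum_{J\in\Lm_{\eta/\sqrt2}}\Er_J^*\le\sum_{I\in\wh\Lm_\eta}\Er_I^*$, which is useless. What the bias bound actually needs is the \emph{reverse} inclusion $\T_{c\eta}\subseteq\wh\T_\eta$ for some $c>1$ (your event $\mathcal G$ does give $\T_{\sqrt{3/2}\,\eta}\subseteq\wh\T_\eta$, but only once you check that every cell of $\T_{\sqrt{3/2}\,\eta}$ has depth $\le j_n$, which is where Lemma~\ref{max_depth} enters). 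The inclusion you wrote is the one relevant for the \emph{cardinality} bound $\#\wh\Lm_\eta\le a\,\#\T_{\eta/\sqrt2}\lesssim\eta^{-2/(2\sigma+1)}$, not for the bias.

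A smaller issue: $\wh\eps_I^2$ is not an i.i.d.\ average---it depends on the empirical centers $\wh c_J$, which are nonlinear in the data---so Bernstein does not apply directly to $\wh\eps_I^2-\eps_I^2$. The paper instead concentrates $\wh\eps_I$ around $\eps_I$ by separately controlling $|\sqrt{n_J/n}-\sqrt{\rho_J}|$ and $\|\wh c_J-c_J\|$; see the lemma leading to~\eqref{eq:64}.

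The paper sidesteps both difficulties via a four-term decomposition built on the auxiliary trees $\wh\T_\eta\cup\T_{2\eta}$ and $\wh\T_\eta\cap\T_{\eta/2}$. The point is that $\wh\T_\eta\cup\T_{2\eta}\supseteq\T_{2\eta}$ and $\Lm(\wh\T_\eta\cap\T_{\eta/2})\subseteq\T_{\eta/2}\cup\Lm_{\eta/2}$ hold \emph{deterministically}, so Proposition~\ref{apprx} (approximation) and Proposition~\ref{prop:3} (sample error over a fixed index set) apply without first conditioning on a good event; the probability that the intermediate trees fail to coincide is then absorbed into two separate ``stability'' terms (B and D), which produce the $(n^\gamma+\eta^{-2/(2\sigma+1)})\exp(-c_a n\eta^2)$ summand in~\eqref{eq:50}.
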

As shown in Remark~\ref{log}, it is possible to set $\sigma=s$ up to an extra
logarithmic factor. 

Next, we show how it allows derive a
choice for $\eta$ as a function of the number of examples, and a
corresponding expected distortion bound.
  
\begin{corollary}\label{pasquale}
Fix $\gamma>1$, $\beta>0$ and set 
\begin{align}
  \label{eq:55}
  \eta_n = \sqrt{ \frac{(\gamma+\beta)\ln n}{c_an}} \qquad\text{and}\qquad \wh{P}_n =
  \wh{P}_{{\eta_n}} \qquad n\ge 1,
\end{align}
where $c_a=\frac{1}{128 (a+1)}$. Then for any $0<\sigma<s$ 
\begin{subequations} 
  \begin{align}
    \label{eq:56}
    \PP{\mathcal E[\wh{P}_{n}] \gtrsim
      \left(\frac{\ln n}{n}\right)^{\frac{2\sigma}{2\sigma+1}} (1+t) }
    \lesssim  \frac{1}{n^\beta} + \frac{1}{n^{
\overline{c}_\MT t-1}},
  \end{align}
where $\overline{c}_\MT>0$ is a constant depending on the partition
tree $\MT$.  Furthermore
\begin{align}
  \label{eq:69}
  \lim_{t\to+\infty} 
\limsup_{n\to+\infty} \sup_{\rho\in \mathcal    P_{b,s}(\MT)} \PP{\mathcal
  E[\wh{P}_{n}] \gtrsim      \left(\frac{\ln
      n}{n}\right)^{\frac{2\sigma}{2\sigma+1}} t} = 0  ,
\end{align}
where $\mathcal P_{b,s}(\MT)$  is the family of distributions $\rho$ such that
Assumptions~$\ref{apriori}$ hold true.  
\end{subequations}
\end{corollary}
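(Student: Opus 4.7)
The plan is to apply Theorem~\ref{main} with the specific choice $\eta=\eta_n$ and simply track the resulting powers of $n$. Concretely, from the definition of $\eta_n$ one has $n\eta_n^2=(\gamma+\beta)(\ln n)/c_a$, so that
\begin{align*}
\exp(-c_a n\eta_n^2) &= n^{-(\gamma+\beta)}, &
\exp(-c_\MT n\eta_n^2 t) &= n^{-\overline{c}_\MT t},
\end{align*}
with $\overline{c}_\MT:=c_\MT(\gamma+\beta)/c_a$. Similarly,
\[
\eta_n^{\frac{4\sigma}{2\sigma+1}}\asymp\left(\frac{\ln n}{n}\right)^{\frac{2\sigma}{2\sigma+1}},\qquad
\eta_n^{-\frac{2}{2\sigma+1}}\lesssim n^{\frac{1}{2\sigma+1}}\le n,
\]
up to constants depending only on $\gamma+\beta$, $c_a$ and $\sigma$. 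This matches the event on the left-hand side of \eqref{eq:56} with that of \eqref{eq:50}.

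Next I would plug these identities into the right-hand side of \eqref{eq:50}. The first term becomes
\[
\eta_n^{-\frac{2}{2\sigma+1}}\exp(-c_\MT n\eta_n^2 t)\;\lesssim\; n^{\frac{1}{2\sigma+1}}\cdot n^{-\overline{c}_\MT t}\;\le\; n^{-(\overline{c}_\MT t-1)},
\]
using $1/(2\sigma+1)\le 1$. For the second term, since $\gamma>1>1/(2\sigma+1)$, the factor $n^\gamma$ dominates $\eta_n^{-2/(2\sigma+1)}$, so
\[
\bigl(n^\gamma+\eta_n^{-\frac{2}{2\sigma+1}}\bigr)\exp(-c_a n\eta_n^2)\;\lesssim\; n^\gamma\cdot n^{-(\gamma+\beta)}\;=\;n^{-\beta}.
\]
Adding the two gives exactly \eqref{eq:56}, possibly after absorbing inessential constants into $\overline{c}_\MT$.

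For the asymptotic statement \eqref{eq:69}, I would first note that all constants appearing in Theorem~\ref{main} depend only on the partition tree $\MT$ and on $s,b,C_1,C_2$, hence are uniform over $\rho\in\mathcal P_{b,s}(\MT)$. For $t>1$ one can write $t=1+(t-1)$ and apply \eqref{eq:56} with $t-1$ in place of $t$, obtaining
\[
\sup_{\rho\in\mathcal P_{b,s}(\MT)}\PP{\mathcal E[\wh P_n]\gtrsim\left(\tfrac{\ln n}{n}\right)^{\frac{2\sigma}{2\sigma+1}} t}\;\lesssim\;n^{-\beta}+n^{-(\overline{c}_\MT(t-1)-1)}.
\]
The right-hand side tends to $0$ as $n\to\infty$ whenever $\overline{c}_\MT(t-1)>1$, so $\limsup_n$ is $0$ for all such $t$, and the outer limit as $t\to\infty$ is trivially $0$.

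The argument is essentially bookkeeping, not a genuine new inequality: the only nontrivial step is the balancing calibration of $\eta_n$ that makes both terms on the right-hand side of \eqref{eq:50} comparable (up to $\ln n$ factors) to $(\ln n/n)^{2\sigma/(2\sigma+1)}$. The main care is to verify that the polynomial prefactor $\eta_n^{-2/(2\sigma+1)}$ is absorbed by the exponential factors, which requires the hypotheses $\gamma>1$ (needed so that $n^\gamma\cdot n^{-(\gamma+\beta)}$ already yields the target $n^{-\beta}$ without being spoiled by the polynomial term) and $\sigma>0$ (which provides the bound $1/(2\sigma+1)<1$ used above). No further ingredient beyond Theorem~\ref{main} is required.
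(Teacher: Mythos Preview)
Your proposal is correct and follows essentially the same approach as the paper: substitute $\eta=\eta_n$ into Theorem~\ref{main}, compute $n\eta_n^2=(\gamma+\beta)(\ln n)/c_a$, and track the resulting powers of $n$, defining $\overline{c}_\MT=c_\MT(\gamma+\beta)/c_a$ exactly as you do. Your treatment is in fact slightly more explicit than the paper's (which dispatches \eqref{eq:69} with ``is clear''), and your remarks on why $\gamma>1$ and $\sigma>0$ are needed are accurate and useful.
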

If $t$ is chosen large enough so that $\overline{c}_\MT t-1=\beta$,
then bound~\eqref{eq:56} reads as
\[
\PP{\mathcal E[\wh{P}_{n}] \geq  c_1
      \left(\frac{\ln n}{n}\right)^{\frac{2\sigma}{2\sigma+1}} }
    \leq c_2\frac{1}{n^\beta} 
  \]
where $c_1$ and $c_2$ are suitable constants depending on $\MT$. This
bound  can be 
compared with Theorem~8 in 
\cite{lima16} under the assumption that $\M$ is a compact $C^\infty$
manifold. Eq.~\eqref{eq:56}  with $s=1/d$ gives a
convergence rate of the order $(\ln n/n)^{\frac{2p}{2p+ d}}$ for any
$p=\sigma/s\in (0,1)$, whereas the GMRA
algorithm has a rate of the order  $(\ln n/n)^{\frac{2}{2+ d}}$, see
also Proposition~3 of \cite{lima16}. Hence, up to a logarithmic factor
our estimator has the same convergence rate of the GMRA
algorithm. However it is in order to notice that our algorithm works with a cheaper representation; indeed, given the
adaptive partition $\hat{\Lambda}_\eta$, it only requires to compute and store the centers of mass $\{\hat{c_I}\}_{I\in\hat{\Lambda}_\eta}$.

In a similar setting, in \cite{canas2012learning}, it is shown that the $k$-means
algorithm with a suitable choice of $k=k_n$ depending on $n$,
provides has a convergence rate of the order $(1/n)^{\frac{1}{d+1}}$
  and $k$-flat algorithm of the order $(1/n)^{\frac{2}{d+4}}$.

The proof of Theorem~\ref{main} relies on splitting the error in
several terms. In particular, it requires studying the stability to
random sampling and the approximation properties of reconstruction
trees. This latter result is relevant in the context of quantization
of probability measures, hence of interest in its own right. We
present this result first.

Towards this end, we need to introduce the infinite sample version of the reconstruction tree. 
  For any cell $I\in\mathbb T$,  denote the volume of the cell by
\[
\rho_I  = \rho(I),
\]
 the center of mass of the cell by
\[
c_I = \begin{cases}
  \frac{1}{\rho_I}\int\limits_I x \, d\rho(x)  & \text{ if } \rho_I>0 \\[4pt]
    x_I^* & \text{ if } \rho_I=0,
\end{cases}  
\]
where $x^*_I$ is an arbitrary point in $\X$. The local expected distortion in a cell by
\[
\Er_I  =\int\limits_I \norm{x-c_I}^2 d\rho(x),
\]
and 
\[
\epsilon^2_I =\Er_I-\sum_{J\in\mathcal{C}(I)}\Er_J =
\sum_{J\in\mathcal C(I)} \rho_I \norm{c_J-c_I}^2 .
\] 
Given the threshold $\eta>0$, define the subtree 
\begin{align}
  \label{eq:46tris} 
  {\T}_\eta=
  \begin{cases}
   \set{\X} & \text{ if } {\eps}_I<\eta \quad \forall I\in \MT\\[4pt]
    \{ I\in \mathbb T \mid \exists J\in\MT \text{ such that }
    J\subset I \text{ and } 
    {\eps}_J\geq \eta\} & \text{otherwise} 
  \end{cases} ,
\end{align}
and let ${\Lm}_\eta=\Lm({\T}_\eta)$ be the corresponding
outerleaves. Lemma~\ref{lem:finite} shows that $ {\T}_\eta$ is finite,
so that by Lemma~\ref{lem:wba} ${\Lm}_\eta$ is a partition and the
corresponding  nonlinear projection is 
\begin{alignat}{2}\label{eq:15}
  P_{\Lm_\eta} (x) & =\sum_{I\in\Lm_\eta} c_I \mathbbm 1_{I}(x) 
\end{alignat}
so that the code vectors are the centers of mass of the
cells.

Comparing the definition of $\T_\eta$ and $\hat{\T}_\eta$, we
  observe that $\hat{\T}_\eta$ is truncated at the depth $j_n$ given
  by~\eqref{eq:57}, whereas $\T_\eta$ is not truncated, but its
  maximal depth is bounded by Lemma~\ref{max_depth}.

Given the above definitions, we have the following result.
\begin{proposition}\label{apprx}
Given $\eta>0$, for all $0<\sigma<s$
\begin{align}
  \label{eq:14}
   \mathcal E(P_{\Lm_{\eta}}) \lesssim\, \eta^{\frac{4\sigma}{2\sigma+1}}.
\end{align}
\end{proposition}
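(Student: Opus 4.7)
The plan is to reduce the distortion to a sum of squared refinement gains over cells \emph{outside} $\T_\eta$, and then bound that sum by interpolating three independent upper bounds for $\epsilon_J^2$ that follow from the two parts of Assumption~\ref{apriori} together with the definition of $\T_\eta$.

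First I would establish the telescoping identity
\[
\mathcal E(P_{\Lambda_\eta}) \;=\; \sum_{I\in\Lambda_\eta}\mathcal E_I \;=\; \sum_{J\in\MT\setminus\T_\eta}\epsilon_J^2,
\]
by iterating the between-within decomposition $\mathcal E_I = \epsilon_I^2+\sum_{K\in\mathcal C(I)}\mathcal E_K$ downwards. The boundary term at depth $k$ is controlled by $\rho_I C_2^2 b^{-2k}$ through Assumption~\ref{apriori}~(b), hence vanishes as $k\to\infty$. The index set collapses to $\MT\setminus\T_\eta$ because, directly from the definition, a cell $J$ is contained in some outer leaf if and only if no descendant of $J$ witnesses $\epsilon_K\geq\eta$, i.e., if and only if $J\notin\T_\eta$.

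Next, for each $J\notin\T_\eta$, using $\epsilon_J^2=\sum_{K\in\mathcal C(J)}\rho_K\|c_K-c_J\|^2\leq \rho_J\,\diam{J}^2$ together with the two inequalities of Assumption~\ref{apriori} and the defining threshold, I obtain three bounds,
\[
\epsilon_J^2\leq \rho_J C_2^2 b^{-2j_J},\qquad \epsilon_J^2\leq C_1^2\rho_J^{2s+1},\qquad \epsilon_J^2\leq\eta^2 .
\]
Blending them via $\min(a,b,c)\leq a^{\theta_1}b^{\theta_2}c^{\theta_3}$ for any probability weights $(\theta_1,\theta_2,\theta_3)$ gives
\[
\epsilon_J^2 \;\leq\; C_2^{2\theta_1}C_1^{2\theta_2}\,\eta^{2\theta_3}\,\rho_J^{\theta_1+(2s+1)\theta_2}\, b^{-2\theta_1 j_J}.
\]
I then pick $\theta_3=2\sigma/(2\sigma+1)$ (so that $\eta^{2\theta_3}$ matches the target rate) and impose $\theta_1+(2s+1)\theta_2=1$, which yields $\theta_1=(s-\sigma)/[s(2\sigma+1)]>0$ and $\theta_2=\sigma/[s(2\sigma+1)]>0$ exactly because $\sigma<s$. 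Bounding the restricted sum by the full sum over $\MT$ and splitting by depth gives
\[
\sum_{J\in\MT}\rho_J\, b^{-2\theta_1 j_J} \;=\; \sum_{k\geq 0} b^{-2\theta_1 k}\sum_{J\in\Lambda_k}\rho_J \;=\; \frac{1}{1-b^{-2\theta_1}},
\]
and combining the displays yields $\mathcal E(P_{\Lambda_\eta})\lesssim \eta^{4\sigma/(2\sigma+1)}$ with a constant depending on $s,\sigma,C_1,C_2,b$.

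The hard part is the interpolation step: the weights $(\theta_1,\theta_2,\theta_3)$ must be chosen so that simultaneously (i) the exponent of $\eta$ reaches the target $4\sigma/(2\sigma+1)$, (ii) the exponent of $\rho_J$ equals $1$, making the per-depth sum collapse to the trivial identity $\sum_{J\in\Lambda_k}\rho_J=1$, and (iii) the exponent of $b^{-j_J}$ is strictly positive, so that summing over depths produces a convergent geometric series rather than a divergent one. The strict inequality $\sigma<s$ is precisely what keeps all three conditions compatible, and the prefactor $(1-b^{-2\theta_1})^{-1}$ blows up as $\sigma\nearrow s$, which is why the proposition is stated in the open range.
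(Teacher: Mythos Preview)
Your proof is correct and takes a genuinely different route from the paper's. Both arguments start from the same telescoping identity $\mathcal E(P_{\Lambda_\eta})=\sum_{J\notin\T_\eta}\epsilon_J^2$, but then diverge. The paper performs a dyadic layer-cake decomposition in the threshold, writing $\MT\setminus\T_\eta=\bigcup_{k\ge 0}(\T_{\eta/2^{k+1}}\setminus\T_{\eta/2^k})$, bounding $\epsilon_J^2<(\eta/2^k)^2$ on each shell, and then invoking the separately proved cardinality estimate $\sharp\T_{\eta/2^{k+1}}\lesssim(\eta/2^{k+1})^{-2/(2\sigma+1)}$; the latter estimate already encodes both parts of Assumption~\ref{apriori} through Lemma~\ref{max_depth} and the counting bound~\eqref{eq:2}. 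You instead bypass all counting arguments by interpolating three pointwise upper bounds on $\epsilon_J^2$ with weights $(\theta_1,\theta_2,\theta_3)$ chosen so that the $\rho_J$-exponent is exactly $1$; the sum over $\MT$ then collapses to $\sum_{k\ge 0}b^{-2\theta_1 k}$ via the trivial identity $\sum_{J\in\Lambda_k}\rho_J=1$. Your argument is more self-contained and makes the role of the strict inequality $\sigma<s$ completely transparent (it is exactly what keeps $\theta_1>0$ and the geometric series convergent), while the paper's approach has the advantage of reusing the cardinality bounds~\eqref{eq:13bis}--\eqref{eq:13}, which are needed anyway for the sample-error terms in the proof of Theorem~\ref{main}.
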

Note that the bound is meaningful only if $0<\eta<1$. Indeed for
$\eta\geq 1$ $\Lm_{\eta} =\set{\X}$ and $\mathcal
E(P_{\Lm_{\eta}})\leq 1$,  see Remark~\ref{eta}. 

\subsection{Approximation Error}\label{discapprx} 
 ~\\
The quantity $\mathcal{E} (P_{\Lm_{\eta}})$ is called approximation error, by analogy with the corresponding definition in statistical learning theory, and it plays a special role in our analysis.

The problem of approximating a probability measure with a cloud of points is related to the so called optimal quantization \cite{gruber2004optimum}. The cost of an optimal quantizer is defined as:
\[
V_{N,p}(\rho):= \inf_{S\subset \X, |S|=N}\E[ d(X,S)^p ],
\]
where $d(x,S)=\min_{y\in S} \|x-y\|$. An optimal quantizer corresponds
to a set $S$ of $N$ points attaining the infimum, with the
corresponding Voronoi-Dirichlet partition of $\X$. One can interpret
the approximation error $\mathcal{E} (P_{\Lm_{\eta}})$ as the
quantization cost associated with the (suboptimal) quantizer given by
the partition ${\Lm}_\eta$ as defined in \ref{eq:46tris} with the
corresponding centers $\{c_I\}_{I\in\Lm_\eta}$, and $N:=\#
{\Lm}_\eta$. 

This point of view is also taken through the analysis of $k$-means
given in \cite{canas2012learning}, optimal quantizers corresponding in
fact to absolute minimizers of the $k$-means problem. Asymptotic
estimates for the optimal quantization cost are available, see
\cite{canas2012learning} and references therein. In the case
of $\operatorname{supp} (\rho) = \mathcal{M}$, being $\mathcal{M}$ a
smooth $d$-dimensional manifold isometrically emdedded in $\X$, they
read: 
\begin{align}\label{optapprx}
\lim_{N \to \infty} N^{2/d} V_{N,2}(\rho)= C(\rho),
\end{align}
where $C(\rho)$ is a constant depending only on $\rho$.
We underline that the result provided by Proposition~\ref{apprx} is
actually a non-asymptotic estimate for the quantization cost, when the
quantizer is given by the outcome of our algorithm. The quantization
cost is strictly higher than the optimal one, since, for instance, an
optimal quantizer always corresponds to a Voronoi-Dirichlet partition
\cite{gruber2004optimum}. Nevertheless, as observed in
Section~\ref{sec:comp}, a Voronoi quantizer is not suitable for multiscale
refinements, whereas ours is. 
Proposition~\ref{apprx} does not directly compare with
\eqref{optapprx}, as it depends on a different parameter quantifying
the complexity of the partition, namely $\eta$ instead of $N$. Though,
by carefully applying \eqref{eq:13}, in the manifold case we get: 
\[
\mathcal{E} (P_{\Lm_{\eta}}) \lesssim \left(\frac{\log{N}}{N}\right)^\frac{2}{d}
\]
so that the bound is in fact optimal up to a logaritmic factor.
Furthermore, it is in order to observe that Assumption~\ref{apriori}
together with Proposition~\ref{apprx} provide a more transparent
understanding of the approximation part of the analysis, as compared
to what is provided in \cite{binev2005universal} and
\cite{lima16}. Therein, the approximation error is essentially
addressed by defining the class of probability measures
$\mathcal{B}_s$ as those for which a certain approximation property
holds; see Definition~5 in \cite{lima16} and Definition~5 in
\cite{binev2005universal}, in both being the thresholding algorithm
explicitly used. On the other hand Assumption~\ref{apriori} does not
depend on the thresholding algorithm, but only on the mutual
regularity of $\rho$ and $\MT$. Lastly we notice that, while for the
sake of clarity none of the constants appear explicitly in our
results, the proofs allow in principle to estimate them.

\section{Proofs}\label{mainproof}
In this section we collect some of the proofs of the above results. The more technical proofs are postponed to the appendix. 

\begin{proof}[Proof of Thm.~\ref{manifold}]
We first observe that it is enough to
show that there exists a  partition tree $\MT'=\set{\Lambda_j}$ for
$\M$. Indeed, by adding to each partion $\Lambda_j$, the cell
$I_0=\X\setminus \M$, we get a partion of $\X$, which
satisfies~Assumptions~\ref{apriori},  since $\diam{I_0}=0$. 

Since $\X$ is bounded, then $\M$ is a
  connected compact manifold 
  and, hence, $(\M,d_{\M},\rho_{\M})$ is an Ahlfors regular metric measure space
  \cite[p. 413]{gromov}, \textit{ i.e.}
  \[
d_1\, r^d \leq \rho_{\M}(B_{\M}(x,r))\leq d_2\, r^d  \qquad r\leq\operatorname{diam}(\M),
  \]  
where $B_{\M}(x,r)$ is the ball of center $x$ and radius $r$ with
respect to the Riemannian metric $d_{\M}$. By~\eqref{eq:71b}
  \begin{align}
{d_1 p_1}\, r^d \leq \rho(B_{\M}(x,r))\leq {d_2 p_2}\, r^d  \qquad
r\leq\operatorname{diam}(\M),\label{eq:11}
\end{align}
where $d$ is the intrinsic dimension of $\M$. 
Since $(\M,d_{\M},\rho)$ is an Ahlfors regular metric measure, too, there exists a family
  of dyadic cubes, \textit{ i.e}  for each $j\in\mathbb Z$ there is a family
$\Lambda_j=\set{I}$ of open subsets of $\M$ such that
\begin{subequations}
  \begin{alignat}{1}
     & \rho( \M\setminus \cup_{I\in\Lambda_j} I) = 0  \label{eq:71aa} \\
     & I\cap J =\emptyset  \qquad I,J\in\Lambda_j,\ I\neq J \label{eq:71bb} \\
    &  \text{\rm either } I\cap J =\emptyset  \text{ \rm  or } J\subset I \qquad
    I\in\Lambda_j,\ J\in\Lambda_{j+\ell}   \label{eq:71c} \\
    &  I \supset B_{\M}(x_I, r_0 \delta^j)\qquad I\in \Lambda_j \label{eq:71d} \\
   &     I \subset B_{\M}(x_I, r_1 \delta^j) \qquad I\in \Lambda_j \label{eq:71e} 
  \end{alignat}
\end{subequations}
where $0<r_0<r_1$ and $\delta\in (0,1)$ are given constants
\cite[Thm.~11]{christ}.  As noted in \cite{gile17}, it is always possible to
redefine each cell $I\in\Lambda_j$ by adding a suitable portion of
its boundary in such a way that
\begin{align}
  \label{eq:73}
  \M=\cup_{I\in\Lambda_j} I
\end{align}
and~\eqref{eq:71a}--\eqref{eq:71e} still hold true, possibly with
different constants. 
Since $\M$ is compact, there exists $j_0\in\mathbb Z$ such that
$B_{\M}(x_0,r_1\delta^{j_0})= \M$ for some $x_0\in\M$. Hence,
possibly redefining $j$, $r_0$  and $r_1$,  we can assume that
$\Lambda_0=\set{\M}$ and, as a consequence
of~\eqref{eq:71bb},~\eqref{eq:71c} and~\eqref{eq:73}, the family
$\set{\Lambda_j}_{j\in\N}$ is a  partion tree for $\M$ where the bound
in~\eqref{eq:22} is a consequence of the following standard volume
argument.  Fix $j_0$ large enough such that for all $j\geq  j_0$,  $r_1
\delta^j\leq \operatorname{diam}{\M}$, then given $I\in \Lambda_{j}$
\[
\rho(I)=\sum_{J\in\mathcal C(I)} \rho( J) \geq \sum_{J\in\mathcal
  C(I)}  \rho(B_{\M}(x_J,r_0\delta^{j+1})) \geq \sharp\mathcal C(I)\, 
{d_1 p_1} r_0^d
\delta^{d(j+1)} ,
\]
where the third and the forth inequalities  are  consequence
of~\eqref{eq:71d} and~\eqref{eq:11}. 

On the other hand, by~\eqref{eq:71e} and~\eqref{eq:11}, 
  \[
\rho(I) \leq \rho(B_{\M}(x_I,r_1\delta^j)) \leq {d_2 p_2} r_1^d\delta^{jd},
\]
so that
\[
\sharp\mathcal C(I)\leq \frac {{d_2 p_2}r_1^d}{{d_1 p_1}r_0^d\delta^d}=D.
\]
Bound~\eqref{eq:22} holds true by setting
\[
a= \max\set{\max_{{j<j_0}\atop{I\in\Lambda_J}}\set{ \#\,\mathcal C(I)}, D }.
  \]
We now show that~\eqref{eq:6} holds true.  Indeed, 
since $\M$ is  Riemmannian submanifold of $\R^D$ it holds that
  \begin{align}
    \label{eq:59}
    \norm{y-x}\leq d_{\M}(y,x) \qquad x,y\in \M, 
  \end{align}
see \cite[Cor.~2, Prop.~21,
Chapter~5]{petersen}. Given $I\in\Lambda_j$, by~\eqref{eq:71e}, 
\[ \diam{I} \leq \operatorname{diam}(I)\leq
 \sup_{x,y\in I }
  \norm{x-y} \leq \sup_{x,y\in B_{\M}(x_I,r_1\delta^j) }
  d_{\M}(x,y) \leq 2 r_1 \delta^j ,\] 
so that~\eqref{eq:6} holds true with 
$b=1/\delta>1$ and $C_2=2r_1$. To show~\eqref{eq:1}, given
$I\in\Lambda_j$, by~\eqref{eq:71d} and~\eqref{eq:11} 
\[\rho(I) \geq \rho(B_{\M}(x_I, r_0 \delta^j)) \geq {d_1 p_1} \delta^{jd}.\]
Hence 
\[
\diam{I} \leq 2r_1 \delta^j \leq  2r_1 \left(\frac{\rho(I)}{{d_1 p_1}}\right)^{\frac{1}{d}},
\]
so that~\eqref{eq:1} holds true with $C_1= 2r_1(d_1 p_1)^{-\frac{1}{d}} $ and $s=1/d$.
\end{proof}

The proof of Theorem~\ref{main} borrows ideas from
\cite{binev2005universal,lima16} and combines a number of intermediate
results given in Appendix~\ref{sec:proofs}. For sake of clarity let
$\hat{P}_\eta= \wh{P}_{\wh{\Lm}_{\eta}}$. 
\begin{proof}[Proof of Thm.~$\ref{main}$] 
  Consider the following decomposition
  \begin{alignat*}{1}
    x- \wh{P}_{\wh{\Lm}_{\eta}}(x) & = \left(x -P_{\Lm(\hat{\T}_\eta
        \cup \T_{2\eta})}(x)\right) + \left( P_{\Lm(\hat{\T}_\eta \cup
        \T_{2\eta})}(x) - P_{\Lm(\hat{\T}_\eta \cap
        \T_{\eta/2})}(x)\right) + \\
    & + \left( P_{\Lm(\hat{\T}_\eta \cap \T_{\eta/2})}(x) -
      \wh{P}_{\Lm(\hat{\T}_\eta \cap \T_{2\eta})}(x)\right) + \left(
      \wh{P}_{\Lm(\hat{\T}_\eta \cap \T_{2\eta})}(x) -
      \wh{P}_{\Lm(\hat{\T}_\eta)}(x)\right),
  \end{alignat*}
  which holds for all $x\in\X$. Since
  \[ \norm{\sum_{i=1}^4 v_i}^2 \leq 4 \sum_{i=1}^4 \norm{v_i}^2\qquad
    v_1,\ldots,v_4\in\R^D,\] it holds that
  \begin{alignat*}{1}
    \mathcal E[\wh{P}_{\wh{\Lm}_{\eta}}] & \lesssim
    \underbrace{\mathcal E[P_{\Lm(\hat{\T}_\eta \cup
        \T_{2\eta})}]}_{\text{A}} + \underbrace{\int\limits_\X
      \norm{P_{\Lm(\hat{\T}_\eta \cup \T_{2\eta})}(x)
        - P_{\Lm(\hat{\T}_\eta \cap \T_{\eta/2})}(x) }^2 d\rho(x) }_{\text{B}}\\
    & \quad + \underbrace{\int\limits_\X \norm{P_{\Lm(\hat{\T}_\eta
          \cap \T_{\eta/2})}(x) - \wh{P}_{\Lm(\hat{\T}_\eta \cap
          \T_{\eta/2})}(x) }^2 d\rho(x)
    }_{\text{C}} 
    + \underbrace{\int\limits_\X
      \norm{\wh{P}_{\Lm(\hat{\T}_\eta \cap \T_{2\eta})}(x) -
        \wh{P}_{\Lm(\hat{\T}_\eta)}(x) }^2 d\rho(x)}_{\text{D}} .
  \end{alignat*}
  We bound the four terms.
  \begin{enumerate}[A)]
  \item Since $\hat{\T}_\eta \cup \T_{2\eta} \supset \T_{2\eta}$,
    $\Lm(\hat{\T}_\eta \cup \T_{2\eta})$ is a partition finer than
    $\Lm_{2\eta}$, then
    \[ \mathcal E[P_{\Lm(\hat{\T}_\eta \cup \T_{2\eta})}]\leq \mathcal
      E[P_{\Lm_{2\eta}}] \lesssim \eta^{\frac{4\sigma}{2\sigma+1}},\]
where the last inequality is a consequence of~\eqref{eq:14}.
  \item[B)] Bound~\eqref{eq:24} implies that the term $B$ is zero with
    probability greater than $1-p_{B}$, where
    \[
      p_{B} \lesssim (n^\gamma+\eta^{-\frac{2}{2\sigma+1}})\exp(-c_a
      n\eta^2).
    \]
  \item[C)] Since
    $\Lm(\hat{\T}_\eta \cap \T_{\eta/2})\subset \T_{\eta/2}\cup
    \Lm_{\eta/2}=\mathcal I $ and
    $\sharp \Lm(\hat{\T}_\eta \cap \T_{\eta/2})\leq \sharp
    \Lm_{\eta/2}=N$, by~\eqref{eq:20} term C is bounded by
    $t^*=\eta^{\frac{4\sigma}{2\sigma+1}} t$ with probability greater
    than $1-p_C$ with
    \begin{align}
      \label{eq:52}
      p_C=2 \sharp\mathcal I \exp\left( -
        \frac{nt^*}{4 N} \right) \lesssim   \eta^{-\frac{2}{2\sigma+1}}
      \exp\left( -c_{\MT} n \eta^2 t\right) 
    \end{align}
    where the second inequality is a consequence of~\eqref{eq:13bis}
    and~\eqref{eq:13}, and $c_{\MT}>0$ is a suitable constant
    depending on the partition tree $\MT$. 
  \item[D)] By~\eqref{eq:49} term D is zero with probability greater
    that $1-p_D$ where
    \[p_D \lesssim \eta^{-\frac{2}{2\sigma+1}}\exp(-c_a n\eta^2) .
    \]
  \end{enumerate}
  If follows that with probability greater than $1-(p_B+p_C+p_B)$
  \[
    \mathcal E[\wh{P}_{\wh{\Lm}_{\eta}}] \lesssim
    \underbrace{\eta^{\frac{4\sigma}{2\sigma+1}}}_{A} +
    \underbrace{\eta^{\frac{4\sigma}{2\sigma+1}} t}_{C} 
  \]
\textit{ i.e.}
  \[
    \PP{\mathcal E[\wh{P}_{\wh{\Lm}_{\eta}}] \gtrsim
      \eta^{\frac{4\sigma}{2\sigma+1}} (1+t) } \lesssim
   \underbrace{ (n^\gamma+\eta^{-\frac{2}{2\sigma+1}}) \exp\left( -c_a n\eta^2 t
    \right)}_{p_A+p_D} +  \underbrace{\eta^{-\frac{2}{2\sigma+1}}
      \exp\left( -c_{\MT} n \eta^2 t\right) }_{p_C}
  \]
  which gives~\eqref{eq:50}.
\end{proof}

\begin{proof}[Proof of Cor.~$\ref{pasquale}$]
 Since $\eta_n^2= \frac{(\gamma+\beta)\ln n}{c_an}$, then 
bound~\eqref{eq:50} gives~\eqref{eq:56} since
  \begin{alignat*}{1}
    & (n^\gamma+ \left(\frac{c_an}{(\gamma+\beta)\ln n}\right)^{\frac{1}{2\sigma+1}}) \exp\left( -c_a n\eta_n^2
    \right) \lesssim n^{\gamma}  n^{-(\gamma+\beta)} =n^{-\beta}\\
& \left( \frac{c_an}{(\gamma+\beta)\ln n}    \right)^{\frac{1}{2\sigma+1}}  \exp\left( -c_\MT n\eta_n^2 t
    \right) \lesssim n  n^{- \overline{c}_\MT  t} = 
    n^{1-\overline{c}_\MT t} 
  \end{alignat*}
where $\overline{c}_\MT= c_\MT c_a^{-1} (\gamma+\beta)$.
Eq.~\eqref{eq:69} is clear. 
\end{proof}

\begin{proof}[Proof of Prop.~$\ref{apprx}$]
Given $I\in\MT$, by~\eqref{eq:16} and~\eqref{eq:6}, 
  \[
\sum_{J\in\mathcal C^{N+1}(I)} {\mathcal E}_J \lesssim a\, b^{-2(j_I+N+1)} 
    \]
so that   
\[
\lim_{N\to+\infty } \sum_{J\in\mathcal C^{N+1}(I)} {\mathcal E}_J =0
\]
 and, by taking the limit in~\eqref{eq:18}, 
\[
{\mathcal E}_{I} = \sum_{k=0}^{+\infty} \sum_{J\in\mathcal C^k(I)} \epsilon_J^2.
\]
Set $\T_k= \T_{\eta/2^k}$ for all $k\in\N$, then
\begin{alignat*}{1}
   \mathcal E(P_{\Lm_{\eta}})  & = \sum_{I\in \Lm_{\eta}} {\mathcal E}_I = \sum_{I\in \Lm_{\eta}} \sum_{k=0}^{+\infty}
   \sum_{J\in\mathcal C^k(I)} \epsilon_J^2 = \sum_{J\notin\T_\eta} \epsilon_J^2 
   =\sum_{k=0}^{+\infty}\quad \sum_{J\in \T_{k+1}\setminus \T_k}
  \epsilon_J^2 \leq \sum_{k=0}^{+\infty} \sharp \T_{k+1}\,
  (\frac{\eta}{2^k})^2  \\&
  \lesssim  \sum_{k=0}^{+\infty}  \frac{\eta^2}{2^{2k}} (\frac{\eta}{2^{k+1}})^{-\frac{2}{2\sigma+1}}
 = \eta^{\frac{4\sigma}{2\sigma +1}} \sum_{k=0}^{+\infty} 4^{\frac{k+1}{2\sigma+1}-k}\lesssim \eta^{\frac{4\sigma}{2\sigma+1}} ,
\end{alignat*}
where the first inequality  is a consequence of the fact that $\eps_J<
(\frac{\eta}{2^k})^2$ if $J\notin \T_k$, the second inequality follows
from~\eqref{eq:13bis}, whereas the last inequality
holds since the  series $\sum_{k=0}^{+\infty} 4^{-\frac{2\sigma
    k-1}{2\sigma+1}}$  converges.
\end{proof}

\section{Conclusions}\label{sec:conc}
In this paper, we proposed and analyzed a multiscale vector quantization approach inspired by ideas in   \cite{lima16}. We provided non asymptotic error bounds on the corresponding distortion/reconstruction error combining geometric and probabilistic tools. The analysis is developed in a general setting with manifold supported data as a special case. 

A number of research directions remain to be explored. Following again  \cite{lima16}, it would be interesting to understand the role of noise and the impact of data driven partitioning. Further,  following the parallel with wavelets it would be interesting to use results from reproducing kernel Hilbert spaces to develop Gabor like geometric wavelets and analyse the properties of the corresponding quantization schemes. 

\section*{Acknowledgments}
 L. R. acknowledges the financial support of the AFOSR projects FA9550-17-1-0390 and BAA-AFRL-AFOSR-2016-0007 
(European Office of Aerospace Research and Development), and the EU H2020-MSCA-RISE project NoMADS - DLV-777826.

\appendix

\section{Further Proofs}\label{sec:proofs}

This section is devoted to show all the results cited in the proof in
Section~\ref{mainproof}. 
We first show some preliminary results. 

\subsection{Technical results}
We introduce some further notations about partition trees.

With slight abuse of notation, we regard ${\MT =\cup_{j\in\N} \cup_{I\in \Lambda_j} I}$ as the (disjoint) union
of the cells in each partition~$\Lambda_j$. 
It a cell does not
split, \textit{ i.e.} $\mathcal  C(I)= I$, we regard
$I\in\Lambda_j$ and   $\mathcal C(I)\in  \Lambda_{j+1}$ as different cells.  

Given a cell $I\in\MT$, for any $k\in\N$ we set
\[ \mathcal C^{k+1}(I) =\mathcal C (\mathcal C^{k}(I)) \qquad \mathcal
  P^{k+1}(I) =\mathcal P (\mathcal P^{k}(I)), \]
where $\mathcal C^0(I)=\mathcal P^0(I)=\set{I}$ and, clearly,
\[ \mathcal P^{k}(I)=\set{\X} \qquad k\geq j_I.\]
Furthermore, for any pair $I,J\in\MT$, $I\neq J$ one and only one of the following
alternative possibilities holds true
\begin{align}
  \label{eq:26}
  I\cap J= \emptyset \qquad \text{ or } \qquad  J\in\mathcal C^k(I) \qquad
  \text{ or } \qquad I\in\mathcal C^k(J)
\end{align}
for some $k\geq 1$.

If $\{\mathcal T_t\}_{t\in T}$ is an arbitrary
family of  subtrees, clearly the intersection $\cap_{t\in T} \mathcal T_t$
and the union $\cup_{t} \mathcal T_t$ are the \textit{ smallest } 
and the \textit{ largest}  subtrees in the family.

Given a subset $S\subset\MT$, we set
\[ \T(S)=\bigcap\set{ \T \mid \T \text{ is a subtree and } S\subset\T},\]
which  is the smallest subtree containing all the cells in $S$, and
\begin{subequations}
  \begin{alignat}{1}
    \T(S) & =\bigcup_{I\in S} \set{\mathcal P^k(I)\mid k=0,\ldots,j_I} =
    \bigcup_{I\in S} \T(I) \label{eq:26a} \\
   \sharp\T(S) & \leq 1+\sum_{I\in S}  {j_I} \label{eq:26b}.
  \end{alignat}
\end{subequations}

The following remark provides an alternative definition of
$\hat{\T}_\eta$ and
a similar procedure can be applied to $\T_\eta$
\begin{remark}\label{alternative} 
  Given $\eta>0$ and $j_n\in\N$, set
  \[
\hat{S}_\eta = \{I\in\MT \mid j_I\geq j_n , \hat{\eps}_I\geq \eta\}
\]
which is a finite subset of $\MT$. It is easy to check that $\mathcal
T(\hat{S}_\eta) = \hat{T}_\eta$.
\end{remark}


The following lemma is quite obvious. 
\begin{lemma}\label{lem:wba}
If $\mathcal T$ is finite subtree, the family of
outer leaves
\[
\Lm(\mathcal T)=\{ I\in\mathbb T : I\not\in \mathcal T, \mathcal P(I)\in\mathcal T\}
\]
is a partition with
\begin{align}
\sharp \Lm(\mathcal T)\leq  (a-1)\, \sharp \mathcal T+1 \leq a\, \sharp \mathcal
T\label{eq:cardinality} .
\end{align}
\end{lemma}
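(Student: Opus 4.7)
The statement has two parts: that $\Lm(\mathcal T)$ is a partition of $\X$, and the quantitative bound on $\sharp \Lm(\mathcal T)$. I would prove them in that order, and the argument should be essentially combinatorial, relying only on the axioms of the partition tree and the subtree closure property $K \in \mathcal T \Rightarrow \mathcal P(K) \in \mathcal T$.

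For disjointness of outer leaves, I would take two distinct $I, J \in \Lm(\mathcal T)$ and invoke the trichotomy~\eqref{eq:26}: either $I \cap J = \emptyset$ or one of them is a strict descendant of the other. If, say, $J \in \mathcal C^k(I)$ for some $k \geq 1$, then $J$ being an outer leaf gives $\mathcal P(J) \in \mathcal T$, and iterating the subtree closure property $k-1$ further times yields $I = \mathcal P^k(J) \in \mathcal T$, contradicting $I \notin \mathcal T$. Hence distinct outer leaves are always disjoint as subsets of $\X$.

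For the covering, I would fix $x \in \X$ and look at the unique chain $I_0 \supset I_1 \supset I_2 \supset \ldots$ of cells containing $x$, one at each depth $j \in \N$. Provided $\mathcal T \neq \emptyset$ (otherwise the statement is vacuous), iterating parental closure forces $\X = I_0 \in \mathcal T$, and since $\mathcal T$ is finite there is a largest index $m$ with $I_m \in \mathcal T$. Then the child $I_{m+1} \in \mathcal C(I_m)$ satisfies $\mathcal P(I_{m+1}) = I_m \in \mathcal T$ but $I_{m+1} \notin \mathcal T$ by maximality of $m$, so $I_{m+1} \in \Lm(\mathcal T)$ and contains $x$. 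Combined with disjointness, this shows $\Lm(\mathcal T)$ is a partition of $\X$.

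For the cardinality bound, I would count the parent-child pairs $\{(I, J) : I \in \mathcal T,\ J \in \mathcal C(I)\}$ in two ways. On the one hand, each $I \in \mathcal T$ contributes at most $a$ children, giving at most $a\, \sharp \mathcal T$ pairs. On the other hand, every such child $J$ lies either in $\mathcal T$ (and every non-root element of $\mathcal T$ arises this way exactly once, by the subtree closure property applied to its unique parent, contributing $\sharp \mathcal T - 1$ pairs) or in $\Lm(\mathcal T)$ (contributing $\sharp \Lm(\mathcal T)$ pairs, since each outer leaf has a unique parent and that parent lies in $\mathcal T$). Equating the two counts yields
$$\sum_{I \in \mathcal T} \sharp \mathcal C(I) = (\sharp \mathcal T - 1) + \sharp \Lm(\mathcal T) \leq a\, \sharp \mathcal T,$$
which rearranges to $\sharp \Lm(\mathcal T) \leq (a-1)\sharp \mathcal T + 1 \leq a\, \sharp \mathcal T$. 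The only step that needs real care is the disjointness argument, where one must correctly chain the subtree closure property the right number of times; beyond that I do not anticipate a serious obstacle.
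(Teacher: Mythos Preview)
Your proof is correct but takes a genuinely different route from the paper's. The paper argues by induction: it fixes a total order on cells (first by depth, then arbitrarily within each $\Lambda_j$) and builds $\T$ up from $\{\X\}$ one cell at a time, observing that at each step the outer-leaf set changes by replacing a single cell $I$ with its children $\mathcal C(I)$; the partition property and the bound $\sharp\Lambda(\T_n)\leq (a-1)\sharp\T_n+1$ are then checked inductively along this chain of subtrees $\T_1\subset\T_2\subset\cdots\subset\T_N=\T$.

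Your approach is more direct: the trichotomy~\eqref{eq:26} for disjointness, the chain-of-cells argument for covering, and the double-counting identity $\sum_{I\in\T}\sharp\mathcal C(I)=(\sharp\T-1)+\sharp\Lambda(\T)$ for the cardinality. This avoids the auxiliary ordering and the inductive bookkeeping, and it produces the cardinality bound in one line from a combinatorial identity rather than step by step. The paper's inductive construction, on the other hand, makes the refinement structure $\Lambda(\T_n)\mapsto\Lambda(\T_{n+1})$ explicit, which mirrors how the algorithm actually grows subtrees and is conceptually closer to the coarse-to-fine viewpoint used elsewhere in the paper. Both arguments are elementary; yours is shorter.
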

 
\begin{proof}
  For any $j\in\N$, we fix an arbitrary order among the cells in $\Lambda_j$, \textit{ i.e.}
  \[
\Lambda_j= \set{I_{j,1},\ldots, I_{j,N_j}} \qquad N_j=\#\Lambda_j .
    \]
Given two different cells $I=I_{j,k}$ and $ J=I_{j',k'}$ in $\MT$, we
say that  $I$ is older than $J$  either 
if $j<j'$ or $j=j'$ and $k<k'$.

By definition $\T$ contains the parents of all its elements
and, hence,  the root  
$\X$.  Define,  by induction,  the  family of  subtrees 
\[  \T_1=\{\X\}\qquad  \T_{n+1}=\T_n\cup \set{I},\] 
 where  $I$ is the oldest cell in $\T\setminus \T_n$. Note that,  by
 construction,  $I\in \Lm(\T_n)$.  Since
 $\T$ is finite by assumption,  $\T_{N}=\T$ with $N=\sharp \T$.

We now prove by induction on $n=1,\ldots,N$ that $\Lambda(\mathcal
T_n)$ is a partition. If $n=1$
\[ \Lambda(\T_1)=\set{I\in\MT | I\neq\X,\mathcal P(I)=\X}=\mathcal C(\X)= \Lm_1\]
which is a partition by assumption and, by~\eqref{eq:22}, it satisfies
\eqref{eq:cardinality}. 
Assume that  $\Lm(\T_n)$ is  
partition satisfying \eqref{eq:cardinality} with $\T=\T_n$. By
construction, $\T_{n+1}=\T_n\cup I$  with  $I\in \Lm(\T_n)$, then 
\[ \Lm(\T_{n+1})=\left(\bigcup_{J\in C(I)} \set{J} \right)\bigcup (\Lm(\T_{n})\setminus \set{I})\]
which is a partition since $I= \cup_{J\in C(I)} J$ and
\[ \sharp\Lm(\T_{n+1})\leq  a + \sharp \Lm(\T_{n})-1 \leq a+ (a-1) \sharp
\T_{n} = (a-1) (\sharp \T_{n}+1) +1=(a-1) (\sharp \T_{n+1}) +1,\]
so that \eqref{eq:cardinality} holds true with $\T=\T_{n+1}$.
\end{proof}

We observe that, given a cell $I$, by definition of~$\diam{I}$,  it exists a measurable subset $J\subset I$ such that $\rho(J)=0$ and $\operatorname{diam}(I\setminus
J)= \diam{I} $. Furthermore,
\[\PP{ \norm{X_i-X_j}>\diam{I}| X_i,X_j\in I} = 0 \qquad i,j=1,\ldots,n.\] 
The following simple lemmas will be useful. 
\begin{lemma}\label{lem:diam}
Given a cell $I\in\MT$ with $\rho(I)>0$
\begin{subequations}
  \begin{alignat}{1}
    \norm{x-c_I} & \leq
    \diam{I} \qquad \rho-\text{almost all } x\in I\label{eq:63}\\
\norm{ c_J-c_K} & \leq \diam{I}  \label{eq:58} \qquad J,K\in \mathcal C(I)\\
    \norm{\wh{c}_I-c_I} & \leq
    \begin{cases}
      \diam{I}  & n_I \neq 0 \\
      \operatorname{diam(\X)} &  n_I = 0 
    \end{cases}
    \quad \text{ almost surely}
\label{eq:54} 
  \end{alignat}
\end{subequations}
\end{lemma}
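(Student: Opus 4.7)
The common thread in all three bounds is the very definition of $\diam{I}$: there exists a measurable $J_0 \subset I$ with $\rho(J_0) = 0$ and $\operatorname{diam}(I \setminus J_0) = \diam{I}$. Any expectation against $\rho|_I$ is unchanged after excising $J_0$, and every i.i.d.\ draw from $\rho$ lies in $\X \setminus J_0$ almost surely. The plan is to exploit this observation in each of the three parts, reducing every bound to a single application of the triangle inequality for a Bochner integral (or its discrete analogue) combined with the pointwise bound $\|y - z\| \le \diam{I}$ on $I \setminus J_0$.

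For~\eqref{eq:63}, I would fix $x \in I \setminus J_0$ and write
\[
c_I - x \;=\; \frac{1}{\rho_I}\int_{I \setminus J_0} (y - x)\, d\rho(y),
\]
pass to norms and apply ``$\|\int U\,d\mu\| \le \int \|U\|\,d\mu$''. Since $y, x \in I \setminus J_0$, we get $\|y - x\| \le \diam{I}$ and hence $\|x - c_I\| \le \diam{I}$ for $\rho$-a.e.\ $x \in I$.

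For~\eqref{eq:58}, I would apply the same trick twice. In the generic case $\rho(J), \rho(K) > 0$, I write
\[
c_J - c_K \;=\; \frac{1}{\rho_J \rho_K} \int_{J \setminus J_0}\!\int_{K \setminus J_0} (y - z)\, d\rho(z)\, d\rho(y),
\]
push norms through the double integral, and use $J, K \subset I$ to bound $\|y - z\| \le \operatorname{diam}(I \setminus J_0) = \diam{I}$. When one of the children carries no $\rho$-mass, the corresponding centre is the user-chosen $x^*$; as suggested in the remark following~\eqref{defcents} I would choose these points inside $I$, and the bound then follows from a one-sided version of the same estimate.

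For~\eqref{eq:54}, when $n_I \ne 0$ the data points contributing to $\wh c_I$ satisfy $x_i \in I \setminus J_0$ almost surely, so
\[
\wh c_I - c_I \;=\; \frac{1}{n_I}\sum_{i:\, x_i \in I}(x_i - c_I), \qquad \|\wh c_I - c_I\| \;\le\; \frac{1}{n_I}\sum_{i:\, x_i \in I}\|x_i - c_I\| \;\le\; \diam{I},
\]
where the last inequality uses~\eqref{eq:63} termwise. When $n_I = 0$, $\wh c_I = \hat x_I^* \in \X$, while $c_I$ lies in the closed convex hull of $\X$, and the same Bochner/Jensen manipulation with $\X$ in place of $I \setminus J_0$ yields the crude bound $\operatorname{diam}(\X)$.

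The only subtlety worth monitoring is the compatibility of the ``$\rho$-a.e.'' statements with the ``a.s.'' statements over the sample: this reduces to $\Prob[x_i \in J_0] = \rho(J_0) = 0$ for each $i$, which is where the \emph{essential} diameter (as opposed to the ordinary diameter) is actually used. Everything else is a routine triangle inequality, so I do not anticipate any real obstacle.
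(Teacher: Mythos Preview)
Your argument is correct and follows a slightly different route from the paper. The paper works through the closed convex hull $C$ of $I_0:=I\setminus J_0$: since $\operatorname{diam}(C)=\operatorname{diam}(I_0)=\diam{I}$ and since $c_I,c_J,c_K,\wh c_I$ all lie in $C$ (by the convexity theorem for Bochner integrals and, in the empirical case, because the sample points almost surely lie in $I_0\subset C$), every distance between two such points is bounded by $\operatorname{diam}(C)$. You instead bypass the convex hull entirely and push the norm inside the integral (or average) directly, reducing each bound to a pointwise estimate $\|y-z\|\le\diam{I}$ on $I_0\times I_0$. Your approach is a touch more elementary in that it avoids quoting the diameter-preservation of the convex hull; the paper's approach is more uniform, handling all three inequalities with a single geometric observation and sidestepping your case split in~\eqref{eq:58}. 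Note that your treatment of the degenerate case $\rho(J)=0$ in~\eqref{eq:58} quietly imposes $x_J^*\in I$, which is stronger than the paper's standing assumption $x_J^*\in\X$; the paper's own one-line justification of~\eqref{eq:58} glosses over this same edge case, so this is not a defect particular to your argument.
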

\begin{proof}
  By definition of essential diameter, there exists
  $I_0\subset I$ such that $\operatorname{diam}(I_0)=\diam{I}$
  and  $\rho(I\setminus I_0)=0$. Let $C$ the closed convex hull of $I_0$.  
  It is known that $\operatorname{diam}(C)= \diam{I}$ and, 
  by convexity theorem, see \cite[Thm. 5.7.35]{schwartz},
\[
c_I=\frac{1}{\rho(I_0)} \int\limits_{I_0} x\,d\rho(x) \in C,
\]
so that~\eqref{eq:63} is clear.  Since $J,K\subset I$,
Eq.~\eqref{eq:58} is a consequence of the fact that  $c_J,c_K\in C$.
If $n_I=0$, $\wh{c}_I=\hat{x}_I^*\in\X$ so that~\eqref{eq:54} is
clear. If $n_I\neq 0$,  almost surely $\wh{c}_I\in C$ so that 
\[   \norm{\wh{c}_I-c_I} \leq  
  \operatorname{diam}(C)=\operatorname{diam}(I_0)=\diam{I}.\] 
\end{proof}

Given a cell $I\in\MT$,  the within-between decomposition of
the variance 
\begin{align}
  \label{eq:23}
  {\mathcal E}_I =  \sum_{J\in\mathcal C(I)} {\mathcal E}_J +  \sum_{J\in\mathcal C(I)} \rho_I \norm{c_J-c_I}^2,
\end{align}
implies
\begin{alignat}{1}
  \label{eq:22a}
   \eps_I^2& = \sum_{J\in\mathcal C(I)} \rho_I \norm{c_J-c_I}^2 
\end{alignat}
As a consequence we have the following decomposition.
\begin{lemma}\label{lem:series}
Given $I\in\MT$, for all $N\in\N$
\begin{align}
  \label{eq:18}
 {\mathcal E}_I= \sum_{k=0}^N \sum_{J\in\mathcal C^k(I)} \epsilon_J^2 +   
\sum_{J\in\mathcal C^{N+1}(I)} {\mathcal E}_J.
\end{align}
\end{lemma}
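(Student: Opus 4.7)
The identity \eqref{eq:18} is a telescoping decomposition of the local distortion $\Er_I$ into the contributions of the first $N+1$ generations of descendants of $I$, plus a remainder at generation $N+1$. My plan is to establish it by straightforward induction on $N \in \N$, with the sole substantive ingredient being the within-between variance decomposition \eqref{eq:23} together with the identification \eqref{eq:22a} of its between-groups term as $\eps_I^2$. For the base case $N=0$, the convention $\mathcal C^0(I) = \{I\}$ reduces the claim to $\Er_I = \eps_I^2 + \sum_{J \in \mathcal C(I)} \Er_J$, which is exactly what \eqref{eq:23} plus \eqref{eq:22a} provide after applying the latter to every cell in $\mathcal C^0(I)$.

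For the inductive step, assume \eqref{eq:18} holds at level $N$. I would apply the base-case identity separately to each $\Er_J$ appearing in the remainder $\sum_{J \in \mathcal C^{N+1}(I)} \Er_J$, obtaining a single sum $\sum_{J \in \mathcal C^{N+1}(I)} \eps_J^2$ plus a double sum $\sum_{J \in \mathcal C^{N+1}(I)} \sum_{K \in \mathcal C(J)} \Er_K$. This double sum collapses to $\sum_{K \in \mathcal C^{N+2}(I)} \Er_K$, since by definition $\mathcal C^{N+2}(I) = \mathcal C(\mathcal C^{N+1}(I))$ and this set is the disjoint union $\bigsqcup_{J \in \mathcal C^{N+1}(I)} \mathcal C(J)$ --- disjointness following from the fact that the cells in any partition $\Lm_j$ are pairwise disjoint and each cell's children all lie within it in $\Lm_{j+1}$. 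Absorbing the piece $\sum_{J \in \mathcal C^{N+1}(I)} \eps_J^2$ into the outer sum over $k$ promotes the formula from level $N$ to level $N+1$, completing the induction.

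There is no real obstacle here: the whole content is the one-step variance decomposition and the rest is pure bookkeeping on the iterated child relation. The only minor subtlety worth making explicit is the degenerate case of cells that do not further split, where per the convention stated at the beginning of Appendix A one takes $\mathcal C(I) = \{I\}$; in that case $\eps_I^2 = 0$ and \eqref{eq:23} becomes the tautology $\Er_I = \Er_I$, so the induction proceeds without modification. I would also note that \eqref{eq:18} is the engine behind Proposition~\ref{apprx}: letting $N \to \infty$ and using~\eqref{eq:6} together with~\eqref{eq:63} to kill the remainder gives $\Er_I = \sum_{k=0}^{\infty} \sum_{J \in \mathcal C^k(I)} \eps_J^2$, which is the starting point of the series argument in the proof of that proposition.
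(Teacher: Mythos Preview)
Your proof is correct and follows essentially the same route as the paper: induction on $N$, with the base case given by the within--between decomposition $\Er_I = \eps_I^2 + \sum_{J\in\mathcal C(I)} \Er_J$ and the inductive step obtained by applying that identity to each remainder term and collapsing the double sum via $\mathcal C^{N+2}(I)=\bigcup_{J\in\mathcal C^{N+1}(I)}\mathcal C(J)$. The extra remarks on non-splitting cells and the link to Proposition~\ref{apprx} are accurate and harmless, though the paper's own proof omits them.
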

\begin{proof}
The claim is clear for $N=0$. Assume that it holds true for $N$. Then, 
for any $J\in  \mathcal C^{N+1}(I)$
\[ {\mathcal E}_J= \epsilon_J^2 + \sum_{J'\in \mathcal C(J)} {\mathcal E}_{J'},\]
hence
\[ {\mathcal E}_I= \sum_{k=0}^N \sum_{J\in\mathcal C^k(I)} \epsilon_J^2 +
  \sum_{J\in\mathcal C^{N+1}(I)} \left( \epsilon_{J}^2 + \sum_{J'\in
      \mathcal C(J)} {\mathcal E}_{J'} \right),\]
by observing that a cell $J'\in \mathcal C(J)$ for some $J\in\mathcal C^{N+1}(I)$  if
 and only if $J'\in \mathcal C^{N+2}(I)$, so that~\eqref{eq:18} holds true for $N+1$.
\end{proof}

We now show that the set $\T_\eta$ defined by~\eqref{eq:46tris} is a
finite subtree. 
\begin{lemma}\label{lem:finite}
The family  $\T_\eta$ is a finite subtree of $\MT$.
\end{lemma}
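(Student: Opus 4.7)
My plan is to split the argument into three steps: handling the trivial case, checking the subtree property, and then proving finiteness via a uniform depth bound.

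First, if $\eps_I<\eta$ for every $I\in\MT$, then by definition $\T_\eta=\set{\X}$, which is obviously a finite subtree (using the convention $\mathcal P(\X)=\X$). So assume we are in the second case of \eqref{eq:46tris}. To see that $\T_\eta$ is a subtree, take $I\in\T_\eta$: there exists $J\in\MT$ with $J\subset I$ and $\eps_J\geq\eta$. Since $I\subset\mathcal P(I)$ we also have $J\subset \mathcal P(I)$, so $\mathcal P(I)\in \T_\eta$.

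The key step is finiteness, and the main idea is to control $\eps_I$ from above via the essential diameter. Using \eqref{eq:22a} and \eqref{eq:58} from Lemma~\ref{lem:diam} (both $c_I$ and $c_J$ for $J\in\mathcal C(I)$ lie in the closed convex hull of the essential part of $I$), we get
\[
\eps_I^2=\sum_{J\in\mathcal C(I)}\rho_I\norm{c_J-c_I}^2\leq a\,\rho_I\,\diam{I}^2\leq a\,\diam{I}^2.
\]
Combined with the geometric decay \eqref{eq:6}, $\diam{I}\leq C_2 b^{-j_I}$, this gives $\eps_I^2\leq aC_2^2 b^{-2j_I}$. So if $\eps_J\geq\eta$, then
\[
j_J\leq \frac{\ln(aC_2^2/\eta^2)}{2\ln b}=:j_*(\eta),
\]
a finite bound depending only on $\eta$ and $\MT$.

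Finally I would conclude by combining the depth bound with the existing notation from \eqref{eq:26a}. Let $S_\eta=\set{J\in\MT\mid \eps_J\geq\eta}$. Each $J\in S_\eta$ satisfies $j_J\leq j_*(\eta)$, so $S_\eta\subset\bigcup_{j\leq j_*(\eta)}\Lambda_j$, which is finite because every $\Lambda_j$ is a finite partition. Moreover, by \eqref{eq:26}, the condition $J\subset I$ for $J\neq I$ is equivalent to $J\in\mathcal C^k(I)$ for some $k\geq 1$, i.e.\ $I$ is an ancestor of $J$. Hence $I\in\T_\eta$ iff $I\in\T(J)$ for some $J\in S_\eta$, so $\T_\eta=\bigcup_{J\in S_\eta}\T(J)$, a finite union of finite chains, and \eqref{eq:26b} yields the explicit bound $\sharp\T_\eta\leq 1+\sharp S_\eta\cdot j_*(\eta)$.

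The only real obstacle is the upper bound on $\eps_I$: once we observe that $\eps_I^2\lesssim\diam{I}^2$ via the convex-hull argument of Lemma~\ref{lem:diam} and then invoke \eqref{eq:6}, the rest is bookkeeping with \eqref{eq:26a}--\eqref{eq:26b}.
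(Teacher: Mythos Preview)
Your proof is correct, but it takes a different route from the paper's. The paper does not invoke Assumption~\ref{apriori} at all: it observes from the telescoping identity~\eqref{eq:18} with $I=\X$ together with~\eqref{eq:16} that $\sum_{I\in\MT}\eps_I^2\leq \Er_\X\leq 1$, hence $S_\eta=\{I:\eps_I\geq\eta\}$ is finite simply because a summable family can have only finitely many terms above a fixed threshold; then $\T_\eta=\T(S_\eta)$ is finite by~\eqref{eq:26b}. Your argument instead uses the diameter decay~\eqref{eq:6} to produce a uniform depth bound $j_J\leq j_*(\eta)$ for $J\in S_\eta$, which is exactly the content of the later Lemma~\ref{max_depth}. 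The paper's approach is more self-contained (it needs only $\operatorname{diam}(\X)\leq 1$, not the structural hypothesis~\eqref{eq:6}), while yours yields an explicit depth cutoff that the paper derives separately anyway. A minor remark: you could have bypassed the convex-hull computation and the factor $a$ by citing~\eqref{eq:16} directly, which already gives $\eps_I^2\leq\diam{I}^2\rho_I\leq\diam{I}^2$.
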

\begin{proof}
If $\T_\eta=\set{\X}$, there is nothing to prove.  Otherwise, if $I\in
\T_\eta$, then by definition there exists $J\in \mathbb T$
such that $ J\subset I$ and $\eps_J\geq \eta$. Since $\mathcal
P(I)\supset I\supset J$, then $P(I)\in\T_\eta$, so that  $\T_\eta$ is
a subtree.

We now show that $\mathcal T_\eta$ is finite.  From~\eqref{eq:18} with
$I=\X$ and~\eqref{eq:16}
we get that for all $N\in\N$
\[\sum_{k=0}^N \sum_{J\in\mathcal C^k(\X)} \epsilon_J^2\leq   \ {\mathcal E}_{\X}
  \leq 1.\]
Then the series $\sum_{k=0}^{+\infty} \sum_{J\in\mathcal C^k(\X)}
\epsilon_J^2=\sum_{I\in\MT} \epsilon_J^2$ is sommable. Hence, the
set
\[
S_\eta=\{I\in\MT :  \epsilon_I\geq\eta,\}
\]
is finite. Furthermore, by construction
\begin{align}
  \label{eq:25}
  \T_\eta =\T(S_\eta)= \bigcup_{I\in S_\eta} \T(I).
\end{align}
Bound~\eqref{eq:26b} implies that $\T_\eta$ is finite.
\end{proof}

\begin{remark}\label{eta}
By~\eqref{eq:11b},
  \begin{align}
    \label{eq:16}
    \epsilon_I^2\leq \Er_I \leq \diam{I}^2 \rho_I \leq \operatorname{diam}(\X)^2\leq 1,
  \end{align}
  and $\eps_I< \Er_I\leq 1$ provided that $\eps_I>0$.  Furthermore,
  by~\eqref{eq:16} $\T_\eta = \Lambda_{\eta}=\set{\X}$ for all $\eta\geq 1$ and
  $\mathcal E( P_{\Lambda_{\eta}})=\Er_\X\leq 1$.  By the same argument
  $\hat{\T}_\eta = \hat{\Lambda}_{\eta}=\set{\X}$.  Hence,   it is
  enough to consider the case $0<\eta<1$. 
\end{remark}

\subsection{$A$-term: Approximation error}
In this section, we bound the approximation error, which is based in
an estimation of the number of cells $I$ such that $\eps_I$ is  big
enough. 
\begin{lemma}
Given a partition $\Lm \subset \MT$,  given $0<\eta<1$,
\begin{align}
  \label{eq:2}
 \sharp \{ I \in\Lm : \eps_I\geq \eta\} \leq \sharp \{ I \in\Lm : {\mathcal E}_I\geq \eta^2\} \lesssim   \eta^{-\frac{2}{2s+1}}.
\end{align}
\end{lemma}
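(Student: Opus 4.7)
The plan is to break the chain of inequalities into two independent steps, one elementary and one using Assumption~\ref{apriori}.

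For the first inequality, I would simply invoke~\eqref{eq:16} from Remark~\ref{eta}, which gives $\eps_I^2\leq \Er_I$ for every cell $I\in\MT$. Consequently, if $\eps_I\geq \eta$ then $\Er_I\geq \eps_I^2\geq \eta^2$, so the first set is contained in the second and the inequality on cardinalities follows.

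For the second inequality, the key observation is that $\Er_I$ admits a straightforward geometric upper bound. Since $c_I$ lies in the closed convex hull of the essential support of $\rho$ restricted to $I$ (as in Lemma~\ref{lem:diam}), we have $\|x-c_I\|\leq \diam{I}$ for $\rho$-almost every $x\in I$, so
\[
\Er_I=\int_I \norm{x-c_I}^2\, d\rho(x)\leq \diam{I}^2\,\rho_I.
\]
Combining this with the regularity bound $\diam{I}\leq C_1\,\rho_I^s$ from~\eqref{eq:1} of Assumption~\ref{apriori} yields $\Er_I\leq C_1^2\,\rho_I^{2s+1}$. Hence any cell $I\in\Lm$ with $\Er_I\geq \eta^2$ must satisfy $\rho_I\gtrsim \eta^{2/(2s+1)}$.

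Finally, since $\Lm$ is a partition of $\X$, the cells in $\Lm$ are pairwise disjoint and $\sum_{I\in\Lm}\rho_I\leq \rho(\X)=1$. Therefore the number of cells $I\in\Lm$ with $\rho_I\gtrsim \eta^{2/(2s+1)}$ is at most a constant multiple of $\eta^{-2/(2s+1)}$, giving~\eqref{eq:2}. I do not anticipate any real obstacle: the argument is a clean combination of the within-cell distortion bound, the regularity Assumption~\ref{apriori}, and a pigeonhole on the total mass. The only mild care needed is handling cells with $\rho_I=0$ (where $c_I$ is arbitrary but $\Er_I=0$, so they are automatically excluded from the set on the right-hand side).
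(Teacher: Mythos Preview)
Your proof is correct and actually cleaner than the paper's. Both arguments share the same opening moves: the first inequality via~\eqref{eq:16}, and the bound $\Er_I\leq C_1^2\rho_I^{2s+1}$ obtained by combining $\Er_I\leq \diam{I}^2\rho_I$ with~\eqref{eq:1}. The divergence is in the final counting step. You invert the bound directly to get $\rho_I\gtrsim \eta^{2/(2s+1)}$ for each cell in the set, and then pigeonhole against the total mass $\sum_{I\in\Lm}\rho_I=1$. The paper instead writes $\eta^{2/q}N_+\leq \sum_{I\in\Lm_+}\Er_I^{1/q}\leq C^{2/q}\sum_{I\in\Lm_+}\rho_I^{(2s+1)/q}$, applies H\"older with conjugate exponents $p,q$, and then tunes $q=2s+2$ so that $p(2s+1)/q=1$ and $\sum\rho_I\leq 1$ closes the estimate. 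Your route is more elementary and transparent; the paper's H\"older argument is slightly more flexible in principle (it could accommodate other moment-type constraints), but that flexibility is not used here, so nothing is lost by your simplification.
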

\begin{proof}
First inequality in~\eqref{eq:2} is a direct consequence
of~\eqref{eq:16}.  By~\eqref{eq:1} there exists $C>0$ such that for
all $I\in\MT$ 
\[ \diam{I} \leq C \rho_I^s\qquad I\in\Lm,\]
then, by~\eqref{eq:16}
\[ {\mathcal E}_I \leq C^2 \rho_I^{2s+1}.\]
Set $\Lm_+=   \{ I \in\Lm : {\mathcal E}_I\geq \eta^2\}$ and $N_+= \sharp \Lm_+$.
Fix $q\geq 1$, clearly
\begin{align}
\eta^{\frac{2}{q}} N_+ \leq \sum_{I\in\Lm_+} {\mathcal E}_I^{\frac{1}{q}}\leq
C^{\frac{2}{q}} \sum_{I\in\Lm_+}\rho_I^{\frac{2s+1}{q}}  
\label{eq:3}.
\end{align}
The Holder inequality with $1/p+1/q=1$ gives
\begin{alignat*}{1}
  \sum_{I\in\Lm_+}  \rho_I^{\frac{2s+1}{q}} & \leq \left(\sum_{I\in\Lm_+}
  \rho_I^{\frac{p(2s+1)}{q}} \right)^{\frac{1}{p}} 
  \left(\sum_{I\in\Lm_+} 1^q\right)^{\frac{1}{q}}\leq \left(\sum_{I\in\Lm} \rho_I \right)^{\frac{2s+1}{2s+2}}
 N_+^{\frac{1}{2s+2}} =  N_+^{\frac{1}{2s+2}} ,
\end{alignat*}
where the last inequality follows by choosing $\frac{p(2s+1)}{q}=1$,
\textit{ i.e.} $q=2s+2$.  By replacing in~\eqref{eq:3}
\[ N_+^{1-\frac{1}{2s+2}} \leq   C^{\frac{1}{s+1}}
  \eta^{-\frac{2}{2s+2}} 
\]
and, since $1-\frac{1}{2s+2}=\frac{2s+1}{2s+2}$, we get~\eqref{eq:2}.
\end{proof}
We first observe that the proof of the above lemma only depends on
Assumption~\eqref{eq:1} and the constant in the
inequality~\eqref{eq:2} only depends on the constant in~\eqref{eq:1},
denoted by $C$ in the proof.    
Furthermore, without
Assumption~\eqref{eq:1} we always have the following bound
\[
\sharp \{ I \in\Lm : {\mathcal E}_I\geq
\eta^2\}\leq \sharp \{ I \in\Lm : {\mathcal E}_I\geq
\eta^2\}  \leq \eta^{-2},
\]   
where the first inequality is consequence of~\eqref{eq:16}  and the
last bounds is due to~\eqref{eq:23} with $I=\X$ 
 \begin{alignat*}{1}
 1\geq {\mathcal E}_{\X} \geq \sum_{I\in\Lm} {\mathcal E}_I \geq
 \sum_{I\in\Lm,{\mathcal E}_I\geq\eta^2} {\mathcal E}_I = \eta^2\,\sharp \{ I \in\Lm : {\mathcal E}_I\geq  \eta^2\}.
 \end{alignat*} 
We recall that  $\Lm_\eta=\Lm( T_\eta)$ is the family of  the
corresponding outer leaves, which is a partition of $\X$ by
Lemma~\ref{lem:wba}. 
In order to bound the cardinality of $\Lm_\eta $ we need an auxiliary
lemma based on Assumption~\eqref{eq:6}. 
\begin{lemma}\label{max_depth}
Given $\eta>0$, set
  \begin{align}
j_\eta = \sup\{ j_I\in\N \mid I\in \MT  \text{  and } \eps_I \geq \eta\},\label{eq:17}
\end{align}
then
\begin{align}
  \label{eq:4}
j_\eta \lesssim \ln
  (\frac{2}{\eta}).
\end{align}
\end{lemma}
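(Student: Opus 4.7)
The plan is to combine the trivial bound on $\eps_I$ in terms of the diameter (Remark~\ref{eta}, Eq.~\eqref{eq:16}) with the geometric decay of diameters guaranteed by Assumption~\eqref{eq:6}, and then solve for $j_I$.

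Concretely, I would proceed as follows. First, recall that for every $I\in\MT$, Eq.~\eqref{eq:16} gives
\[
\eps_I^2 \;\leq\; \Er_I \;\leq\; \diam{I}^2\, \rho_I \;\leq\; \diam{I}^2 .
\]
Next, invoke Assumption~\eqref{eq:6}, namely $\diam{I} \leq C_2\, b^{-j_I}$, to deduce
\[
\eps_I \;\leq\; C_2\, b^{-j_I} .
\]
Now fix any $I\in\MT$ with $\eps_I\geq \eta$. Combining the two inequalities yields $\eta \leq C_2 b^{-j_I}$, i.e.\ $b^{j_I} \leq C_2/\eta$, which on taking logarithms gives
\[
j_I \;\leq\; \frac{\ln(C_2/\eta)}{\ln b}.
\]
Taking the supremum over all such $I$ produces the bound $j_\eta \leq \ln(C_2/\eta)/\ln b$.

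Finally, it only remains to absorb the constant $C_2$ and rewrite the right-hand side in the stated form. Since by Remark~\ref{eta} it suffices to consider $0<\eta<1$, we have $\ln(2/\eta)\geq \ln 2>0$, and the elementary estimate
\[
\ln(C_2/\eta) \;=\; \ln(C_2/2) + \ln(2/\eta) \;\leq\; \bigl(1 + \tfrac{|\ln(C_2/2)|}{\ln 2}\bigr)\,\ln(2/\eta)
\]
shows that $\ln(C_2/\eta) \lesssim \ln(2/\eta)$ with a constant depending only on $C_2$, hence only on $\MT$. Therefore $j_\eta \lesssim \ln(2/\eta)$, which is~\eqref{eq:4}. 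There is no real obstacle here; the only care needed is to make sure the proportionality constant depends only on $\MT$, which is automatic since $C_2$ and $b$ come from Assumption~\ref{apriori}.
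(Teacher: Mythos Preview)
Your proof is correct and follows essentially the same route as the paper: combine $\eps_I^2\leq\diam{I}^2\rho_I\leq\diam{I}^2$ from \eqref{eq:16} with the decay $\diam{I}\leq C_2 b^{-j_I}$ from \eqref{eq:6}, solve for $j_I$, and absorb the constant into the $\lesssim$. The only cosmetic difference is that the paper first disposes of the trivial case where the set $\{I:\eps_I\geq\eta\}$ is empty (so $j_\eta=0$ by convention), which you implicitly cover via Remark~\ref{eta}.
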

If $\{ j_I\in\N \mid I\in \MT  \text{  and } \eps_I \geq
\eta\}=\emptyset$ we set $j_\eta=0$.

\begin{proof}
If $\T_\eta=\set{\X}$ or $\{ j_I\in\N \mid I\in \MT  \text{  and } \eps_I \geq
\eta\}=\emptyset$, then  $j_\eta=0$, so that the claim
is evident. If  $\T_\eta\neq\set{\X}$, then $0<\eta< 1$.
Take  $I\in \MT$ such that $\eps_I \geq\eta$. By~\eqref{eq:16}
and~\eqref{eq:6} 
\begin{alignat*}{1}
  \eta^2 &\leq \eps_I^2 \leq \rho_I \diam{I}^2  \leq C_2  b ^{-2 j_I}.
\end{alignat*}
Hence
\[ j_I \leq \frac{1}{\ln b} \ln (\frac{1}{\eta}) +\frac{1}{2\ln b}
  \ln C_2 \leq E \ln (\frac{2}{\eta}) ,\] 
where $E=\max\set{1,\frac{\ln C_2}{2\ln 2} }/ \ln b$.
\end{proof}

\begin{proposition}
Given $\eta>0$, for all $\sigma <s$,
\begin{subequations}
  \begin{alignat}{1}
\sharp\T_\eta & \lesssim \,  \eta^{-\frac{2}{2s+1}}
    \ln(\frac{2}{\eta}) \lesssim \,  \eta^{-\frac{2}{2\sigma+1}}\label{eq:13bis} \\
    \sharp\Lm_\eta & \lesssim \,  \eta^{-\frac{2}{2s+1}}\ln(\frac{2}{\eta}) \lesssim \,  \eta^{-\frac{2}{2\sigma+1}}
    \label{eq:13},
  \end{alignat}
\end{subequations}
where the constants in $\lesssim$ also depend on $\sigma$.
\end{proposition}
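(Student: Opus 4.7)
The plan is to count the cells of $\T_\eta$ level-by-level rather than by indexing over $S_\eta = \{I \in \MT : \eps_I \geq \eta\}$, so that the logarithmic factor appears only once, through the depth bound from Lemma~\ref{max_depth}. By~\eqref{eq:25} one has $\T_\eta = \T(S_\eta) = \bigcup_{I \in S_\eta} \T(I)$, and since $\T(I) = \{\mathcal P^k(I) \mid k = 0,\ldots,j_I\}$, a cell $J \in \T_\eta \cap \Lambda_j$ is precisely one admitting at least one descendant $K$ with $\eps_K \geq \eta$.

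The key observation is the following upgrade from the thresholding condition on a descendant to a condition on $J$ itself: given such $J$ with descendant $K \in S_\eta$, applying~\eqref{eq:18} with $N$ large enough that $K \in \mathcal C^k(J)$ for some $k \leq N$, and using that all $\eps_L^2$ and ${\mathcal E}_L$ are nonnegative, yields ${\mathcal E}_J \geq \eps_K^2 \geq \eta^2$. Therefore $\T_\eta \cap \Lambda_j \subseteq \{J \in \Lambda_j : {\mathcal E}_J \geq \eta^2\}$, and since $\Lambda_j$ is a partition of $\X$, the already-established bound~\eqref{eq:2} applies to give $\sharp(\T_\eta \cap \Lambda_j) \lesssim \eta^{-2/(2s+1)}$, with a constant depending only on~Assumption~\ref{apriori} and uniform in $j$.

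Next, by Lemma~\ref{max_depth}, every cell of $\T_\eta$ has depth at most $j_\eta \lesssim \ln(2/\eta)$, so
\begin{equation*}
  \sharp \T_\eta \;=\; \sum_{j=0}^{j_\eta} \sharp(\T_\eta \cap \Lambda_j) \;\lesssim\; (j_\eta+1)\,\eta^{-2/(2s+1)} \;\lesssim\; \eta^{-2/(2s+1)}\,\ln(2/\eta),
\end{equation*}
which is the first inequality in~\eqref{eq:13bis}. The second inequality follows from $0 < \sigma < s$ via the elementary estimate $\ln(2/\eta) \leq c_\sigma\,\eta^{-(2/(2\sigma+1) - 2/(2s+1))}$ valid for $\eta \in (0,1]$, so that the logarithm is absorbed into the power at the price of a constant depending on~$\sigma$; the case $\eta \geq 1$ being trivial by Remark~\ref{eta}. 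Finally,~\eqref{eq:13} for $\sharp \Lm_\eta$ is immediate from Lemma~\ref{lem:wba}, which gives $\sharp \Lm_\eta \leq a\,\sharp \T_\eta$.

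The main subtlety to avoid is the naive route via~\eqref{eq:26b}, namely $\sharp \T_\eta \leq 1 + \sum_{I \in S_\eta} j_I \leq (j_\eta+1)\,\sharp S_\eta$, combined with $\sharp S_\eta = \sum_j \sharp(S_\eta \cap \Lambda_j) \lesssim (j_\eta+1)\,\eta^{-2/(2s+1)}$; this produces a spurious $\ln^2(2/\eta)$ factor. The point is that transferring the thresholding condition from $\eps_K \geq \eta$ on a descendant $K$ to ${\mathcal E}_J \geq \eta^2$ on the ancestor $J$ itself, via the series decomposition of Lemma~\ref{lem:series}, allows~\eqref{eq:2} to be applied directly at each level of $\T_\eta$, leaving only the depth bound as the source of the logarithm.
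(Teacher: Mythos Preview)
Your proof is correct and reaches the same bound, but by a different route than the paper. The paper isolates the set $\Upsilon\subset\T_\eta$ of \emph{deepest} cells with $\eps_I\geq\eta$ (those with no proper descendant in $S_\eta$); these cells are pairwise disjoint, hence sit inside a single partition, and~\eqref{eq:2} yields $\sharp\Upsilon\lesssim\eta^{-2/(2s+1)}$. One then checks $\T_\eta=\T(\Upsilon)$ and uses~\eqref{eq:26b} to get $\sharp\T_\eta\leq 1+\sharp\Upsilon\cdot j_\eta$. Your approach instead bypasses $\Upsilon$ and the path-counting of~\eqref{eq:26b} entirely: you observe, via Lemma~\ref{lem:series}, that every $J\in\T_\eta$ satisfies $\mathcal E_J\geq\eta^2$, so~\eqref{eq:2} applies directly to $\T_\eta\cap\Lambda_j$ for each level $j$, and summing over the $O(j_\eta)$ relevant levels gives the result. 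Your argument is arguably more direct and makes the structure ``(level-wise bound) $\times$ (depth bound)'' completely transparent; the paper's version, on the other hand, identifies a minimal generating set for $\T_\eta$, which is a useful structural observation in its own right. Both routes handle the second inequality in~\eqref{eq:13bis} and the passage to~\eqref{eq:13} identically.
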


\begin{proof}
As observed in Remark~\ref{eta}, we can assume that $0<\eta<1$.   
Let 
\[\Upsilon=\set{ I\in\T_\eta\mid \epsilon_I\geq\eta \text{ and }\epsilon_J<\eta \,
    \forall J\in\mathcal C^k(I),\ k\geq 1  }.\]  
By~\eqref{eq:26}  the elements of $\Upsilon$ are disjoint. Let
$\Lm\subset\MT$ be a partition such that $\Upsilon\subset \Lm$. Hence,
by~\eqref{eq:2} 
\begin{align}
 \sharp\Upsilon \leq \sharp\set{I\in\Lm\mid \eps_I\geq \eta }\lesssim \eta^{-\frac{2}{2s+1}}\label{eq:28}.
\end{align}
We claim that
\[ \T_\eta = \bigcup_{ J\in \Upsilon} \mathcal \T(J)=\T(\Upsilon).\]
By construction $\T_\eta \supset \bigcup_{ J\in \Upsilon} \mathcal
\T(J)$. To prove the opposite inclusion, fix $I\in \T_\eta$, then 
there exists  $J_1\in\MT$ such
that $J_1\subset I$ and $\epsilon_{J_1}\geq \eta$. If $J_1\in\Upsilon$,
then $I\in\T(J_1)$. Otherwise, there exists $J_2\subset J_1\subset I$
and $\epsilon_{J_2}\geq \eta$. If $J_2 \in\Upsilon$, $I\in\T(J_2)$. 
Otherwise, because of $\T_\eta$ is finite,  we can repeat the procedure until we
get  $J_k \in\Upsilon$ 
such that $J_k\subset I$, then $I\in\T(J_k)$ and the claim is
proven. By~\eqref{eq:26b}
\[ \sharp \T_\eta =\sharp \T(\Upsilon)\leq1+ \sum _{ J\in \Upsilon} j_J 
 \leq 1+\sharp \Upsilon\  j_\eta \lesssim  \eta^{-\frac{2}{2s+1}} \ln(\frac{2}{\eta}),\] 
since, by definition, $j_J\leq j_\eta$ and the last inequality is due to~\eqref{eq:28} and~\eqref{eq:4}.
This shows  the first inequality in~\eqref{eq:13bis}.  Since
 $\sigma<s$, for some $\delta>0$
\[
\eta^{-\frac{2}{2s+1}} \ln(\frac{2}{\eta}) = \eta^{-\frac{2}{2\sigma
    +1}} \eta^{\delta}\ln(\frac{2}{\eta}) \leq C \eta^{-\frac{2}{2\sigma
    +1}}
\]
where $C=\sup_{ 0<\eta\leq 1} \eta^\delta \ln(\frac{2}{\eta})$, which is
finite, since $\lim_{\eta\to 0} \eta^\delta \ln(\frac{2}{\eta})=0$.
Bound~\eqref{eq:13} is a direct consequence of~\eqref{eq:cardinality}.
\end{proof}
\begin{remark}\label{log}
In the following, for sake of clarity we bound the logarithmic
dependence on $\eta$ by considering $\sigma<s$. However our results
can be extended to $\sigma=s$ by adding a logarithmic factor, as
in~\eqref{eq:13bis} and~\eqref{eq:13}.  
\end{remark}

\subsection{$C$-term: sample error}
The following result bounds the sample error for a given partition.
\begin{proposition}\label{prop:3}
Fix a data-independent subset $\mathcal I\subset\MT$ of cells and
$N>0$. Given a partition
$\wh{\Lm}\subset\mathcal I$ (possibly depending on the data) such that
$\sharp \wh{\Lm}\leq N$, for any $t>0$,
\begin{align}
  \label{eq:20}
 \PP{\int_{\X} \norm{P_{\wh{\Lm} }(x) -  \wh{P}_{\wh{\Lm}
   }(x)}^2\,d\rho(x) > t } \leq 2\,\sharp\mathcal I \exp\left( -
  \frac{nt }{8N} \right) .
\end{align}
\end{proposition}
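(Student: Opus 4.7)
The first move is to exploit that $\wh\Lm$ is a partition of $\X$ to rewrite
\[
\int_\X \norm{P_{\wh\Lm}(x) - \wh{P}_{\wh\Lm}(x)}^2\, d\rho(x) = \sum_{I\in\wh\Lm} \rho_I \norm{c_I - \wh{c}_I}^2.
\]
Since $\wh\Lm\subset\mathcal I$ and $\sharp \wh\Lm\leq N$, the sum is bounded by $N\cdot \max_{I\in\mathcal I}\rho_I\norm{c_I-\wh{c}_I}^2$. Hence, if every $I\in\mathcal I$ satisfies $\rho_I\norm{c_I-\wh{c}_I}^2 \leq t/N$, then the integral is at most $t$, and an elementary union bound reduces the task to proving a per-cell estimate of the form
\[
\PP{\rho_I\norm{c_I-\wh{c}_I}^2 > t/N} \;\lesssim\; 2\exp\!\left(-\tfrac{nt}{8N}\right) \qquad \text{for each fixed } I\in\mathcal I.
\]

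I would then establish this per-cell bound by splitting into two regimes according to the size of $\rho_I$. In the \emph{small-mass} regime $\rho_I \leq t/N$, Lemma~\ref{lem:diam} together with $\operatorname{diam}(\X)\leq 1$ gives $\norm{c_I-\wh{c}_I}\leq 1$ almost surely, so that $\rho_I\norm{c_I-\wh{c}_I}^2\leq \rho_I\leq t/N$ deterministically, and there is nothing to prove. In the \emph{large-mass} regime $\rho_I > t/N$, I introduce the mean-zero Hilbert space valued random vectors $Z_i := (X_i - c_I)\mathbbm 1_I(X_i)$, for which Lemma~\ref{lem:diam} supplies $\norm{Z_i}\leq \diam{I}\leq 1$ a.s., together with the variance estimate $\E\norm{Z_i}^2 \leq \diam{I}^2\rho_I$. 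Noting that $\tfrac{1}{n}\sum_i Z_i = \tfrac{n_I}{n}(\wh{c}_I - c_I)$ when $n_I\geq 1$, on the event $\{n_I\geq n\rho_I/2\}$ I obtain
\[
\rho_I\norm{\wh{c}_I - c_I}^2 \;\leq\; \frac{4}{\rho_I}\Bigl\|\tfrac{1}{n}\sum_{i=1}^n Z_i\Bigr\|^2,
\]
so the target event is contained in $\bigl\{\|\bar Z_n\| > \tfrac{1}{2}\sqrt{\rho_I t/N}\bigr\}$.

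A Hilbert-space Bernstein/Bennett inequality applied to $Z_i$, with the sharper variance proxy $\diam{I}^2\rho_I$ rather than the naive sup-norm bound, yields precisely an $\exp(-cnt/(N\diam{I}^2))\leq \exp(-cnt/N)$ decay in the variance-dominated regime, which the condition $\rho_I>t/N$ (together with $\diam{I}\leq 1$) guarantees we are in. The complementary event $\{n_I < n\rho_I/2\}$ is a one-sided Chernoff tail for the Bernoulli sum $n_I = \sum_i \mathbbm 1_I(X_i)$, giving a bound $\exp(-n\rho_I/8)\leq \exp(-nt/(8N))$ since $\rho_I>t/N$. Summing the two contributions and applying the union bound over $I\in\mathcal I$ yields~\eqref{eq:20} with the stated constants.

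\paragraph{Main obstacle.} The delicate point is the interaction between the randomness of $n_I$ and the lack of concentration of $\wh{c}_I$ when the cell $I$ sees few or no samples (in which case $\wh{c}_I$ is the fallback $\hat x^*_I$ and can be arbitrary). The key realization is that the multiplicative factor $\rho_I$ in front of $\norm{c_I-\wh{c}_I}^2$ absorbs exactly this pathology: whenever $\rho_I$ is too small to allow a useful concentration statement, it is already small enough that the contribution $\rho_I\norm{c_I-\wh{c}_I}^2$ lies below the tolerance $t/N$ for free. Handling the remaining (large-mass) regime requires the variance-sensitive Bernstein bound rather than a Hoeffding-type one, since only the variance $\diam{I}^2\rho_I$ (and not the sup-norm square $\diam{I}^2$) produces an exponent independent of $\rho_I$ after the $\rho_I$ is absorbed by the scaling $\epsilon=\tfrac{1}{2}\sqrt{\rho_I t/N}$.
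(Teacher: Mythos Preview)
Your strategy is correct and delivers the bound (with a somewhat worse constant), but the paper takes a different and slightly cleaner route. Rather than splitting on the mass of $I$ and introducing the auxiliary Chernoff event $\{n_I<n\rho_I/2\}$, the paper conditions on $n_I$: writing
\[
\Prob\bigl[\,\|\wh c_I-c_I\|>s\,\bigr]=\sum_{k=0}^n\binom{n}{k}\rho_I^k(1-\rho_I)^{n-k}\,\Prob\bigl[\,\|\wh c_I-c_I\|>s\mid n_I=k\,\bigr],
\]
it applies the Hoeffding-type vector inequality~\eqref{92b} conditionally on $\{n_I=k\}$ (the $k$ points in $I$ being i.i.d.\ with $\|X_i-c_I\|\le\diam I$ a.s.), obtaining $2\exp(-ks^2/(4\diam I^2))$; the binomial sum then collapses via $\sum_k\binom{n}{k}(\rho_Ie^{-u})^k(1-\rho_I)^{n-k}=(1-\rho_I(1-e^{-u}))^n\le e^{-n\rho_I(1-e^{-u})}$ and $1-e^{-u}\ge u/2$ for $u\le 1$, yielding directly $\Prob[\|\wh c_I-c_I\|>s]\le 2\exp(-n\rho_I s^2/8)$. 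This bypasses both your case split and your auxiliary Chernoff event, and lands exactly on the constant $1/8$; your route is more modular (off-the-shelf Bernstein plus Chernoff) but accumulates extra factor-of-two losses.

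Two small imprecisions in your write-up. First, the claim that $\rho_I>t/N$ together with $\diam I\le 1$ forces the variance-dominated Bernstein regime is not quite right: when $\diam I$ is small one may have $M\epsilon\gtrsim\Sigma^2$. This is not fatal, since in that regime the sub-exponential tail $\exp(-cn\epsilon/M)$ still gives an exponent $\gtrsim nt/N$ (indeed $\sqrt{\rho_I t/N}/\diam I\ge\sqrt{t/N}\cdot\sqrt{\rho_I}/\diam I\ge t/N$ using $\rho_I>t/N$ and $\diam I\le1$), but the argument needs this extra line. Second, after summing your two contributions you do not recover the stated constant $1/8$ exactly, so ``with the stated constants'' overstates what your method yields.
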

\begin{proof} 
Consider the following event
\[ \Omega =\bigcup_{I\in \mathcal I}  \set{ \sqrt{\rho_I}\norm{ \wh{c}_I-c_I}> t},\]
which is well-defined since $\mathcal I$ does not depend on the data
$X_1,\ldots,X_n$. By  union bound 
   \begin{align}
\Prob[ \Omega] \leq \sharp\mathcal I \sup_{\substack{I\in \mathcal I\\
    \rho_I>0}} \Prob[ 
 \norm{\wh{c}_I- c_I}> \frac{t}{\sqrt{\rho_I}}].\label{eq:21}
\end{align}
Fix $I\in\mathcal I$ with $\rho_I >0$. The tower property with
respect to the binomial random variable $n_I$ gives
\begin{alignat*}{1}
  \Prob[\norm{\wh{c}_I- c_I}> t]& =\sum_{k=0}^n \binom{n}{k}
  \rho_I^k (1-\rho_I)^{n-k} \,\Prob[\norm{\wh{c}_I- c_I}>t \mid n_I=k].
\end{alignat*}
Conditionally to the event $\set{n_I=k}$ with $k>0$, up to a
permutation of the indexes, we can assume that $X_1,\ldots,X_k\in I$
and $X_{k+1},\ldots,X_n\notin I$. Furthermore, 
\[ \wh{c}_I- c_I= \frac{1}{k} \sum_{i=1}^k (X_i - c_I) =  \frac{1}{k} \sum_{i=1}^k \xi_{i}\]
where $ \xi_{1},\ldots, \xi_{k}$ are independent zero mean
random vectors  bounded by $M=\diam{I}$ almost surely by~\eqref{eq:63}. 
Hence, by~\eqref{92b}
\[ \Prob[\norm{ \wh{c}_I- c_I}>t \mid n_I=k] \leq 2 \exp(-
  \frac{kt^2}{4\diam{I}^2} ) ,\]
which trivially holds true also if $k=0$.
Hence,
\begin{alignat*}{1}
  \Prob[\norm{\wh{c}_I- c_I}> t]& \leq 2 \sum_{k=0}^n \binom{n}{k}
  ( \rho_I \exp(-\frac{t^2}{4\diam{I}^2} ) )^k (1-\rho_I)^{n-k}   
  =  2 \left( 1 -\rho_I \left(1- \exp(-\frac{t^2}{4\diam{I}^2})\right)
 \right)^n  \\ &
 \leq 2 \exp\left( -n\rho_I \left(1- \exp(-\frac{t^2}{4\diam{I}^2})\right)
 \right) 
\end{alignat*}
where in the fourth line we use the bound $(1-\tau)^n\leq \exp(-n\tau)$ with
$0\leq \tau \leq 1$.  Since
\[ 1-\exp(-\tau)\geq \frac{\tau}{2}  \qquad \text{for all }\tau\leq 1,\] 
then,  for all $t\leq \diam{I}$,
  \begin{align}
\Prob[\norm{\wh{c}_I- c_I}> t] \leq 2 \exp\left( -n\rho_I
    \frac{t^2}{8\diam{I}^2} \right) \leq 2 \exp\left( -n\rho_I
    \frac{t^2}{8} \right).\label{eq:61} 
\end{align}
If $\diam{I}<t\leq \operatorname{diam}{\X}\leq 1$, by~\eqref{eq:54},
clearly $\Prob[\norm{\wh{c}_I- c_I}>  t\mid n_I>0] =0$, so that
\[\Prob[\norm{\wh{c}_I- c_I}> t]= \Prob[\norm{\wh{x}^*_I- c_I}> t]\mid
  n_I=0] \PP{n_I=0} \leq  (1-\rho_I)^n \leq   \exp(-n\rho_I) \leq 2\exp\left( -n\rho_I
    \frac{t^2}{8} \right),
\]
where the last bound holds true for any $t\leq 2\sqrt{2}$. 
Finally, if $t>\operatorname{diam}{\X}$,  as above
\[
\Prob[\norm{\wh{c}_I- c_I}> t]= \Prob[\norm{\wh{x}^*_I- c_I}> t]\mid
  n_I=0] \PP{n_I=0} =0
  \]
since $\wh{x}^*_I, c_I\in\X$, compare with~\eqref{eq:54}.
It follows that~\eqref{eq:61} holds
  true for all $t>0$. Summarizing, from~\eqref{eq:21} we get
\[
\Prob[ \Omega] \leq  2\,\sharp\mathcal I \exp\left( -\frac{nt^2}{8}
  \right).
\] 
Since $\wh{\Lm}\subset\mathcal I$,  on the complement of $\Omega$,
\begin{alignat*}{1}
   \int_{\X} \norm{P_{\wh{\Lm} }(x) -  \wh{P}_{\wh{\Lm}
     }(x)}^2\,d\rho(x)  =\sum_{\substack{I\in\wh{\Lm}\\\rho_I>0}} \rho_I \norm{\wh{c}_I- c_I
     }^2  \leq N t^2,
\end{alignat*}
and bound~\eqref{eq:20} follows by replacing $t$ with $\sqrt{t/N}$.
\end{proof}

\begin{remark}\label{rem:choice}
By inspecting the proof, it is possible to check that the assumption
$\hat{x_I}^*\in\X$ is needed only in this proposition and it can be
replaced by the condition that $\inf_{x\in\X} \norm{\hat{x}_I^*- x} \leq 1 $, so that
$\norm{\hat{x}_I^*- c_I} \leq 2$.  
 \end{remark}

\subsection{$B$ and $D$ terms: Stability of $P_{\Lm}$ with respect to
  the partition.} 

The following result is well-known.

\begin{lemma}
 Given a cell $I\in\MT$ with $\rho_I>0$, for all $t>0$
 \begin{align}
   \label{eq:41}
   \PP{ \abs{ \frac{n_I}{n} - \rho_I}\geq \rho_I t} \leq 2
   \exp\left(-\frac{n\rho_It^2}{2(1+t/3)}\right) \leq 2
   \exp\left(-\frac{n\rho_It^2}{M_I}\right)  ,
 \end{align}
where $M_I=2/3\max\set{4,(1+2\rho_I)/\rho_1}$.
\end{lemma}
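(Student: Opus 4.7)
The statement is a binomial concentration bound in disguise, so the plan is to reduce it to Bernstein's inequality and then verify the second form by a small case analysis that eliminates the $1-\rho_I$ factor using the a priori range of $n_I/n$.

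First I would observe that $n_I=\sum_{i=1}^n \mathbbm 1_I(X_i)$ is a sum of $n$ i.i.d.\ Bernoulli random variables with parameter $\rho_I$, each centered indicator being bounded by $1$ in absolute value and having variance $\rho_I(1-\rho_I)\leq \rho_I$. Applying the classical Bernstein inequality to the centered sum $n_I-n\rho_I$, and then substituting the deviation $u=n\rho_I t$, gives
\begin{align*}
\PP{\abs{\tfrac{n_I}{n}-\rho_I}\geq \rho_I t}
\;\leq\; 2\exp\!\left(-\frac{(n\rho_I t)^2/2}{n\rho_I(1-\rho_I)+n\rho_I t/3}\right)
\;=\; 2\exp\!\left(-\frac{n\rho_I t^2}{2\bigl(1-\rho_I+t/3\bigr)}\right),
\end{align*}
and bounding $1-\rho_I\leq 1$ yields the first inequality in~\eqref{eq:41}.

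For the second inequality I would exploit the fact that $n_I/n\in[0,1]$ together with $\rho_I\in(0,1]$, so that $|n_I/n-\rho_I|\leq\max(\rho_I,1-\rho_I)$ almost surely. Consequently the event considered in~\eqref{eq:41} is empty (and the bound trivial) unless $\rho_I t\leq \max(\rho_I,1-\rho_I)$. I would split into two cases: when $\rho_I\geq 1/2$ this forces $t\leq 1$, hence $2(1+t/3)\leq 8/3=\tfrac{2}{3}\cdot 4\leq M_I$; when $\rho_I<1/2$ it forces $t\leq (1-\rho_I)/\rho_I$, and a direct computation gives $2(1+t/3)\leq (4\rho_I+2)/(3\rho_I)=\tfrac{2}{3}(1+2\rho_I)/\rho_I\leq M_I$. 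In either case $1/M_I\leq 1/[2(1+t/3)]$, so the second exponential bound follows from the first.

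No step looks particularly delicate: the only mild subtlety is keeping track of when the tail probability is automatically zero so that the deterioration from the sharper Bernstein denominator $2(1-\rho_I+t/3)$ to the uniform constant $M_I$ is legitimate. Everything else is a direct application of a standard concentration result to a Binomial$(n,\rho_I)$ variable.
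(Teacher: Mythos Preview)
Your proof is correct and essentially identical to the paper's: both apply Bernstein's inequality to the Bernoulli sum $n_I=\sum_i\mathbbm 1_I(X_i)$ to obtain the first bound, and then both use the deterministic range constraint $|n_I/n-\rho_I|\leq\max(\rho_I,1-\rho_I)$ to restrict $t\leq t^*=\max\{1,\,1/\rho_I-1\}$ and thereby replace $2(1+t/3)$ by the uniform constant $M_I=2(1+t^*/3)$. The only cosmetic difference is that the paper invokes the moment form of Bernstein (bounding $\mathbb E[\mathbbm 1_I(X_i)^m]\leq \tfrac12 m!\,\rho_I(1/3)^{m-2}$), whereas you use the variance form with $\rho_I(1-\rho_I)\leq\rho_I$; the resulting denominators coincide.
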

\begin{proof}
We apply the Bernstein inequality, see 
\cite[Cor. 2.11]{boluma13}, to the family of independent random variables
$\mathbbm 1_I(X_1),\ldots, \mathbbm 1_I(X_n)$, which satisfy
\begin{alignat*}{2}
  \mathbb E[\mathbbm 1_I(X_i)] & = \rho_I & \quad i=1,\ldots,n\\
  \sum_{i=1}^n \mathbb E[\mathbbm 1_I(X_i)^2] & = n \rho_I & \\
  \sum_{i=1}^n \mathbb E[\mathbbm 1_I(X_i)^m] & = n \rho_I \leq \frac{n \rho_I}{2}
  m! (\frac{1}{3})^{m-2} & \quad m\in\N,\, m\geq 3,
\end{alignat*}
then, 
\[
\PP{ \abs{ n_I - n\rho_I}\geq n\rho_I t} \leq 2 
\exp\left(-\frac{(n\rho_It)^2}{ 2(n\rho_I + n\rho_I t/3)}\right) = 2
   \exp\left(-\frac{n\rho_It^2}{2(1+t/3)}\right).
 \]
 Observing that
 \[ \abs{ n_I - n\rho_I}\leq n\max\set{\rho_I,1-\rho_I},\]
if $t>\max\set{1,1/\rho_I-1}=t^*$, then $\PP{ \abs{ n_I -
      n\rho_I}\geq n\rho_I t}=0$, whereas  if $t\leq \max\set{1,1/\rho_I-1}=t^*$,
it holds that
\[ 2(1+t/3)\leq 2(1+t^*/3)=M_I,\]
so that the second bound in~\eqref{eq:41} is clear. 
\end{proof}

The following lemma provides a concentration inequality of
$\sqrt{\frac{n_I}{n}}$.
\begin{lemma}
 Given a cell $I\in\MT$ with $\rho_I>0$, for all $t>0$
 \begin{align}
   \label{eq:41maurer}
   \PP{ \abs{ \sqrt{\frac{n_I}{n}} - \sqrt{\rho_I}}\geq t} \leq 2
   \exp\left(-\frac{n t^2}{2}\right).
 \end{align}
\end{lemma}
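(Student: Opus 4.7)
The plan is to reduce the concentration of $\sqrt{n_I/n}$ around $\sqrt{\rho_I}$ to concentration of the binomial variable $n_I=\sum_i \mathbbm 1_I(X_i)$ around its mean $n\rho_I$, via the elementary equivalences
\[
    \sqrt{n_I/n} \ge \sqrt{\rho_I} + t \iff n_I/n - \rho_I \ge 2t\sqrt{\rho_I} + t^2,
\]
\[
    \sqrt{n_I/n} \le \sqrt{\rho_I} - t \iff n_I/n - \rho_I \le -(2t\sqrt{\rho_I} - t^2),
\]
the latter event being empty unless $t \le \sqrt{\rho_I}$ (in which case the parenthesis is non-negative). A union bound over the two tails then produces the factor~$2$.

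For the upper tail, set $u = 2t\sqrt{\rho_I} + t^2$. The Bernstein inequality~\eqref{eq:41}, rewritten with the substitution $\rho_I t \mapsto u$, gives $\PP{n_I/n - \rho_I \ge u} \le \exp\bigl(-nu^2/(2(\rho_I + u/3))\bigr)$. A direct expansion yields $u^2 - t^2(\rho_I + u/3) = 3 t^2 \rho_I + \tfrac{10}{3} t^3 \sqrt{\rho_I} + \tfrac{2}{3} t^4 \ge 0$, so the exponent is at most $-nt^2/2$, as desired.

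For the lower tail, set $v = 2t\sqrt{\rho_I} - t^2 = t(2\sqrt{\rho_I} - t)$. Since $t \le \sqrt{\rho_I}$ one has $v \ge t\sqrt{\rho_I}$, hence $v^2 \ge t^2 \rho_I$. Here lies the main subtlety: Bernstein~\eqref{eq:41} is not quite sharp enough, since the additional $v/3$ in the denominator spoils the desired exponent as $t$ approaches $\sqrt{\rho_I}$. I would therefore invoke the tighter sub-Poisson (multiplicative Chernoff, or equivalently Maurer) bound for the lower tail of a binomial, namely
\[
    \PP{n_I/n - \rho_I \le -v} \le \exp\left(-\frac{nv^2}{2\rho_I}\right),
\]
which follows from the Chernoff method applied to $\mathbb E[e^{-\lambda n_I}] \le \exp(-n\rho_I(1 - e^{-\lambda}))$ and optimizing over $\lambda \ge 0$; equivalently it is Maurer's inequality applied to the non-negative variables $\mathbbm 1_I(X_i)$ with second moment $\rho_I$. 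Combined with $v^2 \ge t^2 \rho_I$ this gives $\exp(-nt^2/2)$, and replacing Bernstein by this sharper binomial lower-tail bound (exploiting non-negativity of the Bernoulli summands) is the only non-routine step in the argument.
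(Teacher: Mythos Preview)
Your proof is correct, but it takes a different route from the paper's. The paper applies its Proposition~\ref{maurer} directly: setting $f(y_1,\ldots,y_n)=\frac{1}{n}\sum_i y_i$ with $\xi_i=\mathbbm 1_I(X_i)$, one checks that $V_k=\frac{1}{n}\mathbbm 1_I(X_k)$, so $\alpha=1/n$, and $\sum_k V_k^2=\frac{1}{n}f$, so $\beta=1/n$. Since Proposition~\ref{maurer} already packages Maurer's self-bounding inequality (Theorem~13 of \cite{mau06}) together with the square-root transfer (via the algebra in its proof), both tails of $\sqrt{f}-\sqrt{\mathbb E f}$ come out in one stroke with the right constant.

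You instead work on the linear scale and convert by hand: the algebraic equivalences between $|\sqrt{n_I/n}-\sqrt{\rho_I}|\ge t$ and linear deviations of $n_I/n$ are clean, and your treatment of each tail is correct (in particular the check $u^2\ge t^2(\rho_I+u/3)$ for the upper tail, and the observation that Bernstein is insufficient for the lower tail so that the variance-free sub-Poisson/Maurer lower-tail bound $\exp(-nv^2/(2\rho_I))$ must be used). The upshot is the same constant $1/2$ in the exponent. Your argument is more elementary and self-contained, requiring only standard binomial tail bounds; the paper's is shorter once the abstract self-bounding machinery of Proposition~\ref{maurer} has been established, and has the advantage of treating both tails symmetrically without case analysis.
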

\begin{proof}
  We apply Proposition~\ref{maurer}  with $\mathcal Y=\set{0,1}$
  \[
\xi_i=\mathbbm 1_I(X_i)  \qquad
f(y_1,\ldots,y_n)=\frac{1}{n}\sum_{i=1}^n y_i,
\]
where $f$ is clearly bounded, and 
\[
f(\xi_1,\ldots,\xi_n) =\frac{n_I}{n} \qquad \EE{f(\xi_1,\ldots,\xi_n) } =\rho_I.
  \] 
Given $k=1,\ldots,n$, it holds that
\[
V_k(\xi_1,\ldots,\xi_n)=\frac{1}{n} \sup_{y\in\mathcal Y} \left(\mathbbm 1_I(X_i) -y\right)
= \frac{1}{n}  \mathbbm 1_I(X_i)\leq \frac{1}{n} ,\]
so that $\alpha=1/n$. Furthermore
\[
\sum_{k} V^2_k(\xi_1,\ldots,\xi_n) = \frac{1}{n^2} \sum_{k}  \mathbbm
1_I(X_i) = \frac{1}{n} f(\xi_1,\ldots,\xi_n),
 \]
then $\beta=1/n$ and Eq.~\eqref{eq:29}  implies~\eqref{eq:41maurer}. 
\end{proof}

The following lemma shows that, given a cell $I\in\MT$, $\wh{\eps}_I$
concentrates around $\eps_I$. 
\begin{lemma}
Given $I\in \MT$, for all $t>0$
\begin{align}
  \label{eq:64}
  \Prob\left[  \lvert \wh{\eps}_I- \eps_I\rvert >t \right] \leq 2 \ell
  \exp\left(- n \frac{t^2}{64\ell\diam{I}^2}\right).
\end{align}
where $\ell=1+\sharp \mathcal C(I)$.
\end{lemma}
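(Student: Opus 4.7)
My plan is to express both $\wh\eps_I$ and $\eps_I$ as Euclidean norms of vectors in the product space $\bigoplus_{J\in\mathcal C(I)} \R^D$. Setting
\[
\wh w = \bigoplus_{J\in\mathcal C(I)} \sqrt{n_J/n}\,(\wh c_J - \wh c_I), \qquad w = \bigoplus_{J\in\mathcal C(I)} \sqrt{\rho_J}\,(c_J - c_I),
\]
one has $\wh\eps_I = \|\wh w\|$ and $\eps_I = \|w\|$ by definition and by Eq.~\eqref{eq:22a}, so the reverse triangle inequality reduces the task to controlling $\|\wh w - w\|$. For each $J\in\mathcal C(I)$ I would perform the splitting
\[
\sqrt{n_J/n}(\wh c_J - \wh c_I) - \sqrt{\rho_J}(c_J - c_I) = \sqrt{n_J/n}(\wh c_J - c_J) - \sqrt{n_J/n}(\wh c_I - c_I) + (\sqrt{n_J/n} - \sqrt{\rho_J})(c_J - c_I),
\]
apply $\|a+b+c\|^2 \leq 3(\|a\|^2+\|b\|^2+\|c\|^2)$, and sum over $J$. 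Using $\sum_{J\in\mathcal C(I)} n_J = n_I$ and $\|c_J - c_I\|\leq \diam{I}$ from Lemma~\ref{lem:diam}, this yields
\[
\|\wh w - w\|^2 \leq 3\!\!\sum_{J\in\mathcal C(I)}\!\!\frac{n_J}{n}\|\wh c_J - c_J\|^2 + 3\,\frac{n_I}{n}\|\wh c_I - c_I\|^2 + 3\diam{I}^2\!\!\sum_{J\in\mathcal C(I)}\!(\sqrt{n_J/n} - \sqrt{\rho_J})^2,
\]
a sum of at most $2\ell$ non-negative random summands, each of which concentrates at zero.

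Next I would bound each summand separately and union bound. For any $K\in\mathcal C(I)\cup\{I\}$, conditioning on $\{n_K = k\}$ with $k\geq 1$ and applying the vector Hoeffding-type bound used in the proof of Prop.~\ref{prop:3} to the $k$ iid centered summands making up $\wh c_K - c_K$, each almost surely of norm at most $\diam{K}\leq \diam{I}$, gives $\PP{\|\wh c_K - c_K\| > u \mid n_K = k}\leq 2\exp(-ku^2/(4\diam{I}^2))$. Substituting $u = s\sqrt{n/k}$ makes $k$ drop out, hence unconditionally
\[
\PP{\sqrt{n_K/n}\,\|\wh c_K - c_K\| > s}\leq 2\exp\!\bigl(-ns^2/(4\diam{I}^2)\bigr),
\]
with the case $n_K = 0$ automatic since then $\sqrt{n_K/n} = 0$. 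For the last group of summands Eq.~\eqref{eq:41maurer} with $t = s/\diam{I}$ gives a tail of the same shape. Choosing a common deviation level $s^2 \asymp t^2/\ell$ so that the three contributions sum to at most $t^2$ and union bounding over the $2\ell$ bad events produces a bound of the form $2\ell\,\exp(-nt^2/(c\,\ell\,\diam{I}^2))$ for an absolute constant $c$, which is~\eqref{eq:64}.

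The main obstacle is the joint dependence between the random weight $\sqrt{n_K/n}$ and the random displacement $\wh c_K - c_K$. The key device is the conditioning on $n_K$ described above: the multiplicative factor $\sqrt{n_K/n}=\sqrt{k/n}$ exactly cancels the $\sqrt{k}$ produced by Hoeffding, so the resulting tail is uniform in $k$ and hence survives marginalization. That only $\diam{I}^2$ appears in the exponent is then forced by $J\subset I$, so that every centered random vector in the decomposition has norm at most $\diam{I}$.
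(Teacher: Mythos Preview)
Your proposal is correct and follows the same overall architecture as the paper---embed $\wh\eps_I$ and $\eps_I$ as Euclidean norms, apply the reverse triangle inequality, and control the center perturbations via the Pinelis/Hoeffding bound~\eqref{92b} and the weight perturbations via the Maurer inequality~\eqref{eq:41maurer}---but the two proofs differ in the intermediate decomposition.

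The paper works in $\R^{\ell-1}$ with \emph{scalar} coordinates $\sqrt{\rho_J}\,\|c_J-c_I\|$ and introduces an intermediate vector $\wh w(J)=\sqrt{\rho_J}\,\|\wh c_J-\wh c_I\|$ carrying the \emph{deterministic} weights $\sqrt{\rho_J}$ but the empirical centers; the two-term split $\|\wh v-v\|\le\|\wh w-v\|+\|\wh v-\wh w\|$ then cleanly separates ``center fluctuation at fixed weight'' from ``weight fluctuation at fixed (empirical) center'', and the first term is handled by the already-established bound~\eqref{eq:61} with the factor $\rho_J$ in the exponent. You instead keep the random weights $\sqrt{n_J/n}$ throughout and do a three-way split, and your key device---conditioning on $n_K=k$ so that the factor $\sqrt{k/n}$ exactly cancels the $\sqrt{k}$ from Hoeffding---yields directly
\[
\PP{\sqrt{n_K/n}\,\|\wh c_K-c_K\|>s}\le 2\exp\bigl(-ns^2/(4\diam{I}^2)\bigr)
\]
uniformly in $k$, bypassing the case analysis in the paper's derivation of~\eqref{eq:61}. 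Your route is slightly more self-contained and avoids the intermediate vector; the paper's route reuses~\eqref{eq:61} from Proposition~\ref{prop:3} and keeps the split to two terms. Both lead to a bound $2\ell\exp\bigl(-nt^2/(c\,\ell\,\diam{I}^2)\bigr)$ with an absolute constant~$c$, which is all that is used downstream.
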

\begin{proof}
Fix $I \in\MT$.  If $\rho_J=0$ for some $J\in\mathcal C(I)$, then almost surely
$X_i\notin J$ for all $i=1,\ldots,n$ and, hence,  $n_J=0$, so that  
both $\wh{\eps}_I$ and $\eps_I$ do not depend on the children
$J$. Hence,  without loss of generality, we can assume that $\rho_J>0$ for all
$J\in\mathcal C(I)$. 

Let $\ell=\sharp \mathcal C(I)+1$.  Set $L^2(\mathcal C(I))=\R^{\ell-1}$,
regarded as Euclidean vector space whose norm is denoted by
$\norm{\cdot}_2$.  Define
$v,\wh{v},\wh{w}\in L^2(\mathcal C(I))$ 
\begin{alignat*}{1}
  v(J)=& \sqrt{\rho_J} \norm{c_J-c_I}  \qquad 
  \wh{v}(J)= \sqrt{\frac{n_J}{n}} \norm{\wh{c}_J-\wh{c}_I} \qquad 
  \wh{w}(J)=  \sqrt{\rho_J} \norm{\wh{c}_J-\wh{c}_I}.
\end{alignat*}
Then 
\begin{alignat}{1}
  \lvert  \wh{\eps}_I- \eps_I\rvert & = \lvert  \norm{\wh{v}}_2-
  \norm{v}_2 \rvert \leq \norm{\wh{v}-v}_2  \leq
  \norm{\wh{w}-v}_2 + \norm{\wh{v}-\wh{w}}_2 . \label{eq:66}
\end{alignat}
We now bound the first term. Set $\mathcal I=\mathcal C(I) + \{I\}$
and $\sharp \mathcal  I=\ell$.   Then
\begin{alignat*}{1}
\norm{\wh{w}-v}_2^2  & = \sum_{J\in\mathcal C(I)} \rho_J
\left| \norm{\wh{c}_J-\wh{c}_I} -\norm{c_J-c_I}  \right|^2 
\leq 2 \sum_{J\in\mathcal C(I)} \rho_J
 \left( \norm{\wh{c}_J-c_J}^2 +\norm{ \wh{c}_I-c_I}^2  \right)  \\&
 \leq 2 \sum_{J\in \mathcal I} \rho_J
 \norm{\wh{c}_J-c_J}^2 
 \leq 2\ell \max_ {J\in \mathcal I} \rho_J \norm{\wh{c}_J-c_J}^2.
\end{alignat*}
Setting
\[ \Omega_I = \bigcup_{J\in \mathcal I}   \left\{   \norm{\wh{c}_J - c_J}>
  \frac{t}{\sqrt{2\ell\rho_J}}\right\},\]
bound~\eqref{eq:61} gives
\[
\Prob\left[ \Omega_I  \right] \leq 2 \ell \exp\left( -\frac{n
    t^2}{16 \ell \diam{I}^2 }\right),
\]
so that 
\begin{align}
  \label{eq:65}
\Prob\left[ \norm{\wh{w}-v}_2>t \right] \leq 2 \ell \exp\left( -\frac{n
    t^2}{16 \ell \diam{I}^2 }\right).
\end{align}
We now bound the second term in~\eqref{eq:66}. Set
\[ \Omega'_I = \bigcup_{J\in \mathcal C(I)}   \left\{   |
    \sqrt{\frac{n_J}{n}}   -\sqrt{\rho_J}|> 
  \frac{t}{\sqrt{\ell} \diam{I}} \right\},\]
bound~\eqref{eq:41} gives
\[
\Prob\left[ \Omega'_I  \right] \leq 2 \ell \exp\left( -\frac{n
    t^2}{2 \ell \diam{I}^2 }\right),
\]
On the complement on $\Omega_I'$,
\begin{alignat*}{1}
  \norm{\wh{w}-\wh{v} }^2_2 & = \sum_{J\in\mathcal C(I)} 
  \norm{\wh{c}_J-\wh{c}_I}^2 \left(\sqrt{\frac{n_J}{n} } -
    \sqrt{\rho_J}\right)^2 \\
& \leq  \ell  \diam{I}^2 \sup_{J\in \mathcal C(I)} \left|
    \sqrt{\frac{n_J}{n}  } -\sqrt{\rho_J} \right|^2  \leq t^2.
\end{alignat*}
Hence
\begin{align}
  \label{eq:67}
\Prob\left[   \norm{\wh{w}- \wh{v} }_2 >t \right] \leq 2 \ell \exp\left( -\frac{n
    t^2}{2\ell \diam{I}^2 }\right).
\end{align}
Inequality~\eqref{eq:66} with bounds~\eqref{eq:65} and~\eqref{eq:67}
implies~\eqref{eq:64}.
\end{proof}

The next proposition shows that $P_{\Lm}$ is stable under suitable
small perturbations of the partition~$\Lm$.
\begin{proposition}
For any $\eta>0$ 
\begin{subequations}
  \begin{alignat}{1}
    \label{eq:24}
    \Prob\left[ \left(\int\limits_\X \norm{P_{\Lm(\hat{\T}_\eta \cup
            \T_{2\eta})}(x) - P_{\Lm(\hat{\T}_\eta \cap
            \T_{\eta/2})}(x) }^2 d\rho(x)\right)>0 \right] & \lesssim
   (n^\gamma+ \eta^{-\frac{2}{2\sigma+1}}) \exp(-c_a n\eta^2) 
\end{alignat}
and
\begin{alignat}{1}
  \Prob\left[ \left(\int\limits_\X \norm{P_{\Lm(\hat{\T}_\eta \cup
            \T_{2\eta})}(x) - P_{\Lm(\hat{\T}_\eta)}(x) }^2
        d\rho(x)\right)>0 \right]   & \lesssim
    \eta^{-\frac{2}{2\sigma+1}} \exp(-c_a n\eta^2) \label{eq:49},
  \end{alignat}
  where
\begin{align}
  \label{eq:68}
  c_a= \frac{1}{128 (a+1)}.
\end{align}
\end{subequations}
\end{proposition}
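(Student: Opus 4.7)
The plan is to show that both integrands vanish \emph{deterministically} on a high-probability event governed by the concentration of $\hat\eps_I$ around $\eps_I$ given by~\eqref{eq:64}, and then control that event by a union bound over an appropriate collection of cells.

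For~\eqref{eq:49}, the two projections coincide whenever $\hat\T_\eta \cup \T_{2\eta} = \hat\T_\eta$, i.e.\ whenever $\T_{2\eta} \subset \hat\T_\eta$. I would establish this inclusion on the event
\[
\mathcal G_1 = \bigcap_{J \in \MT : \eps_J \geq 2\eta} \{\hat\eps_J \geq \eta\}.
\]
Indeed, if $I \in \T_{2\eta}$ then some $J \subset I$ has $\eps_J \geq 2\eta$; by Lemma~\ref{max_depth} $j_J \leq j_{2\eta} \lesssim \ln(2/\eta)$, so $j_J \leq j_n$ once $n$ is large enough (the threshold regime where $j_n < j_{2\eta}$ is absorbed into the constants in $\lesssim$). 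On $\mathcal G_1$, $J \in \hat\T_\eta$ and therefore $I \in \hat\T_\eta$. Applying~\eqref{eq:64} with $t=\eta$, together with $\diam{J}\leq 1$ and $\sharp \mathcal C(J)\leq a$, gives
\[
\Prob[\hat\eps_J < \eta] \leq \Prob[|\hat\eps_J - \eps_J| \geq \eta] \leq 2(a+1)\exp(-c_a n\eta^2).
\]
Since $\{J : \eps_J \geq 2\eta\} \subset \T_{2\eta}$ with $\sharp\T_{2\eta} \lesssim \eta^{-2/(2\sigma+1)}$ by~\eqref{eq:13bis}, a union bound yields~\eqref{eq:49}.

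For~\eqref{eq:24}, the partitions $\Lm(\hat\T_\eta \cup \T_{2\eta})$ and $\Lm(\hat\T_\eta \cap \T_{\eta/2})$ agree whenever $\hat\T_\eta$ is sandwiched as $\T_{2\eta} \subset \hat\T_\eta \subset \T_{\eta/2}$. The first inclusion is handled exactly as above and produces the $\eta^{-2/(2\sigma+1)}\exp(-c_a n\eta^2)$ contribution. For the second, I would note that $I \in \hat\T_\eta$ iff some $J \subset I$ with $j_J \leq j_n$ has $\hat\eps_J \geq \eta$, and if additionally $\eps_J \geq \eta/2$ then $J \in \T_{\eta/2}$ and hence $I \in \T_{\eta/2}$. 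The bad event is therefore
\[
\mathcal B_2 \;\subseteq\; \bigcup_{J \in \MT,\; j_J \leq j_n}\{|\hat\eps_J - \eps_J| > \eta/2\}.
\]
Applying~\eqref{eq:64} with $t=\eta/2$ gives each term $\lesssim \exp(-c_a n\eta^2)$, and the number of cells with $j_J \leq j_n$ is at most $\sum_{j=0}^{j_n} a^j \leq 2a^{j_n} \leq 2n^\gamma$ by~\eqref{eq:57}; summing gives the $n^\gamma\exp(-c_a n\eta^2)$ contribution. Combining yields~\eqref{eq:24}.

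The main obstacle is bookkeeping: verifying that the constants produced by~\eqref{eq:64} with $t=\eta$ and $t=\eta/2$ combine to give the claimed $c_a=1/(128(a+1))$ (with the $\diam{J}\leq 1$ and $\ell\leq a+1$ substitutions absorbing the denominators), and carefully checking the depth control $j_{2\eta}\leq j_n$ under~\eqref{eq:57}, with the edge case of small $n$ being trivialised by adjusting the implicit constant in $\lesssim$. Everything else is a clean union bound once the deterministic ``sandwich'' implications above are in place.
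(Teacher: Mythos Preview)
Your proposal is correct and follows essentially the same approach as the paper: reduce both statements to the sandwich $\T_{2\eta}\subset\hat\T_\eta\subset\T_{\eta/2}$, establish each inclusion via the concentration bound~\eqref{eq:64} for $|\hat\eps_J-\eps_J|$, and control the resulting union bounds using $\sharp\T_{2\eta}\lesssim\eta^{-2/(2\sigma+1)}$ and $\sharp\T^*_n\lesssim n^\gamma$ respectively. The paper organises the argument slightly differently---writing the failure of the sandwich as a union over $I\in\MT$ of the events $\{I\in\hat\T_\eta\wedge I\notin\T_{\eta/2}\}$ and $\{I\in\T_{2\eta}\wedge I\notin\hat\T_\eta\}$ before passing to the witnesses $J$---but the substance is identical. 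Your explicit invocation of Lemma~\ref{max_depth} to ensure $j_J\leq j_n$ for the $\T_{2\eta}\subset\hat\T_\eta$ direction is in fact more careful than the paper's ``by a similar argument'', which glosses over the depth truncation; your caveat about absorbing the small-$n$ regime into the constants is exactly the point that needs checking there.
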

\begin{proof}
Recalling the definition of $j_n$ given by~\eqref{eq:57}, set 
$\T^*_n=\bigcup\limits_{j\leq j_n}\Lm_j$, which is a subtree with
\begin{align}
  \label{eq:62}
  \sharp \T^*_n\leq \sum_{j=0}^{j_n} a^j = \frac{a^{j_n+1}-1}{ a-1}\lesssim n^{\gamma},
\end{align}
and, by construction, $\wh{T_\eta}\subset  \T^*_n$.
The probability of the event in the left hand side of~\eqref{eq:24} is
clearly bounded by the probability of the event
\begin{alignat*}{1}
 \set{\hat{\T}_\eta \cap \T_{\eta/2}  \subsetneq \hat{\T}_\eta \cup
      \T_{2\eta}} &  = \bigcup_{I\in\MT}
  \set{ I\in \hat{\T}_\eta \wedge  I\notin \T_{\eta/2}} \cup  \set{
    I\in \T_{2\eta } \wedge  I\notin \hat{\T_{\eta}}}.
\end{alignat*}
About the first term, we observe that
if $I\in
\hat{\T}_\eta\subset\T^*_n$, 
then there exist $k\geq 0$ and $J\in\mathcal C^k(I)\cap \T^*_n$ such 
that $\wh{\epsilon}_J\geq \eta$ and, since   $I\notin \T_{\eta/2}$ and 
$J\in\mathcal C^k(I)$, then $\epsilon_J<\frac{\eta}{2}$, so that
\[ \bigcup_{I\in\T^*_n}
  \set{ I\in \hat{\T}_\eta  \wedge I\notin \T_{\eta/2}} \subset
  \bigcup_{J\in\T^*_n}\set{ \wh{\epsilon}_J\geq \eta\wedge 
   \epsilon_J<\frac{\eta}{2} } \subset 
\bigcup_{J\in\T^*_n}\set{ |\wh{ \epsilon}_J -\epsilon_J| >\frac{\eta}{2} } .
\]
By union bound and~\eqref{eq:64} with $t=\eta/2$,
$\diam{I}\leq 1$  and $\ell\leq a+1$ give
\begin{alignat}{1}
  \Prob\left[ \bigcup_{I\in\T^*_n} \set{\wh{\epsilon}_I\geq \eta\wedge
   \epsilon_I<\frac{\eta}{2}  }\right] &\lesssim  \sharp \T^*_n\,
\exp\left(   -c_a n \eta^2\right)\nonumber\\ 
&\lesssim n^\gamma \exp(-c_a n\eta^2) \label{eq:47},
\end{alignat}
where the second inequality is a consequence of~\eqref{eq:62} and
$c_a$ is given by~\eqref{eq:68}.

By a  similar argument
\[
\set{
    I\in \T_{2\eta } \wedge  I\notin \hat{\T_{\eta}}} \subset 
 \bigcup_{J\in\T_{2\eta} } \set{ |\wh{ \epsilon}_J -\epsilon_J| >\eta }.
\]
By  union bound and~\eqref{eq:64} with $t=\eta$ and
$\diam{I}\leq 1$ give 
\begin{alignat}{1}
  \Prob\left[ \bigcup_{I\in\T_{2\eta}} \set{ \epsilon_I\geq 2\eta\wedge
   \wh{\epsilon}_I<\eta }\right] &\lesssim \sharp {\T_{2\eta}} \,
\exp\left(-c_a n \eta^2 \right)  \lesssim \eta^{-\frac{2}{2\sigma+1}} \exp(-c_a n\eta^2) 
,\label{eq:48}
\end{alignat}
where the second inequality is a consequence of~\eqref{eq:13bis}.

By~\eqref{eq:48} and~\eqref{eq:47}, we get~\eqref{eq:24}. The proof of
\eqref{eq:49} can be deduced reasoning as for~\eqref{eq:48}.
\end{proof}

\section{Auxialiry  results}
 We recall the following probabilistic
inequality based on a result of \cite{pin94,pin99},  see also
\cite[Thm. 3.3.4]{yu} and \cite{pinsak85} for concentration
inequalities for Hilbert-space-valued random variables.
\begin{proposition}\label{pine}
Let $\xi_1,\ldots,\xi_n$ be a family  of independent  zero-mean random
variables taking values in a real separable Hilbert space 
and satisfying 
\begin{align}
  \label{eq:60}
  {\mathbb E}[\norm{\xi_i }^m]\leq
\frac{1}{2} m! \Sigma^2 M^{m-2}\qquad \forall m\geq 2,
\end{align}
where $\Sigma$ and $M$ are two positive constants. Then, for all $n \in\N$
and $t>0$ 
\begin{align}
  \label{92}
   \PP{\norm{\frac{1}{n} \sum_{i=1}^n
\xi_i}\geq t  }\leq  2 \exp\left(-\frac{nt^2
    }{\Sigma^2+M t  +\Sigma\sqrt{\Sigma^2+2M
      t  }} \right)=2
\exp\left(-n\frac{\Sigma^2}{M^2}g(\frac{M t }{\Sigma^2}) \right)
\end{align}
where $g(t) =\frac{t^2}{1+t+\sqrt{1+2t}}$.  In particular, if $\xi_i$
are bounded by $M$  with probability 1, then 
\begin{align}
  \label{92b}
   \PP{\norm{\frac{1}{n} \sum_{i=1}^n
\xi_i}\geq  t  }\leq  2  \exp( - \frac{ nt^2}{4M^2 }).
\end{align}
\end{proposition}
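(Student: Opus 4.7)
The plan is a Chernoff argument: bound the Laplace transform of $\norm{S_n}$ (where $S_n=\sum_{i=1}^n \xi_i$) by a product of scalar moment-generating functions, apply Markov's inequality in the form involving $\cosh$, and then optimize. The result in~\eqref{92b} is then a specialization of~\eqref{92} to the bounded case by verifying that $\norm{\xi_i}\leq M$ a.s. forces the moment hypothesis with $\Sigma=M$.

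\textbf{Step 1 (Key Hilbert-space comparison).} I would first establish the inequality
\[
\EE{\cosh(\lambda \norm{S_n})} \leq \prod_{i=1}^n \EE{\cosh(\lambda \norm{\xi_i})}\qquad \lambda\ge 0.
\]
By standard symmetrization one may reduce to the case of symmetric $\xi_i$, losing only a harmless constant. For symmetric increments I would expand $\cosh$ as a power series and use the Hilbertian identity $\norm{S_k+\xi_{k+1}}^2 = \norm{S_k}^2 + 2\scal{S_k}{\xi_{k+1}} + \norm{\xi_{k+1}}^2$ to prove that, conditionally on $S_k$, all the odd cross moments of $\scal{S_k}{\xi_{k+1}}$ vanish. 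This makes the normalized process $Z_k = \cosh(\lambda\norm{S_k}) / \prod_{i\leq k} \EE{\cosh(\lambda\norm{\xi_i})}$ a supermartingale, and the inequality follows from $\EE{Z_n}\leq 1$.

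\textbf{Step 2 (MGF bound from moment hypothesis).} Next, I would control each factor $\EE{\cosh(\lambda \norm{\xi_i})}\leq \EE{\exp(\lambda \norm{\xi_i})}$ using the moment bound~\eqref{eq:60}. Expanding,
\[
\EE{\exp(\lambda \norm{\xi_i})} \leq 1 + \sum_{m\geq 2} \frac{\lambda^m}{m!}\EE{\norm{\xi_i}^m} \leq 1 + \frac{\lambda^2\Sigma^2/2}{1-\lambda M} \leq \exp\!\left(\frac{\lambda^2\Sigma^2/2}{1-\lambda M}\right)
\]
for all $0<\lambda<1/M$, using the geometric series $\sum_{m\geq 2}(\lambda M)^{m-2}=(1-\lambda M)^{-1}$.

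\textbf{Step 3 (Chernoff and optimization).} Combining Steps 1 and 2 and using $\cosh(u)\geq \tfrac12 e^u$ for $u\ge 0$, Markov gives
\[
\PP{\norm{S_n}\ge nt} \leq 2\, e^{-\lambda nt}\EE{\cosh(\lambda\norm{S_n})} \leq 2\exp\!\left(-\lambda nt + \frac{n\lambda^2\Sigma^2/2}{1-\lambda M}\right).
\]
Minimizing the right-hand side over $\lambda\in(0,1/M)$ is a routine calculus exercise; solving the stationarity condition yields $\lambda^*=t/(\Sigma^2 + Mt + \Sigma\sqrt{\Sigma^2+2Mt})$ and substituting produces exactly the Bennett--Bernstein exponent in~\eqref{92}. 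Dividing by $n$ inside the probability recovers the form involving $\tfrac1n\sum \xi_i$.

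\textbf{Step 4 (Bounded case \eqref{92b}).} If $\norm{\xi_i}\leq M$ almost surely, then
\[
\EE{\norm{\xi_i}^m} \leq M^{m-2}\EE{\norm{\xi_i}^2} \leq M^m \leq \tfrac12 m!\, M^2\, M^{m-2}\qquad m\geq 2,
\]
so hypothesis~\eqref{eq:60} holds with $\Sigma=M$. Plugging $\Sigma=M$ into~\eqref{92} and bounding the denominator by $M^2+Mt+M\sqrt{M^2+2Mt}\leq 4\max(M^2,Mt)\leq 4M^2$ when $t\leq M$ (and handling $t>M$ separately, where the statement is trivial since $\norm{\frac1n\sum\xi_i}\leq M$) recovers~\eqref{92b}.

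The main obstacle is clearly Step 1: the Hilbertian comparison principle. Without it, Steps 2--4 are a completely standard scalar Bernstein computation. It is precisely the polarization identity $\norm{x+y}^2=\norm{x}^2+2\scal{x}{y}+\norm{y}^2$ (i.e.\ the 2-smoothness of the Hilbert norm) that allows one to match the scalar constants exactly; in a general Banach space one can only do this up to a factor tied to the type-$2$ constant, which is why the statement is specifically formulated for separable Hilbert spaces.
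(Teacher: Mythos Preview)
The paper does not prove~\eqref{92} at all: it simply cites Pinelis \cite{pin94,pin99} for that inequality, and then spends three lines deriving~\eqref{92b} from it. Your Step~4 is essentially identical to those three lines: verify that $\norm{\xi_i}\le M$ a.s.\ implies~\eqref{eq:60} with $\Sigma=M$, observe that then the exponent is $-n\,g(t/M)$, use $g(u)\ge u^2/4$ for $u\le 1$ (equivalently, your bound $M^2+Mt+M\sqrt{M^2+2Mt}\le 4M^2$ for $t\le M$), and dispose of $t>M$ trivially. So on the part the paper actually argues, you agree.

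Where you go further than the paper is in sketching a proof of~\eqref{92} itself, and there are two genuine gaps. First, in Step~2 you write
\[
\EE{\exp(\lambda\norm{\xi_i})}\le 1+\sum_{m\ge 2}\frac{\lambda^m}{m!}\EE{\norm{\xi_i}^m},
\]
silently discarding the $m=1$ term $\lambda\,\EE{\norm{\xi_i}}$. But $\xi_i$ having mean zero does not make $\EE{\norm{\xi_i}}$ vanish, so this inequality is false as stated. The fix is to bound $\EE{\cosh(\lambda\norm{\xi_i})}$ directly (its Taylor series has only even powers, so the linear term never appears), rather than passing through $\exp$. Second, in Step~1 the phrase ``reduce to symmetric $\xi_i$, losing only a harmless constant'' is exactly what one cannot afford here: symmetrization would alter the constants in the exponent, and~\eqref{92} claims the sharp Bennett--Bernstein form. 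Pinelis's actual argument avoids symmetrization entirely, proving directly that $k\mapsto\cosh(\lambda\norm{S_k})/\prod_{i\le k}\EE{\cosh(\lambda\norm{\xi_i})}$ is a supermartingale for zero-mean increments in a $(2,1)$-smooth space, using only the Hilbertian inequality $\norm{x+y}^2\le\norm{x}^2+2\scal{x}{y}+\norm{y}^2$ (with equality). Your instinct about the role of $2$-smoothness is correct, but the route through symmetrization will not recover the stated constants.
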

\begin{proof}
Bound\eqref{92} is given in \cite{pin94} with a wrong factor, see
\cite{pin99}. To show\eqref{92b}, note that~\eqref{eq:60} is satisfied
with $\Sigma=M$.  Furthermore, for all $t\leq 1$, $g(t) \geq t^2/4$, so that if
$t\leq M$,
\[\PP{\norm{\frac{1}{n} \sum_{i=1}^n
\xi_i}\geq  t  }\leq  2  \exp( - \frac{ nt^2}{4M^2 })\]
whereas, if $t>M$,~\eqref{92b} is trivially satisfied. 
\end{proof}

The following concentration inequality is based on \cite{mau06} and we adapt
the proof of Theorem~10 in \cite{MaurerP09}. We
introduce the following notation. Given a family $\xi_1,\ldots \xi_n$  of
independent random variables taking value in some measurable space
$\mathcal Y$ and a measurable positive bounded function $f:\mathcal Y^n\to \R$,  for
any $k=1,\ldots,n$ set
  \begin{alignat*}{1}
    V_k & =f(\xi_1,\ldots,\xi_n) - \inf_{y\in\mathcal Y} f(\xi_1,\ldots,\xi_{k-1},y,
    \xi_{k+1},\ldots,\xi_n) \\
        &  = \sup_{y\in\mathcal Y} \Big(f(\xi_1,\ldots,\xi_n) - f(\xi_1,\ldots,\xi_{k-1},y,
    \xi_{k+1},\ldots,\xi_n\Big).
    \end{alignat*}
\begin{proposition}\label{maurer}
With the above notation,  if there exist two constants
$\alpha,\beta> 0$ such that 
\begin{subequations}
  \begin{alignat}{1}
    \max_{k=1,\ldots,n}  V_k &\leq \alpha \label{eq:30} \\
    \label{eq:27}
    \sum_{k=1}^n V_k^2 & \leq \beta f(\xi_1,\ldots,\xi_n)
  \end{alignat}
\end{subequations}
then, for any $t>0$
\begin{align}
  \label{eq:29}
  \Prob\left[ \abs{\sqrt{f(\xi_1,\ldots,\xi_n)} - \sqrt{\mathbb E[f(\xi_1,\ldots,\xi_n)]} }>t\right]
\leq 2 \exp( -\frac{t^2}{2\max\set{\alpha,\beta}}).
\end{align}
\end{proposition}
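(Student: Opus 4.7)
The plan is to reduce the concentration of $\sqrt{f}$ to a sub-Gaussian inequality for bounded-differences functions, adapting the strategy of \cite[Theorem~10]{MaurerP09}. Setting $g = \sqrt{f(\xi_1,\ldots,\xi_n)}$ and noting that $f \geq 0$ forces $\inf_y f(\ldots,y,\ldots) \geq 0$, so $V_k(f) \leq f$ and $\sqrt{f - V_k(f)} = \inf_y \sqrt{f(\ldots,y,\ldots)}$ is well defined. Two elementary inequalities for $a \geq b \geq 0$ (with $a > 0$), namely $(\sqrt{a}-\sqrt{b})^2 \leq a-b$ and $\sqrt{a}-\sqrt{b} = (a-b)/(\sqrt{a}+\sqrt{b}) \leq (a-b)/\sqrt{a}$, applied with $a = f$ and $b = f - V_k(f)$, give
\[
V_k(g) \leq \sqrt{V_k(f)} \leq \sqrt{\alpha}, \qquad V_k(g)^2 \leq V_k(f)^2/f,
\]
the latter summing via~\eqref{eq:27} to $\sum_k V_k(g)^2 \leq \beta$ almost surely (the degenerate case $f=0$ being trivial, since then all $V_k(f) = 0$).

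Next, I would apply a sub-Gaussian concentration inequality for functions $g$ whose positive differences satisfy both a uniform bound $V_k(g) \leq A$ and a uniform sum-of-squares bound $\sum_k V_k(g)^2 \leq B$; the entropy-method / Herbst-type argument of \cite{mau06} then yields
\[
\Prob\!\left[|g - \E g| > t\right] \leq 2\exp\!\left(-\frac{t^2}{2\max(A^2,B)}\right).
\]
With $A = \sqrt{\alpha}$ and $B = \beta$, this produces the desired tail bound, but centred at $\E\sqrt{f}$ rather than $\sqrt{\E f}$.

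It then remains to replace $\E g$ by $\sqrt{\E f}$. Jensen gives $\E g \leq \sqrt{\E f}$, so the upper tail event $\{\sqrt{f} - \sqrt{\E f} > t\}$ is contained in $\{g - \E g > t\}$ and its bound is immediate. For the lower tail, the Efron--Stein inequality combined with Step~1 gives $\operatorname{Var}(g) \leq \sum_k \E[V_k(g)^2] \leq \beta$, so the residual shift $\sqrt{\E f} - \E g \leq \sqrt{\operatorname{Var}(g)} \leq \sqrt{\beta}$ can be absorbed into the constants of the concentration bound to yield~\eqref{eq:29}.

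The main obstacle is the sub-Gaussian concentration in the second step: extracting a variance factor of $\max(\alpha,\beta)$, rather than the dimension-dependent $n\alpha$ that a naive McDiarmid bound on the uniform differences $V_k(g) \leq \sqrt{\alpha}$ would yield, relies on the refined entropy-method argument of \cite{mau06}, since only $\sum_k V_k(g)^2$ (not the individual $V_k(g)$) is controlled by a quantity independent of $n$. The bookkeeping for the Jensen gap in the last step is comparatively routine but must be carried out carefully to preserve the stated constant.
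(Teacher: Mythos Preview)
Your route differs substantially from the paper's, and it has a real gap in the final step.

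\textbf{What the paper does.} The paper never passes to $g=\sqrt f$. It rescales $Z=f/\alpha$, $Z_k=V_k/\alpha$, so that $Z_k\le 1$ and $\sum_k Z_k^2\le \gamma Z$ with $\gamma=\max\{\beta/\alpha,1\}$, and applies Theorem~13 of \cite{mau06} \emph{to $Z$ itself}, obtaining Bernstein-type upper and lower tails for $Z$ around $\E Z$ with variance proxy $\gamma\,\E Z$. It then converts these into sub-Gaussian tails for $\sqrt Z$ around $\sqrt{\E Z}$ by a direct change of variable: in the lower tail one substitutes $t\mapsto 2t\sqrt{\E Z}$ and completes the square $(\sqrt{\E Z}-t)^2$; in the upper tail one sets $t^2/(2\E Z+t)=2\tau^2$ and uses $\tau^2+\sqrt{\tau^4+4\tau^2\E Z}\le (\sqrt{\E Z}+2\tau)^2-\E Z$. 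The $\E Z$ in numerator and denominator cancel exactly, and the centring at $\sqrt{\E f}$ comes out for free with the stated constant. No Jensen gap ever appears.

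\textbf{Where your argument breaks.} Your Step~3 is the problem. Granting Step~2, you have sub-Gaussian tails for $g$ around $\E g$, and you correctly bound $\sqrt{\E f}-\E g\le\sqrt{\beta}$. But an additive shift of order $\sqrt\beta$ in a sub-Gaussian tail does \emph{not} absorb into the variance constant: from $\Prob[\E g-g>s]\le e^{-s^2/(2M)}$ with $M=\max\{\alpha,\beta\}$ you only get
\[
\Prob\bigl[\sqrt{\E f}-g>t\bigr]\le \exp\!\Bigl(-\tfrac{(t-\sqrt\beta)^2}{2M}\Bigr),\qquad t>\sqrt\beta,
\]
and the ratio $\exp\!\bigl((t^2-(t-\sqrt\beta)^2)/(2M)\bigr)=\exp\!\bigl((2t\sqrt\beta-\beta)/(2M)\bigr)$ is unbounded in $t$, so no fixed constant recovers $e^{-t^2/(2M)}$. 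At best you would land at $e^{-t^2/(cM)}$ for some $c>2$, not the claimed $c=2$. This is not ``routine bookkeeping.''

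A secondary concern is your Step~2: the entropy method with one-sided differences $V_k(g)$ and the constant bound $\sum_k V_k(g)^2\le\beta$ yields the sub-Gaussian \emph{upper} tail for $g$ directly, but the \emph{lower} tail requires either the opposite-side differences or a self-bounding structure $\sum_k V_k^2\lesssim g$. Theorem~13 of \cite{mau06}, which you cite, is of the latter type and applies to $f$ (this is precisely how the paper uses it), not obviously to $g$ with a constant right-hand side. You would need to identify a specific result covering the $(0,\beta)$-self-bounding case for the lower tail, or else control $\sup_y g(\ldots,y,\ldots)-g$, which you have not done.

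In short: the paper's algebraic passage from $Z$ to $\sqrt Z$ is what makes the centring at $\sqrt{\E f}$ come out with the exact constant; your detour through $\E\sqrt f$ necessarily loses it.
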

\begin{proof}
Set $Z_k=V_k/\alpha$ and $Z=f(\xi_1,\ldots,\xi_n)/\alpha $. By
construction
\begin{alignat*}{1}
   Z_k  & \leq 1 \qquad k=1,\ldots,n\\
  \sum_{k=1}^n Z_k^2 & \leq \frac{\beta}{\alpha}  Z.
\end{alignat*}
Let $\gamma=\max\set{\beta/\alpha,1}$. Theorem~13 of \cite{mau06} gives that
\begin{alignat}{1}
   \Prob\left[  \EE{Z} - Z>t\right]& \leq
   \exp(-\frac{t^2}{2 \gamma \mathbb E[Z]} )\label{eq:30a}\\
\Prob\left[  Z -\EE{Z}>t\right]& \leq
   \exp(-\frac{t^2}{2\gamma\mathbb E[Z]+\gamma t} )\label{eq:30b}.
\end{alignat}
By replacing $t$ with $2t \sqrt{\mathbb E[Z]}$ in~\eqref{eq:30a} 
\begin{alignat*}{1}
  \exp(-\frac{2t^2}{\gamma}) & \geq  \Prob\left[  \EE{Z} -2t
    \sqrt{\mathbb E[Z]} +t^2>Z+t^2\right]  
  = \Prob\left[ 
    \abs{\sqrt{\mathbb E[Z]} -t} >\sqrt{Z+t^2}\right] \\&
  \geq \Prob\left[ 
    \sqrt{\mathbb E[Z]}  -\sqrt{Z}>2t\right] 
\end{alignat*}
since
\[
\sqrt{\mathbb E[Z]}  -t \leq \abs{\sqrt{\mathbb E[Z]} -t} \leq
\sqrt{Z+t^2}\leq \sqrt{Z}+t 
\]
provided that  $\abs{ \sqrt{\mathbb E[Z]} -t} \leq \sqrt{Z+t^2} $. Hence
\begin{align}
  \label{eq:31a}
  \Prob\left[ 
    \sqrt{\mathbb E[Z]}  -\sqrt{Z}>t\right] \leq \exp(-\frac{t^2}{2\gamma}) .
\end{align}
Setting $\frac{t^2}{2\mathbb E[Z]+ t}=2 \tau^2$, bound~\eqref{eq:30b} gives
\begin{alignat*}{1}
  \exp(-\frac{2\tau^2}{\gamma}) & \geq  \Prob\left[  Z- \mathbb E[Z]>
    \tau^2+\sqrt{\tau^4+4\tau^2\mathbb E[Z]}\right]  
  \geq \Prob\left[  Z- \mathbb E[Z]
    >4\tau^2 + 4 \tau \sqrt{\mathbb E[Z]}\right] \\
& = \Prob\left[ Z > \left( \sqrt{\mathbb E[Z]}+ 2\tau \right)^2\right] 
=  \Prob\left[ 
    \sqrt{Z}-\sqrt{\mathbb E[Z]}  >2\tau\right] ,
\end{alignat*}
so that, setting $\tau=t/2$,
\begin{align}
  \label{eq:31b}
  \Prob\left[ 
     \sqrt{Z}-\sqrt{\mathbb E[Z]} >t\right] \leq
   \exp(-\frac{t^2}{2\gamma}) .
\end{align}
Bounds~\eqref{eq:31a} and \eqref{eq:31b}  imply that
\[ 
\Prob\left[  \abs{\sqrt{Z}-\sqrt{\mathbb E[Z]} }>t\right] \leq 2
   \exp(-\frac{t^2}{2\gamma}) ,
\]
and, by replacing  $t$ with $t/\sqrt{\alpha}$,
\[
\Prob\left[ 
     \abs{\sqrt{ f(\xi_1,\ldots,\xi_n) }-\sqrt{\mathbb E[f(\xi_1,\ldots,\xi_n) ]}
     }>t\right] \leq 2
   \exp(-\frac{t^2}{2\alpha\gamma}) .
\]
where $\alpha \gamma= \alpha \max\set{\beta/\alpha,1}=
\max\set{\beta,\alpha}$. 
\end{proof}

\bibliography{biblio}

\begin{thebibliography}{10}

\bibitem{allard2012multi}
William~K Allard, Guangliang Chen, and Mauro Maggioni.
\newblock Multi-scale geometric methods for data sets {II}: Geometric
  multi-resolutione analysis.
\newblock {\em Appl. Comput. Harmon. Anal.}, 32(3):435--462, 2012.

\bibitem{arthur2007k}
David Arthur and Sergei Vassilvitskii.
\newblock {k-means++}: The advantages of careful seeding.
\newblock In {\em Proceedings of the eighteenth annual {ACM-SIAM} symposium on
  Discrete algorithms}, pages 1027--1035. Society for Industrial and Applied
  Mathematics, 2007.

\bibitem{belkin2001laplacian}
Mikhail Belkin and Partha Niyogi.
\newblock Laplacian eigenmaps and spectral techniques for embedding and
  clustering.
\newblock In {\em Advances in {N}eural {I}nformation {P}rocessing {S}ystems},
  volume~14, pages 585--591, 2001.

\bibitem{binev2005universal}
Peter Binev, Albert Cohen, Wolfgang Dahmen, Ronald DeVore, and Vladimir
  Temlyakov.
\newblock Universal algorithms for learning theory part {I}: piecewise constant
  functions.
\newblock {\em J. Mach. Learn. Res.}, 6(Sep):1297--1321, 2005.

\bibitem{boluma13}
St{\'e}phane Boucheron, G{\'a}bor Lugosi, and Pascal Massart.
\newblock {\em Concentration inequalities}.
\newblock Oxford University Press, Oxford, 2013.

\bibitem{canas2012learning}
Guillermo Canas, Tomaso Poggio, and Lorenzo Rosasco.
\newblock Learning manifolds with k-means and k-flats.
\newblock In {\em Advances in {N}eural {I}nformation {P}rocessing {S}ystems},
  pages 2465--2473, 2012.

\bibitem{christ}
Michael Christ.
\newblock A {$T(b)$} theorem with remarks on analytic capacity and the {C}auchy
  integral.
\newblock {\em Colloq. Math.}, 60/61(2):601--628, 1990.

\bibitem{donoho}
David~L. Donoho and Carrie Grimes.
\newblock Hessian eigenmaps: Locally linear embedding techniques for
  high-dimensional data.
\newblock {\em Proc. Natl. Acad. Sci. USA}, 100(10):5591--5596, 2003.

\bibitem{gile17}
Giacomo Gigante and Paul Leopardi.
\newblock Diameter bounded equal measure partitions of {A}hlfors regular metric
  measure spaces.
\newblock {\em Discrete Comput. Geom.}, 57(2):419--430, 2017.

\bibitem{gromov}
Misha Gromov.
\newblock {\em Metric structures for {R}iemannian and non-{R}iemannian spaces}.
\newblock Birkh\"{a}user Boston, Inc., Boston, MA, english edition, 2007.

\bibitem{gruber2004optimum}
Peter~M Gruber.
\newblock Optimum quantization and its applications.
\newblock {\em Adv. Math.}, 186(2):456--497, 2004.

\bibitem{hastie}
Trevor Hastie, Robert Tibshirani, and Jerome Friedman.
\newblock {\em The Elements of Statistical Learning}.
\newblock Springer New York Inc., 2001.

\bibitem{hastie2013introduction}
Gareth James, Daniela Witten, Trevor Hastie, and Robert Tibshirani.
\newblock {\em An introduction to statistical learning}, volume 112.
\newblock Springer, 2013.

\bibitem{lima16}
Wenjing Liao and Mauro Maggioni.
\newblock Adaptive geometric multiscale approximations for intrinsically
  low-dimensional data.
\newblock {\em J. Mach. Learn. Res.}, 20(98):1--83, 2019.

\bibitem{lloyd1982least}
Stuart Lloyd.
\newblock Least squares quantization in {PCM}.
\newblock {\em IEEE Trans. Inform. Theory}, 28(2):129--137, 1982.

\bibitem{maggioni2014dictionary}
Mauro Maggioni, Stanislav Minsker, and Nate Strawn.
\newblock Dictionary learning and non-asymptotic bounds for geometric
  multi-resolution analysis.
\newblock {\em PAMM. Proc. Appl. Math. Mech.}, 14(1):1013--1016, 2014.

\bibitem{mallat}
Stphane Mallat.
\newblock {\em A Wavelet Tour of Signal Processing, Third Edition: The Sparse
  Way}.
\newblock Academic Press, Inc., 3rd edition, 2008.

\bibitem{mau06}
Andreas Maurer.
\newblock Concentration inequalities for functions of independent variables.
\newblock {\em Random Structures Algorithms}, 29(2):121--138, 2006.

\bibitem{MaurerP09}
Andreas Maurer and Massimiliano Pontil.
\newblock Empirical {B}ernstein bounds and sample-variance penalization.
\newblock In {\em COLT}, 2009.

\bibitem{nadler2006diffusion}
Boaz Nadler, St{\'e}phane Lafon, Ronald~R Coifman, and Ioannis~G Kevrekidis.
\newblock Diffusion maps, spectral clustering and reaction coordinates of
  dynamical systems.
\newblock {\em Appl. Comput. Harmon. Anal.}, 21(1):113--127, 2006.

\bibitem{olshausen}
Bruno Olshausen and David Field.
\newblock Emergence of simple-cell receptive field properties by learning a
  sparse code for natural images.
\newblock {\em Nature}, 381(607):6583, 1996.

\bibitem{petersen}
Peter Petersen.
\newblock {\em Riemannian geometry}.
\newblock Springer, Cham, third edition, 2016.

\bibitem{pinsak85}
I.~F. Pinelis and A.~I. Sakhanenko.
\newblock Remarks on inequalities for probabilities of large deviations.
\newblock {\em Theory Probab. Appl.}, 30(1):143--148, 1985.

\bibitem{pin94}
Iosif Pinelis.
\newblock Optimum bounds for the distributions of martingales in {B}anach
  spaces.
\newblock {\em Ann. Probab.}, 22(4):1679--1706, 1994.

\bibitem{pin99}
Iosif Pinelis.
\newblock Correction: ``{O}ptimum bounds for the distributions of martingales
  in {B}anach spaces'' [{A}nn.\ {P}robab.\ {\bf 22} (1994), no. 4, 1679--1706;
  {MR}1331198 (96b:60010)].
\newblock {\em Ann. Probab.}, 27(4):2119, 1999.

\bibitem{kpca}
Bernhard Sch\"{o}lkopf, Alexander~J. Smola, and Klaus-Robert M\"{u}ller.
\newblock Advances in kernel methods.
\newblock In Bernhard Sch\"{o}lkopf, Christopher J.~C. Burges, and Alexander~J.
  Smola, editors, {\em International conference on artificial neural networks},
  chapter Kernel Principal Component Analysis, pages 327--352. MIT Press,
  Cambridge, MA, USA, 1999.

\bibitem{schwartz}
Laurent Schwartz.
\newblock {\em Cours d'analyse. 3}.
\newblock Hermann, Paris, second edition, 1993.

\bibitem{scott2006minimax}
Clayton Scott, Robert~D Nowak, et~al.
\newblock Minimax-optimal classification with dyadic decision trees.
\newblock {\em IEEE Trans. Inform. Theory}, 52(4):1335--1353, 2006.

\bibitem{tenenbaum}
Josh Tenenbaum, Vin De~Silva, and John Langford.
\newblock A global geometric framework for nonlinear dimensionality reduction.
\newblock {\em Science}, 290(5500):2319--2323, 2000.

\bibitem{ward1963hierarchical}
Joe~H Ward~Jr.
\newblock Hierarchical grouping to optimize an objective function.
\newblock {\em J. Amer. Statist. Assoc.}, 58(301):236--244, 1963.

\bibitem{yu}
V.~Yurinsky.
\newblock {\em Sums and {G}aussian vectors}, volume 1617.
\newblock Springer-Verlag, Berlin, 1995.

\end{thebibliography}
\bibliographystyle{plain}    

\end{document}